\documentclass{article}

\usepackage{PRIMEarxiv}

\usepackage[utf8]{inputenc} 
\usepackage[T1]{fontenc}    
\usepackage{hyperref}       
\usepackage{url}            
\usepackage{booktabs}       
\usepackage{amsfonts}       
\usepackage{nicefrac}       
\usepackage{microtype}      
\usepackage{lipsum}
\usepackage{fancyhdr}       
\usepackage{graphicx}       
\graphicspath{{media/}}     

\usepackage{natbib}
 \bibpunct[, ]{(}{)}{,}{a}{}{,}%

\usepackage[parfill]{parskip}
\setlength{\parskip}{0.01ex}
\usepackage{multirow}
\usepackage[noend, ruled]{algorithm2e}
\usepackage{romannum}
\usepackage{placeins}
\usepackage{csquotes}
\usepackage{paralist}
\usepackage{xcolor}
\usepackage[onehalfspacing]{setspace}
\usepackage{appendix}

\RequirePackage{amsmath,amssymb,ifthen,array,theorem}
\def\argmin{\mathop{\rm arg\,min}}%

\newtheorem{theorem}{Theorem}[section]

\newtheorem{claim}{Claim}[section]

\theoremstyle{EX}

\newtheorem{definition}{Definition}[section]

\newenvironment{proof}{\paragraph{Proof:}}{\hfill$\square$}

\AtBeginDocument{\pagenumbering{arabic}}

\definecolor{safeblue}{RGB}{0,0,0}
\newcommand{\rev}[1]{{\color{safeblue} #1}}

\pagestyle{fancy}
\thispagestyle{empty}
\rhead{ \textit{ }} 

\fancyhead[LO]{Ghost Kitchens}

\title{The Restaurant Meal Delivery Problem\\ with Ghost Kitchens}

\author{
  Gal Neria, Michal Tzur \\
  Department of Industrial Engineering \\
  Tel Aviv University \\
  Tel Aviv\\
  \texttt{\{galneria@mail, tzurm@tauex\}.tau.ac.il} \\
   \And
  Florentin D Hildebrandt, Marlin W Ulmer \\
  Chair of Management Science \\
  Otto-von-Guericke-Universität Magdeburg \\
  Magdeburg\\
  \texttt{\{florentin.hildebrandt, marlin.ulmer\}@ovgu.de} \\
}

\begin{document}
\maketitle

\begin{abstract}
Restaurant meal delivery has been rapidly growing in the last few years. The main challenges in operating it are the temporally and spatially dispersed stochastic demand that arrives from customers all over town as well as the customers' expectation of timely and fresh delivery. To overcome these challenges a new business concept emerged, \enquote{Ghost kitchens}. This concept proposes synchronized food preparation of several restaurants in a central complex, exploiting consolidation benefits. However, dynamically scheduling food preparation and delivery is challenging and we propose operational strategies for the effective operations of ghost kitchens. We model the problem as a sequential decision process. For the complex, combinatorial decision space of scheduling order preparations, consolidating orders to trips, and scheduling trip departures, we propose a large neighborhood search procedure based on partial decisions and driven by analytical properties. Within the large neighborhood search, decisions are evaluated via a value function approximation, enabling anticipatory and real-time decision making.
We show the effectiveness of our method and demonstrate the value of ghost kitchens compared to conventional meal delivery systems. We show that both integrated optimization of cook scheduling and vehicle dispatching, as well as anticipation of future demand and decisions, are essential for successful operations. We further derive several managerial insights, amongst others, that companies should carefully consider the trade-off between fast delivery and fresh food.
\end{abstract}

\keywords{Restaurant meal delivery \and 
Ghost kitchens \and
Sequential decision process \and Large neighborhood search \and
Value function approximation}

\section{Introduction}
The demand for restaurant meal delivery is booming. More and more people order food online and expect a fast and fresh delivery at low cost. The high expectations are often not met in practice, and customers complain about long waits and cold food. At the same time, restaurants and delivery platforms struggle to become profitable with meal delivery. One reason is the limited consolidation potential of orders. Customer orders come in over time from locations all over the city. Restaurants are distributed all over the cities as well. Combined with very tight delivery promises, this makes consolidation nearly impossible, or, when consolidated, food likely arrives not fresh at the customers. 
With these challenges in mind, new business concepts emerge, called \enquote{Ghost kitchens}, \enquote{Dark kitchens}, or \enquote{Cloud kitchens} \citep{shapiro2023platform}. All propose food preparation of several restaurants in one, central complex. The restaurants do not have dine-in customers, but exclusively prepare online orders that are distributed by a joint fleet of vehicles. 

Having several restaurants in one place and serving online orders only brings many advantages. For each restaurant, the kitchens can be designed and operated to allow fast and high-quality processing of the online orders \citep{feldman2022managing}. Further,  due to the same origin of the food, delivery consolidation can be achieved more often and likely without diminishing the food’s freshness. This is not the case for related concepts, also called ghost kitchens. In contrast to the concept considered in this work, these ghost kitchens are virtual restaurant brands that are prepared in partnering dine-in restaurants. They aim at maximizing the resource usage of the partner restaurant without cannibalizing its market share. This concept appeared during the COVID pandemic and is vanishing again due to its unpopularity rooted in long delivery times and unfresh meals \citep{ghostkitchens2024}. For the remainder of this work, ghost kitchens refer to delivery-only facilities where meals for multiple, virtual restaurants are prepared. For the ghost kitchen concept considered, an effective operation of food preparation and the vehicles' dispatching is necessary to allow fast and fresh delivery as well as consolidation. Delivery vehicles should be dispatched with food for customers in the same neighborhood. This avoids long travel and congestion of the delivery fleet. 

To make this possible, the restaurants need to prepare the corresponding food in a way that it is ready for delivery roughly about the same time. Otherwise, at least one customer will receive cold food. At the same time, the restaurant resources should be used efficiently to avoid congestion of orders and long waiting for the corresponding customers. This leads to complex decisions about integrated order sequences and vehicle dispatching with several constraints on freshness and synchronization. All this has to be done in real-time while more orders enter the system every minute.

In this research, we propose strategies for the effective operations of ghost kitchens. We model the problem as a sequential decision process  and demonstrate that the search over the complex, combinatorial decision space of integrated order scheduling and vehicle dispatching can be replaced by a search over decision representations of significantly reduced dimension. To further improve the search, we embed a novel polynomial algorithm that filters potential candidate decisions (of reduced dimension) via analytical feasibility checks. Then, a full-dimensional decision is generated if and only if feasibility is ensured.
Our search over the reduced decision space is performed by a 
novel large neighborhood search (LNS). Within the LNS, decisions are evaluated via value function approximation (VFA), enabling anticipatory and instant dynamic real-time decision. The VFA is defined independently of the instance size. It is trained on small instances and then transferred to larger instances via transfer learning. 

In our computational study, we compare the concept of Ghost kitchens to regular meal delivery systems with several independently operating restaurants. We further compare our strategy to several benchmark strategies for a wide range of instance settings. We derive the following insights: (1) Ghost kitchens bring significant advantages compared to conventional delivery systems with respect to service quality and workforce utilization. (2) Joint optimization of preparation and dispatching  improves customer experience by reducing the average delay of deliveries for all customers throughout the city.  (3) We observe an increase in bundling opportunities and a decrease in travel time. This does not only result from the elaborate search of the decision space. Anticipation of future orders and decisions is similarly important. (4) There is an explainable but potentially counterintuitive trade-off between the delivery speed and freshness of food. Fresh delivery may result in longer waiting for the customers while faster delivery may lead to less fresh food. (5) Sharing the fleet amongst restaurants is essential for successful operations of ghost kitchens. (6) A careful balance between the fleet and cook resources is crucial. A shortage in one resource cannot be compensated by the other.  (7) For restaurants, reducing both preparation time volatility and length are important factors for the customers' experience. 

The contributions of our work are both problem- and method-oriented. We are the first to analyze the operations of a new and innovative delivery concept that, compared to conventional meal delivery systems, has many advantages, which can be exploited via synchronization of the preparation and dispatching tasks. We demonstrate these advantages in our numerical study, where improved KPIs such as better food quality, higher speed of delivery and reduced travel times are presented. Based on these results we present valuable managerial insights. Methodologically, our contribution consists of first reducing the decision space to a significantly lower number of viable candidate solutions by: (i) Defining a novel reduced decision representation that can be searched more easily than the original one; (ii) Reducing the number of candidate decisions even further by filtering, based on analytical results, infeasible decisions; and (iii) Developing a novel LNS algorithm to search the remaining decision space, whose search steps are again based on analytical properties.  
Then, each candidate decision is evaluated via VFA, enabling the incorporation of its anticipated value instantly. In our experiments, we highlight the value of this combination and analyze the impact of balancing offline training effort with the online execution effort of the LNS-search. We note that the integration of scheduling and routing is rarely studied in a dynamic environment and that  our method can likely be adapted to other related problems with a scheduling and routing component, e.g., in the fields of order picking and dispatching or dynamic production routing, see the discussion in Section~\ref{sec: Discussion}.

The paper is organized as follows: In Section~\ref{sec: literature} we provide the literature review, in Section~\ref{sec:model} we define the problem, in Section~\ref{sec: our solution method} we present our solution method, in Section~\ref{sec:evaluation} we provide our experimental study, in Section~\ref{sec: Discussion} we discuss the generality of our methodology as well as its applicability to other methods, and, in Section~\ref{sec: conclusion} we summarize our work. The paper also provides additional details in an Appendix.

\section{Literature Review} \label{sec: literature}
Our work addresses the problem field of restaurant meal delivery, its model requires dynamic scheduling and routing, and the methodology focuses on quickly searching a large and complex decision space and evaluating decisions via value function approximation. In the following, we discuss related problem, model, and methodology literature.

\subsection{Problem: Meal Delivery Routing}

Work on restaurant meal delivery has surged in the last years. The majority of work aims on efficient and flexible routing strategies to deliver meals from different restaurants to dynamically requesting customers. Early work by \cite{reyes2018meal,steever2019dynamic,liu2019optimization} and \cite{ulmer2021restaurant} present intuitive heuristic methods to dynamically assign orders to vehicles. The general goal is to ensure timely service by balancing routing efficiency with fleet flexibility. \cite{yildiz2019provably} analyze optimal dispatching strategies in a deterministic setting deriving insights in the value of bundling operations and giving guidelines in demand management and the scheduling of delivery vehicles. Recently, first work on reinforcement learning was proposed by \cite{jahanshahi2022deep} to determine anticipatory assignments and rejections of orders, i.e., learning the expected future revenue when a decision is taken. However, scheduling or routing decisions were not considered. Other work on meal delivery addresses arrival time predictions \citep{hildebrandt2020supervised} with the goal of providing accurate delivery time information to customers, the scheduling of the workforce \citep{ulmer2020workforce,dai2020workforce,auad2022dynamic} to ensure timely delivery without excessive workforce cost, or the zoning of the service area with respect to current and future demand \citep{ulmer2022dynamic,auad2023courier}. While most of the studies focus on timely delivery for customers, none of the considered studies analyzes ghost kitchens or considers integrated scheduling and routing decisions. In the problem addressed in this paper, we consider the assignment and scheduling of orders in combination with the vehicle dispatching. To achieve efficiency (and sometimes even feasibility) in the overall process, decisions regarding order preparation and the vehicle dispatching operations need to be coordinated in an integrated and anticipatory fashion.
While synchronization in meal delivery has not been considered in the literature yet, the importance of synchronization in vehicle routing problems has been highlighted by \cite{drexl2012synchronization}, who present a survey on vehicle routing problems with synchronization constraints.

\subsection{Model: Combined Scheduling and Routing}

From a modeling perspective, our problem may be considered as a combination of models for dynamic order scheduling, e.g., \cite{xu2016stochastic}, \cite{zhao2018minimizing},  \cite{zhao2018stochastic} and \cite{d2023integrated}, and dynamic vehicle dispatching, e.g., \cite{klapp2018dynamic}. A combination of both is relatively rare, and we list here a few exceptions. 

\cite{moons2017integrating} review the integration of production scheduling and vehicle routing decisions. They mentioned very few papers on stochastic problems and no papers that considered dynamic decisions, which they pointed to be a future research direction. \cite{zhang2019online} consider the online integrated order picking and delivery problem for an online-to-offline (O2O) community supermarket. They proposed an online algorithm with a theoretical competitive ratio of 2. \cite{hossein2021rule} suggest a rule-based heuristic algorithm for order picking scheduling and delivery planning of online retailers with multiple order pickers. A stochastic dynamic arrival of orders is assumed, however, the delivery of products is performed from the warehouse to the (given) targeted cross-dock that services the corresponding customers. That is, no routing considerations are included. \cite{liu2022approximate} examine the integration of production and delivery when orders arrive dynamically. 
As in our problem, they assume that orders must be processed on one of a set of parallel machines before their delivery. Still, the delivery considered is to one mutual destination for all at given departure times. 
The authors proposed an approximate dynamic programming solution in which simple principles are used to reduce the decision space, e.g., shortest-processing time in production and first-in-first-out (FIFO) in delivery. Finally, \cite{rijal2023} suggest integrated optimization of warehouse operations and delivery routing. In a static and deterministic setting, they compare sequential and integrated approaches and highlight the potential of joint optimization. 

To conclude, the integration of scheduling and routing is rarely studied in a dynamic environment and usually without anticipation (i.e., consideration of future order arrivals when making decisions). The problem we address provides a more general setting that has become increasingly prevalent in the online delivery domain nowadays. Furthermore, we are among the first presenting an anticipatory policy that integrates scheduling and routing optimization.

\subsection{Methodology: Search and Evaluation of Complex Decision Spaces}

The methodological contribution of our work is to combine the search of a complex decision space with an evaluation via value function approximation (also known as reinforcement learning, RL). Value function approximations (and RL-methods in general) repeatedly simulate the decision process and store the observed values of decisions in an aggregated form. The values are used for decision making and updated over the simulation. As a recent review by \cite{HILDEBRANDT2023106071} shows, past research either focused on a comprehensive search without explicit evaluation or alternatively, an explicit evaluation via RL, but only for a very few potential decisions. 
There are, however, a few exceptions. 

\cite{rivera2017anticipatory,rivera2022anticipatory} and \cite{heinold2022primal} use RL to decide which freight to dispatch. The values are approximated via a linear function based on a set of post-decision state features. Mixed integer programming is used for searching the decision space. A similar concept is proposed by \cite{SILVA2023100105} to decide about assignments in urban delivery, however, a neural net is used for approximation. This neural net is again linearized to allow the application of mixed integer programming. None of the works considers more complex decision spaces and problems that involve routing decisions with complex constraints. This slows down the search of the decision space while at the same time makes linearization of the features and the value function approximation more challenging.

Recently, \cite{NeriaTzur2022} proposed using a metaheuristic instead to search the decision space quickly and evaluate decisions with a value function approximation. The general idea is similar to our approach. However, while our works share the general combination, our problem differs significantly from \cite{NeriaTzur2022} and the design of both search and evaluation methodology is therefore essentially different. 
For our problem, the decision space is very large and finding feasible decisions is a challenge. 
Thus, we propose an alternative decision formulation that reduces the burden to determine a complete decision and allows for fast feasibility checks based on a set of carefully 
derived propositions. With the more complex decision space and larger problem size, learning the values becomes more challenging as well. While the mentioned fast search reduces the training time significantly, simulating systems of realistic size and learning the values is still computationally expensive. Thus, we propose transfer learning, training our policy on small instances and transferring our trained policy to realistically sized instances.

\section{Problem Statement}\label{sec:model}

In this section, we introduce the Restaurant Meal Delivery Problem with Ghost Kitchens (RMD-GK). First, we describe the problem. Then, we illustrate the model's components with an example. Finally, we formally define the problem as a sequential decision process.

\subsection{Problem Description} \label{sec:problem description}
A ghost kitchen is a facility for the sole purpose of preparing delivery-only meals. It is partitioned into a set of kitchens, each with dedicated cooks, and uses a joint fleet of capacitated vehicles for delivery (e.g., cargo bikes or delivery cars). Each kitchen is associated with a fixed ghost restaurant that we refer to as a food type 
and prepares only orders from that food type.

Over the course of the day, customers order food from the restaurants. In our setting, each order includes only one of the food types, i.e., can be prepared by one of the ghost restaurants. Each order is associated with a known preparation time and a customer location. Once an order is prepared, a vehicle picks up the order and delivers it (immediately or after some waiting time) to the respective customer, with or without other customers' orders at the same trip. If orders are bundled, and more than one order is dispatched in the same trip, the trip also captures the sequence of orders.

The scheduling of the orders in the restaurants and the scheduling of the delivery vehicles is done by a central information system. This system maintains and updates a schedule over time. The schedule determines for each restaurant which cook prepares which order, and when. We refer to this as the cook schedule. 
It further decides about the bundling of orders to trips, their assignment to vehicles, and the departure times of each trip. We refer to this as the vehicle schedule. A schedule is only feasible if each order's freshness is guaranteed when arriving at the customers. Thus, a schedule must ensure for each order that the ready-to-door time, i.e., the difference between the ready time of the food at the restaurant and the arrival time at the customer, does not exceed an order-specific duration. 
E.g., in case a cook is ready but upon the order preparation completion, a vehicle might not be available to deliver it fresh, the 
starting time of the order preparation may need to be postponed
until a fresh delivery can be guaranteed. Customers expect fast delivery, ideally within 30 minutes after the order was placed. The goal of the platform is to meet the expectation by minimizing the average exceedance of delivery time (delay) over all orders.

\subsection{Example}

In the following, we provide a small example to illustrate the RMD-GK and to prepare the modeling. For the ease of presentation, we omit any notation and numbers.

\begin{figure}[t]
\centering
{\includegraphics[width=\textwidth]{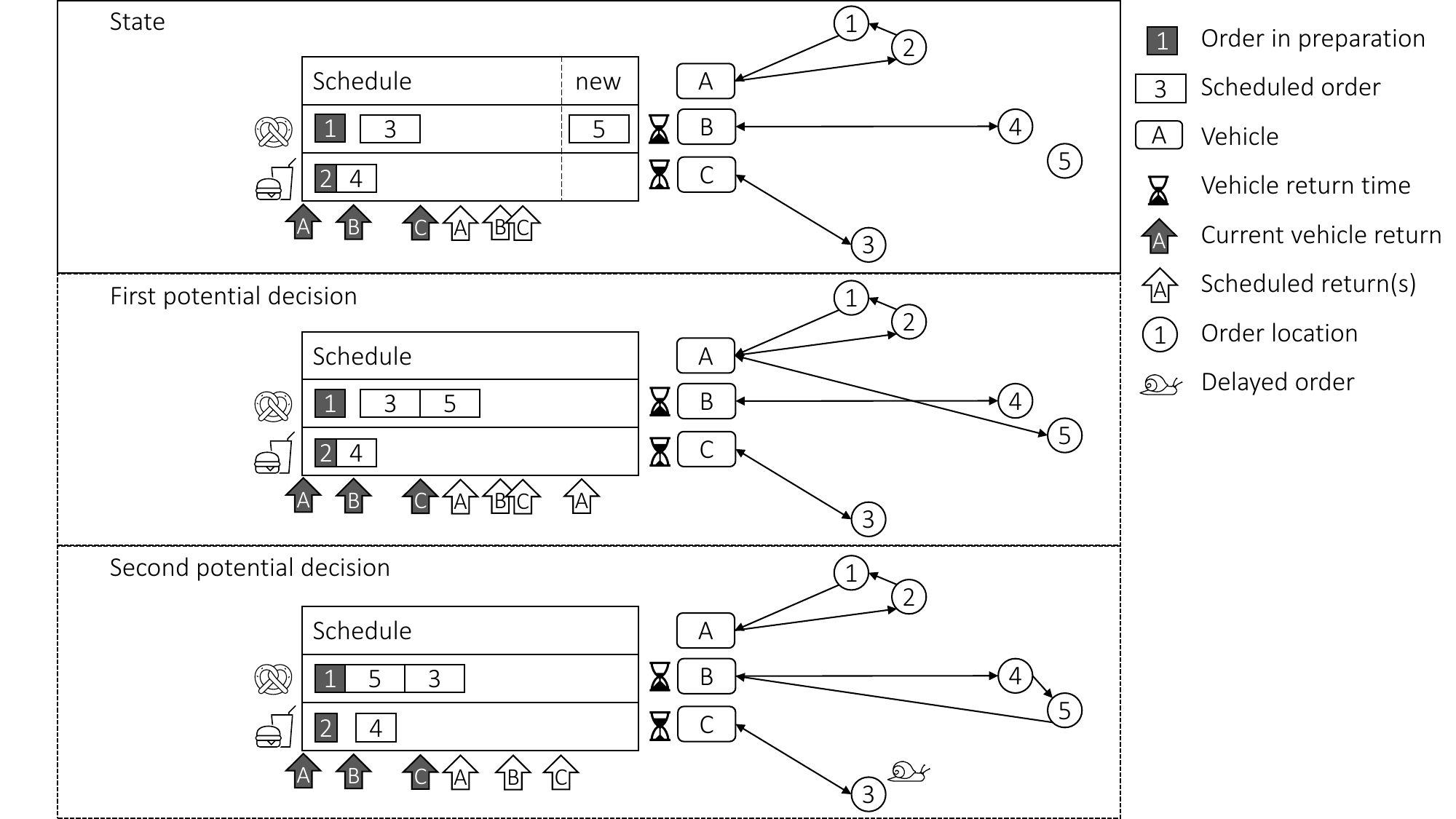}}
{Example for a state and two potential decisions.\label{fig:example}}
{}
\end{figure}

 The example is shown in Figure~\ref{fig:example}. 
 On the top of the figure, a possible state of the system is shown. 
 The middle and bottom parts depict the resulting system post-decision states for two potential decisions. 
 In the example, the schedules of the numbered orders in the kitchens are depicted in the large box on the left. There are two food types (German and US), each with one cook. 
In the depicted state, five orders are shown in rectangular shape. 
 The width of the shape indicates the preparation time for each order. Orders currently in preparation are depicted by the grey boxes. The size of each grey box corresponds to the remaining preparation time.
  Four of the orders (1, 2, 3, 4) are already scheduled, one is new (5). 
Orders 1, 3, and 5 belong to the first food type (German). Orders 2 and 4 belong to the second food type (US). Orders 1 and 2 are already in preparation. The current scheduling of the orders is (1,3) and (2,4) for the first and second food types, respectively. The gap between the finish time of order 1 and the start time of order 3 is the result of the vehicle departure plan, we discuss next.

 There are also three vehicles, denoted A, B, and C,  
 indicated by the three small squares 
 with soft corners. 
 The hourglasses beside the vehicles indicate that they are currently busy delivering orders. 
The geographical locations of the orders are shown on the right hand side. 
 In the depicted state, Vehicle A is currently available and Vehicles B and C are out for delivery. Since the details of their current  trips and deliveries do not matter for planning anymore, they are not depicted in the example. Only their return times matter, indicated by the hourglasses near the vehicles and depicted by the grey arrows in the schedules. Vehicle A just returned. Vehicle B returns earlier than Vehicle C. The planned bundles and trips for the vehicles are depicted on the right side of the figure. Specifically, it is planned that Vehicle A delivers orders 2 and 1 once they are finished. This sequence might be necessary to ensure freshness because preparation of order 2 finishes earlier than preparation of order 1. Once vehicle B (C) returns, it is scheduled to deliver order 4 (3). Because Vehicle C becomes available later, the start of preparation of order 3 is postponed. The scheduled later returns of the vehicles are depicted by the light arrows.

 In the middle and bottom parts of Figure~\ref{fig:example}, two potential decisions are shown. Each decision integrates the new order in a certain way and consequently updates the preparation schedules,  the bundles, and the trips of the vehicles.
 According to the first decision, the previous preparation schedule is kept and order 5 is added to it in a first-in-first-out (FIFO) manner. The same is true for the vehicles where order 5 is added as a direct trip to Vehicle A, leading to a second scheduled return for Vehicle A. An alternative decision is shown on the bottom part of the figure where order 5 is inserted before order 3. This allows joint delivery of orders 4 and 5, which are geographically close, by Vehicle B. To ensure feasibility, the preparation of order 4 is slightly postponed. Note that this decision causes a delay for order 3, indicated by the snail.

This small example already illustrates some of the complexities of decision-making and the trade-off of resources. Decisions need to consider scheduling and routing together to ensure freshness feasibility (and to minimize delays). Depending on the solutions, we can shift the workload between cooks and between cooks and vehicles. Further, we can sacrifice performance now for flexibility in the future. Switching orders 3 and 5 leads to less idling time for the cook of the first food type and earlier availability of Vehicle A. But it also leads to late delivery for order 3, later returns of Vehicles B and C, and it causes idling for the cook of the second food type.

\subsection{Sequential Decision Process}\label{sec: MDP plan based}
The RMD-GK is stochastic and dynamic.  It is stochastic, as orders are unknown until they are placed and their realizations follow known probability distributions in time and space. It is dynamic because decisions are made repeatedly over time. We model the problem as a sequential decision process. To that end, we first introduce the global problem notation. Then, we formally define the process components, namely, decision points, states, decisions, post-decision states, costs, stochastic information, transitions, and objective. 


We consider a \emph{capture} phase $[0, T^c]$ during which orders can be placed. We further define a longer \emph{operation} phase $[0, T]$ with $T>T^c$ sufficiently large to allow the fulfillment of all placed orders. We denote the set of all possible orders by $I$.  Recall that each order is associated with an order time, a certain food type, a known preparation time, and a customer location (the order's destination). The food types associated with orders are modeled via the set $F$.  Each food type has a corresponding freshness time $\delta_f, f\in F$, i.e., a maximum time allowed for the meal to
take from the time it is prepared until it arrives at the order’s destination. Furthermore, each order should be delivered within a promised delivery time $\tau\in\mathbb{R}$ from its placement time, and exceeding it leads to a delay cost. The set of cooks is denoted by $C$ with subsets $C_f$ working on each food type $f\in F$. The vehicles are modeled via a set $V$. The vehicle capacity, i.e., the maximum number of orders allowed to be simultaneously carried by each vehicle, is denoted by $\kappa\in\mathbb{N}$. The travel time from the location of order $i\in I\cup\{0\}$ (or the ghost kitchen) to the location of order $j\in I\cup\{0\}$ (or the ghost kitchen) is given by $t^t_{ij}\in\mathbb{R}_{\geq 0}$, where 0 denotes the location of the ghost kitchen.

We now present all elements of the sequential decision process, following the modeling framework of \citet{ulmer2020modeling}.

\begin{description}
    
    \item[Decision Points.] A decision point $k$ occurs whenever a new order is placed and at the end of the order capture phase at time $T^c$ when no more orders will arrive. As the realization of orders is stochastic, the number of decision points $K$ is a random variable.
       
    \item[States.] 
A state $S_k\in\mathcal{S}$ comprises all information relevant for decision-making.
We categorize the state information according to \emph{order information}, \emph{currently planned cook schedules}, and \emph{currently planned vehicle schedules}.
First, we describe the order information. Except for the final state $S_K$ in $T^c$, a state contains a new order. The new order in state $S_k$ at time $t_k$ is denoted by $i_k$. The set of \emph{open orders}, i.e., orders that have been placed but have not yet left the ghost kitchen for delivery, is denoted by $I_k$ and includes order $i_k$. Each order $i \in I_k$ is represented by four variables indicating 
the food type $f_{i} \in F$ of order $i$; the time of day at which order $i$ was placed $t^o_{i}\in [0,T^c]$; the preparation time of the corresponding meal $t^p_{i}\in\mathbb{R}_{\geq 0}$; and the location of the order $l_i$.
In summary, each order $i\in I_k$ is associated with $(f_i, t^o_i, t^p_i, l_i)$.
\\
Second, we describe the currently planned cook schedules. The cook schedules were decided on in the last decision point $k-1$ and pruned in the transition to the current state (see \emph{Transition}). They include all orders in $I_k$ \rev{except} the new order $i_k$. In state $S_k$, the currently planned sequence of orders prepared by cook $c\in C$ is given by 
$\psi_{kc}=(i_{1}^c, i_{2}^c,\dots)\subset I_k\setminus\{i_k\}$.
The set of all preparation sequences is given by $\Psi_k=\{\psi_{kc}\mid c\in C\}$.
The planned time of day at which the preparation of order $i\in I_k\setminus \{i_k\}$ is started is denoted by $t^s_{ki}\in[t^o_i,T]$ where $s$ is a symbol. The corresponding set is given by $t^s_k=\{t_{ki}^s\mid i\in I_k\setminus\{i_k\}\}$. 
In summary, the state variables corresponding to the cook schedules are given by $(\Psi_k, t^s_k).$\\
Third, we describe the currently planned vehicle schedules. Analogous to the cook schedules, the currently planned vehicle schedules were decided on in the last decision point and pruned in the transition to the current state (see \emph{Transition}). The schedules contain all orders in $I_k$ \rev{except} 
the new order $i_k$. The sequence of trips performed by vehicle $v\in V$ is denoted by $\Theta_{kv}=(\theta_{kv1},\theta_{kv2},...)$. Each trip $\theta\in\Theta_{kv}$ consists of a sequence of orders $(i_{1}^\theta, i_{2}^\theta,\dots)\subset I_k\setminus\{i_k\}$. The set of all planned trip sequences is given by $\Theta_k=\{\Theta_{kv}\mid v\in V\}$.
 The planned time of day at which trip $\theta\in\Theta_k$ departs is denoted by $t^d_{k\theta}\in[t_k,T]$ where $d$ is a symbol. The set of all departure times is denoted by $t^d_k=\{t^d_{k\theta}\mid \theta\in\Theta_k\}$. Each vehicle's $v\in V$ (next) return time to the ghost kitchen is denoted by $t^r_{kv}\in[t_k, T]$ ($r$ is a symbol), where its value is either $t_k$ if the vehicle is idling or its return time from its last trip that departed before $t_k$ (note that the trip itself is not part of the state $S_k$). The set of all vehicles' return times is given by $t^r_k$. 
To summarize, we represent a state as the tuple

        \begin{equation}
            S_k=\big(\underbrace{t_k, I_k}_{\text{Orders}}, \underbrace{ \Psi_k,t^s_k}_{\text{Cooks}}, \underbrace{\Theta_k, t^d_k, t^r_k}_{\text{Vehicles}}\big).
        \end{equation}

    \item[Decisions.] 
    A decision $x_k\in\mathcal{X}(S_k)$ is an update $x_k=\big(\Psi_k^x, t_k^{s,x}, \Theta_k^x, t^{d,x}_k)$ on the current plan $\big(\Psi_k, t^s_k, \Theta_k, t^d_k\big)$.
    This update must integrate the new order $i_k$ into the current plan (except for the final state $S_K$ in $T^c$) but may also change other parts of the plan. 
    For example, it might reassign and reschedule orders to cooks or vehicles to adapt to changes in information. 
    In general, an update on the current plan is feasible only if\rev{:} 
    \rev{\begin{compactitem}
        \item Each order is scheduled for preparation and delivery by exactly one cook and one vehicle; 
        \item The freshness constraint is satisfied, i.e., the time between the end of preparation and scheduled time of delivery of an order \rev{$i$} is sufficiently small \rev{(i.e., less than $\delta_{f_i}$)};
        \item Already started preparations remain unchanged; 
        \item  Each order's preparation finishes before its delivery trip departs;
        \item Only one order is prepared by a cook at a time;
        \item Each trip of a vehicle departs only after the return from the previous trip; 
        \item Vehicles satisfy the capacity constraint;
        \item 
        Trips are elementary tours that start and end at the ghost kitchen.
    \end{compactitem}
    }    
    The freedom of modifying the previous schedule combined with the various problem constraints yield a complex decision space $\mathcal{X}(S_k)$. For this reason, we model the decision space $\mathcal{X}(S_k)$ as a mixed integer linear program (MILP) in Appendix~\ref{app: decision space}.

\item[Post-Decision States.]
The post-decision state $S_k^x\in\mathcal{S}^x$ represents the state directly after a decision is made but before it transitions to the next state (before the arrival of the next order $i_{k+1}$). Thus, given the state $S_k=\big(t_k, I_k, \Psi_k, t^s_k, \Theta_k, t^d_k, t^r_k\big)$ and the decision $x_k=\big(\Psi_k^x, t_k^{s,x}, \Theta_k^x, t^{d,x}_k\big)$, the post-decision state is defined as $S_k^x=\big(t_k, I_k, \Psi_k^x, t^{s,x}_k, \Theta_k^x, t^{d,x}_k, t^{r}_k\big)$.

\item[Costs.] 
The {immediate cost}
$D^\Delta_k$ associated with state $S_k$ and decision $x_k=(\Psi_k^x, t^{s,x}_k, \Theta_k^x, t^{d,x}_k)$
is
given by the marginal difference in cost of the old plan $(\Psi_k, t^s_k, \Theta_k, t^d_k)$ and the updated plan $(\Psi^x_k, t^{s,x}_k, \Theta^x_k, t^{d,x}_k)$ (\rev{see} \citealt{ulmer2020modeling}). 
The cost of a plan corresponds to the total \enquote{planned} delay of the orders, i.e., the delay that would be observed when following the plan. Let 
$D(\Theta, t^d)$ denote the total delay for a given plan $(\Psi, t^s, \Theta, t^d)$. We define $D$ according to
\begin{equation}
    D(\Theta, t^d)=\sum_{\theta\in\bigcup\Theta}\ \sum_{j=1}^{|\theta|} \max\Bigg(0, t^d_\theta + \left(\sum_{l=1}^{j-1} t^t_{i^\theta_{l}, i^\theta_{l+1}} \right) - t^o_{i^\theta_{j}} - \tau\Bigg),
\end{equation}
where $\bigcup\Theta$ denotes the union of all sets of trips contained in $\Theta$.
In words, we sum over all trips of all vehicles. For each trip, we compute the delay of each order as the maximum of 0 and the difference of the time span from order placement to order arrival and the promised delivery time $\tau$. We define the marginal cost {(the immediate cost)} $D^\Delta_k$ as
\begin{equation}\label{eq: immediate cost}
    D_k^\Delta(S_k, x_k)=D(\Theta_k^x, t^{d,x}_k)-D(\Theta_k, t^d_k).
\end{equation}
\rev{Note, that the marginal costs can be negative in case the delay of the new plan is smaller than the one of the previous plan.} Further note, that the sum of marginal costs over all realized states coincides with the sum of realized delays over all orders. In the special case of the final state $S_K$ in $T^c$, the process terminates after the decision was made.

\item[Stochastic Information.] 
The stochastic information $W_{k+1}\in\Omega$ is given by the new order $i_{k+1}$ and corresponding food type $f_{i_{k+1}}$, order placement time $t^o_{i_{k+1}}$,  meal preparation time $t^p_{i_{k+1}}$ and order location $l_{i_{k+1}}$. 
 There is a special case where $W_{k+1}=\emptyset$, i.e., no more orders are realized. In that case, the sequential decision process moves to the final state $T^c$.
   
\item[Transition.] 
The transition $S^M: \mathcal{S}\times\mathcal{X}\times\Omega\rightarrow \mathcal{S}$ maps a state-decision tuple and stochastic information to the next state $S_{k+1}=S^M(S_k, x_k, W_{k+1})$. The current time is updated to $t_{k+1}=t^o_{i_{k+1}}$. 
The new set of open orders is given by 
$I_{k+1}=\displaystyle\bigcup_{\theta\in\Theta^x_k}\{i\in \theta\mid t^{d,x}_{k\theta}>t_{k+1}\}\cup\{i_{k+1}\}$, i.e., all orders whose departure times are after $t_{k+1}$. 
The schedule of each cook $c\in C$ is truncated to include only orders in $I_{k+1}$.
The new planned sequence of trips for each vehicle $v\in V$ is given by truncating trips $\theta\in\Theta_{kv}^x$ with $t^{d,x}_{k\theta}<t_{k+1}$. 
The return time of each vehicle $v\in V$ is updated according to

\begin{equation}    t^{r}_{k+1,v}=\max\left(t_{k+1},\max_{\theta\in \Theta^x_{kv} \mid t^{d,x}_{k\theta}<t_{k+1}} t^{d,x}_{k\theta}+ t^t_{i^\theta_{|\theta|},0} + \sum_{j=1}^{|\theta|-1}t^t_{i^\theta_{j}, i^\theta_{j+1}}\right)
\end{equation}


\item[Objective.] 
A mapping that assigns each state $S_k$ a decision $\pi(S_k)=x_k$ is called a policy and denoted by $\pi\in\Pi$. Given an initial state $S_0=(0, \emptyset, \emptyset, \emptyset, \emptyset, \emptyset, \emptyset)$, the objective is to find an optimal policy $\pi^*$ that minimizes the expected sum of delay. This coincides with minimizing the expected marginal costs over all decision points when starting in $S_0$ and applying policy $\pi$ throughout the process:

\begin{equation}
   \min_{\pi\in\Pi}\mathbb{E}\Big[\sum_{k=0}^K D^\Delta\big(S_k, \pi(S_k)\big)|S_0\Big].
\end{equation}
\end{description}

\section{Solution Method} \label{sec: our solution method}


In this section, we present our solution method. 
We first give a motivation and an overview in \rev{Section} \ref{sec:motivation and overview} before defining the two main components, search (\rev{Section} \ref{sec: search the action space}) and evaluation (\rev{Section} \ref{sec: LNS Solutions Evaluation}), in detail.

\subsection{Motivation and Overview}\label{sec:motivation and overview}

The problem at hand requires fast and effective decisions. The small example in Figure~\ref{fig:example} already illustrates the complexity of finding a feasible and effective decision for cook and vehicle schedules. \rev{Even for this small example, the number of potential decisions is already vast. Also, it is not clear which of the two decisions shown in the example is more effective as it depends on future orders and decisions. The first one avoids delay now, but binds more vehicle resources while the second causes a slight delay now, but saves vehicle resources. The first might be advantageous if the expected number of future orders is small while the latter becomes more effective in case many orders can be expected in the future. Hence, the evaluation of a decision should \emph{anticipate} potential future delay when a specific decision is taken. The two challenges of a thorough search of the decision space and an anticipatory evaluation of decisions are captured in the Bellman Equation:}\vspace{-1cm}

\begin{equation}\label{eq:optimal}
    \pi^*(S_k)\in \rev{\argmin}_{x\in\mathcal{X}(S_k)}D^\Delta(S_k, x) +\mathcal{V}(S_k^x).
\end{equation}

The Bellman Equation defines an optimal decision $\pi^*(S_k)$ from the set of overall decisions $\mathcal{X}(S_k)$ in a state $S_k$ based on the (known) immediate cost $D^\Delta(S_k, x)$ and the expected cost-to-go when taking a decision $x$. The cost-to-go is 
modeled via the value function $\mathcal{V}$ mapping post-decision states $S_k^x$ to the expected future cost $\mathcal{V}(S_k^x)$:\vspace{-0.5cm}

\begin{equation}
    \mathcal{V}(S_k^x) = \mathbb{E}\Big[\sum_{k^\prime=k+1}^K D^\Delta\big(S_{k^\prime}, \pi^*(S_{k^\prime})\big)|S_k^x\Big].
\end{equation}

Thus, finding an effective policy poses two main challenges. First, a comprehensive, but runtime-efficient search of the complex combinatorial decision space is required, indicated by $\rev{\argmin}_{x\in\mathcal{X}(S_k)}$ in the Bellman Equation. Second, the decisions need to be evaluated instantly with respect to their immediate and future impact, with the latter represented by the (unknown) value function $\mathcal{V}(S_k^x)$ in the Bellman Equation. We address these challenges by combining a large neighborhood search (LNS) of the decision space driven by analytical insights with a value function approximation (VFA) tuned via transfer learning.  

Specifically, we propose an LNS to search the decision space. Since the decision space is vast, a large number of LNS-iterations are required to search it thoroughly. 
However, many decision candidates turn out infeasible in the end due to the capacity or freshness constraints and need to be discarded. 
To allow a fast but thorough search and quickly identify infeasible decision candidates, we base our LNS on a \emph{condensed} representation of a decision, i.e., one that reduces the decision space without loss of optimality. 
Within the LNS we search on condensed \emph{partial} decisions, i.e., where the planned sequence of orders at the cooks and trips of vehicles is determined but their exact timing is not. 
To determine feasibility and timing of the corresponding (partial) sequencing decision, we propose a polynomial algorithm, based on a dynamic programming formulation and a set of analytical propositions.

To evaluate a feasible decision candidate,  we propose a VFA. The VFA approximates values $\mathcal{V}(S^x_k)$ via repeated simulations. Once the values are learned, decisions can be evaluated instantly. For the VFA, a post-decision state is represented by a set of features. As illustrated in the example in Figure~\ref{fig:example}, the value depends on the resources' availability and their balance \rev{(i.e., the workload distribution among cooks and vehicles, both individually and relative to one another)}. Thus, we derive problem-specific features accordingly. A final challenge of the VFA is the runtime. Even though training can be done offline, the training requires a large number of simulations with many states and decisions via LNS. Thus, we propose transfer learning, training on a smaller problem size and then adapting the trained policy to larger-size instances via fine-tuning. As approximation architecture, we rely on a neural network (NN). The general idea of combining a neighborhood search with a NN-based VFA was proposed in \citet{NeriaTzur2022}.

\rev{As our method allows for an integrated optimization of scheduling and routing while anticipating future orders, we denote it \textit{Anticipatory Integrated} (AI).}
The overall procedure when a new order arrives to the system is summarized in Figure~\ref{fig:LNS}. In a state, an initial (original) decision is fed to the LNS as the current best decision. Then, over a number of iterations, a new, partial (condensed) decision is generated by a set of operators. 
The feasibility of the partial decision is checked, and if not feasible, it is discarded and a new partial decision is generated. 
This is done by our proposed \emph{PDFT} (Partial Decision Feasibility and Timing)-algorithm. In case a partial decision is feasible, the PDFT creates the full (original) decision, and the value of the decision, provided by the stored approximated values, is compared to the current best decision. In case the value is better, the currently best found decision is updated. After a given number of iterations, the procedure returns the best found decision. This decision is then implemented in the sequential decision process. In the following, we describe the individual steps of the procedure in detail.

\begin{figure}[t]
    \centering
    \caption{Overview of the steps of our \rev{\textit{AI}-method} in a decision state}
    \includegraphics[width = \textwidth]{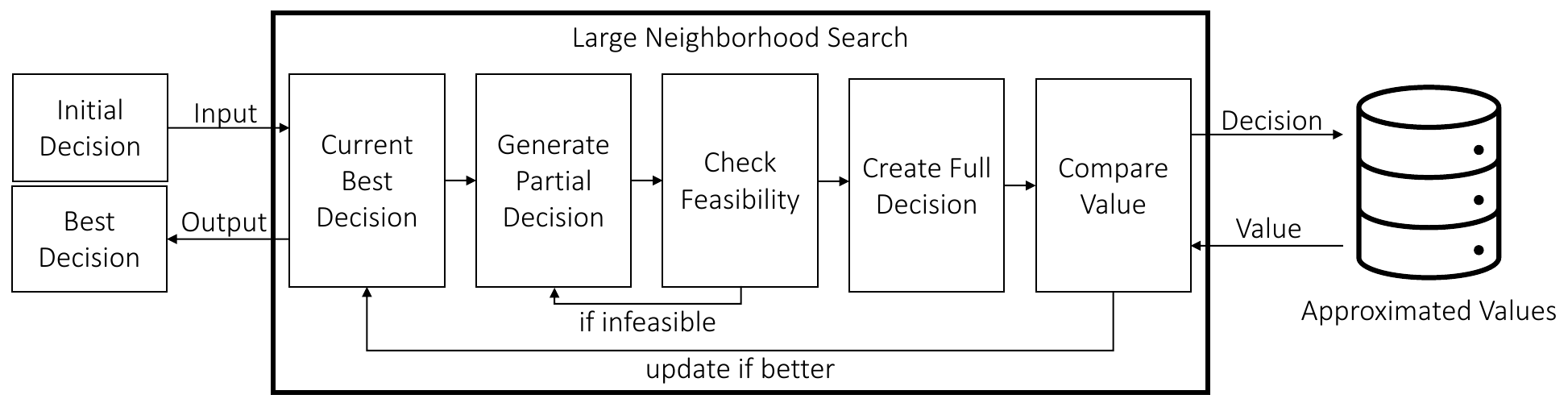}
    \label{fig:LNS}
    \vspace{-1.0cm}
\end{figure}

\subsection{The Large Neighborhood Search} \label{sec: search the action space}

In this section we describe our 
method to search the decision space whenever a new order arrives in the system and a decision needs to be made. We follow the steps proposed in Figure~\ref{fig:LNS}. 
We first introduce the condensed decision representation and, building on it, the notion of a partial decision. We describe the LNS-operators for the partial decision space. We then present the PDFT-algorithm to check feasibility and create the full decision. 

\subsubsection{A Condensed Representation of a Decision and a Partial Decision.}\label{sec: condensed rep}

We start with a formal definition of a condensed decision representation. To that end, we denote by $I_{kf}=\{i \in I_k \mid f_i=f\}$ the set of open orders of food type $f\in F$. 
Recall that $x_k$ includes assignments of orders to specific cooks and assignments of trips to specific vehicles. The idea of a condensed decision is to combine together all orders that have the same food type (all trips) instead of assigning each order to a specific cook of the respective food type (specific vehicle). 

\begin{definition} [A condensed decision representation]  \label{def: condensed solution representation}
A \emph{condensed decision representation} of $x_k$ includes the following components. 
For each food type $f \in F$, a sequence of processing all orders in $I_{kf}$ is given by $\psi_{k}^{f,x}=(i_{f1}, i_{f2},\dots)$. 
The set of all preparation sequences (for all food types) is given by $\rev{\widehat{\Psi}_k^x}=\{\psi_{k}^{f,x}\mid f\in F\}$. 
 The sequence of trips to dispatch by all vehicles in $V$ is given by a single vector $\rev{\widehat{\Theta}_k^x} = (\theta_1, \theta_2, \dots)$.
 The associated times $t^{s,x}_k$ and $t^{d,x}_k$ to start each order and depart each trip, respectively, remain as in the original decision representation (Section~\ref{sec: MDP plan based}).
\end{definition}

That is, $\widehat{\Psi}_k^x$ replaces $\Psi_k^{x}$ in the original decision representation by storing the sequences of processing all orders by food types instead of by specific cooks. 
Similarly, $\widehat{\Theta}_k^x$ replaces $\Theta_k^{x}$ in the original decision representation by storing a sequence for all vehicles in $V$ instead of a sequence for each vehicle. Note that because the times to start each order and depart each trip are identical for the original and its associated condensed representations, they have the same cost. We define a condensed decision to be feasible if there exists an original feasible decision that induces the condensed representation. 
 With this representation we reduce the decision space by eliminating equivalent decisions in $\Psi_k^x$ and $\Theta_k^x$ (because of the homogeneity of cooks of the same food type and vehicles). For the first decision in the small example in Figure~\ref{fig:example}, the representation of the food type sequences is identical to the cook schedule representations, i.e., (1,3,5) and (2,4) for the German and US food types, respectively,
since only one cook per food type is available. The representation also contains the corresponding start times for the orders (not explicitly stated in the example). In contrast to the original representation, the vehicle schedule would be represented as $((1,2),(4),(3),(5))$, again, with the corresponding departure times. We next present definitions and claims that are used to prove Theorem \ref{th: condensed representation} below, that the condensed decision space also contains an optimal decision.
\begin{definition} [Symmetric decisions]\label{def: symmetric decisions}
Given a state $S_k$, two decisions $x$ and ${x}^\prime$ are symmetric if they include the same set of trips as well as the same sets of starting times $t^{s,x}_k$ and $t^{d,x}_k$ for orders and trips. 
\end{definition}
Two symmetric decisions can only differ in the specific cook and vehicle assignments.


\begin{claim} [Condensed representation of symmetric decisions]\label{th: symmetric post decision states have the same partial decisions} 
Given a state $S_k$, 
decisions are represented by the same condensed decision representation if and only if they are symmetric decisions.
\end{claim}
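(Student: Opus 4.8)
The plan is to prove both directions of the biconditional directly from the definitions, unpacking what the condensed representation records and what ``symmetric'' means.

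\textbf{Direction 1: symmetric decisions have the same condensed representation.} Suppose $x$ and $x^\prime$ are symmetric. By Definition~\ref{def: symmetric decisions}, they contain the same set of trips and the same sets of starting times $t^{s,x}_k$ and departure times $t^{d,x}_k$. I would note that a condensed representation (Definition~\ref{def: condensed solution representation}) is determined entirely by: (i) for each food type $f$, the processing sequence $\psi_k^{f,x}$ of all orders in $I_{kf}$; (ii) the single sequence of trips $\widehat\Theta_k^x$; and (iii) the timing data $t^{s,x}_k, t^{d,x}_k$. Items (ii) and (iii) are literally the ``set of trips'' and ``starting times'' that symmetric decisions share. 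The only subtlety is item (i): the order-processing sequence per food type. Here I would argue that the per-food-type sequence $\psi_k^{f,x}$ is recoverable from the start times $t^s_k$ restricted to orders of food type $f$ — since each cook processes one order at a time (a feasibility constraint) and cooks of the same food type are pooled in the condensed view, the concatenation of all $C_f$ cooks' sequences, ordered by start time, is exactly $\psi_k^{f,x}$. Because $x$ and $x^\prime$ agree on all start times, they yield the same $\psi_k^{f,x}$ for every $f$, hence the same $\widehat\Psi_k^x$, and therefore the same condensed representation.

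\textbf{Direction 2: decisions with the same condensed representation are symmetric.} Conversely, suppose $x$ and $x^\prime$ induce the same condensed representation. Then by Definition~\ref{def: condensed solution representation} they share $\widehat\Psi_k^x$, $\widehat\Theta_k^x$, $t^{s,x}_k$, and $t^{d,x}_k$. Sharing $\widehat\Theta_k^x$ means they have the same collection of trips (as a multiset/sequence of order-sequences), and they already share both sets of timing data by hypothesis. These are exactly the three conditions in Definition~\ref{def: symmetric decisions}, so $x$ and $x^\prime$ are symmetric. This direction is essentially immediate once the definitions are lined up.

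\textbf{Main obstacle.} The one place requiring care is the forward direction's claim that the per-food-type processing sequence is a function of the start times alone — i.e., that two symmetric decisions cannot differ in their condensed $\widehat\Psi_k^x$. I would handle this by invoking the feasibility constraint that each cook prepares only one order at a time, so within a single cook's schedule the sequence is forced by the start times; and since the condensed view erases the cook identity within a food type, what remains is the start-time-sorted list of all food-type-$f$ orders, which is identical for $x$ and $x^\prime$. A secondary, mostly cosmetic point is whether ties in start times (two orders of the same food type starting simultaneously on different cooks) create ambiguity in $\psi_k^{f,x}$; I would note either that preparation times being positive precludes this on a single cook and that cross-cook ties can be broken by a fixed convention, so the condensed sequence is still well-defined and identical for both decisions. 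With that settled, both implications follow directly from the definitions and the feasibility constraints stated in the Decisions paragraph.
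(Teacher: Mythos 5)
Your proof is correct and takes essentially the same route as the paper, which simply states that the claim follows immediately from Definitions~\ref{def: condensed solution representation} and \ref{def: symmetric decisions}; you unpack that one-liner into the two directions. Your extra care about the forward direction — that the pooled per-food-type sequence is determined by the start times once cook identities are erased — is exactly the point the paper leaves implicit, and your handling of it (via the one-order-at-a-time constraint and a tie-breaking convention) is sound.
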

\begin{proof}{Proof of Claim \ref{th: symmetric post decision states have the same partial decisions}.}
Immediately derived from Definitions \ref{def: condensed solution representation} and \ref{def: symmetric decisions}.
\end{proof}

\begin{claim} [Symmetric decision costs]\label{th: costs symmetric post decision states}
Given a state $S_k$, the resulting post-decision states of symmetric decisions have the same immediate cost and the same cost-to-go.
\end{claim}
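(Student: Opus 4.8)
The plan is to show the two assertions — equal immediate cost and equal cost-to-go — separately, both by exhibiting that the relevant quantities depend only on data that symmetric decisions share. For the immediate cost, recall from Equation~\eqref{eq: immediate cost} that $D_k^\Delta(S_k,x) = D(\Theta_k^x, t^{d,x}_k) - D(\Theta_k, t^d_k)$. The subtrahend $D(\Theta_k, t^d_k)$ depends only on the state $S_k$, which is fixed, so it suffices to show $D(\Theta_k^x, t^{d,x}_k) = D(\Theta_k^{x'}, t^{d,x'}_k)$. Inspecting the definition of $D(\Theta, t^d)$, the sum ranges over $\theta \in \bigcup \Theta$ — the \emph{set} of all trips, irrespective of which vehicle performs them — and each summand depends only on the trip's order sequence, its departure time $t^d_\theta$, the travel times $t^t$, and the order data $t^o_i, \tau$. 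By Definition~\ref{def: symmetric decisions}, symmetric decisions $x$ and $x'$ have exactly the same set of trips and the same departure times $t^{d,x}_k = t^{d,x'}_k$, so $D(\Theta_k^x, t^{d,x}_k) = D(\Theta_k^{x'}, t^{d,x'}_k)$ termwise, and hence $D_k^\Delta(S_k, x) = D_k^\Delta(S_k, x')$.

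For the cost-to-go, the plan is to show that the two post-decision states $S_k^x$ and $S_k^{x'}$ are indistinguishable as far as the future of the process is concerned — more precisely, that there is a relabeling of cooks within each food type and of vehicles that carries one post-decision state to the other, and that the transition dynamics, the feasible decision sets, and the cost function are all equivariant under such relabelings. Concretely, I would first argue that $S_k^x$ and $S_k^{x'}$ differ only in the assignment of order sequences to homogeneous cooks (within a food type $C_f$) and of trips to vehicles; all other components ($t_k$, $I_k$, the start times $t^{s,x}_k$, the departure times $t^{d,x}_k$, and — crucially — the return times $t^r_k$, which are part of the state and identical since they derive from $S_k$) coincide. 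Then, because cooks of the same food type are interchangeable and vehicles are identical (same capacity $\kappa$, and the return times are already pinned down by $t^r_k$ and the common departure/travel data rather than by vehicle identity), the permutation $\sigma$ sending $x$'s assignment to $x'$'s assignment induces a bijection $S_{k'} \mapsto \sigma(S_{k'})$ on every reachable downstream state that commutes with the transition map $S^M$ for every realization $W$ of the stochastic information, preserves feasibility ($x \in \mathcal{X}(S) \iff \sigma(x) \in \mathcal{X}(\sigma(S))$), and preserves immediate cost (by the same trip-set argument as above). Consequently the optimal value function satisfies $\mathcal{V}(S_k^x) = \mathcal{V}(\sigma(S_k^x)) = \mathcal{V}(S_k^{x'})$, since the entire decision tree rooted at $S_k^x$ is isomorphic to the one rooted at $S_k^{x'}$.

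The main obstacle I anticipate is making the equivariance-of-dynamics argument rigorous without drowning in notation — in particular, checking that the return-time update in the transition (Equation~(4)) and the truncation of cook and trip sequences genuinely depend only on the trip \emph{contents} and departure times and not on vehicle identity. One subtlety to handle carefully: after a permutation $\sigma$ of vehicles, the identity of "vehicle $v$'s next return time $t^r_{k+1,v}$" changes label, but the multiset of return times is preserved, which is all that matters since a vehicle's future role is determined only by when it becomes available. I would state this as a short lemma ("the transition map is equivariant under permutations of cooks within food types and permutations of vehicles") and verify it component by component against the transition definition, then conclude the cost-to-go equality by induction on the (random, finite) number of remaining decision points $K - k$, with the base case being the terminal state in $T^c$ where both post-decision states incur zero further cost.
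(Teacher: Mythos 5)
Your proposal is correct and follows essentially the same route as the paper's proof: identical departure and start times give identical arrival times and hence identical immediate cost, and the interchangeability of homogeneous cooks (within a food type) and vehicles gives identical cost-to-go. Your permutation/equivariance formulation is simply a more rigorous spelling-out of the paper's informal statement that "indexes are interchangeable," so no new idea is introduced and none is missing.
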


\begin{proof}{Proof of Claim \ref{th: costs symmetric post decision states}.}
In each pair of symmetric decisions 
the starting preparation and departure times of all orders and trips, respectively, are identical and hence the 
arrival time of each order $i \in I_k$ to the customer is identical, i.e., the total delay is the same (the immediate cost).
For the same reason, at each point of time $t \geq t_k$,
the number of available cooks of each food type and the number of available vehicles is the same in both post-decision states. As cooks and vehicles are homogeneous and their indexes are therefore interchangeable, the cost-to-go is also identical.
\end{proof}



\begin{theorem}\label{th: condensed representation}

Given a state $S_k$, an optimal decision can be found by searching over all (feasible) condensed decision representations rather than over all original decision representations.

\end{theorem}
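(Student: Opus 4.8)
The plan is to show that the condensed decision space loses no optimality by exhibiting, for every feasible original decision, a symmetric feasible decision whose condensed representation lies in the search space, and then invoking the cost equivalences already established. Concretely, I would fix a state $S_k$ and let $x^*$ be an optimal decision in $\mathcal{X}(S_k)$ (with respect to immediate cost plus cost-to-go, i.e.\ the Bellman objective). By Claim~\ref{th: symmetric post decision states have the same partial decisions}, every original decision has a well-defined condensed representation, and by definition this condensed representation is feasible whenever $x^*$ is. Thus the condensed representation of $x^*$ is a feasible point of the condensed search space.

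The key remaining step is to argue that searching over condensed representations does not change the attainable objective value. For this I would combine two facts. First, by Claim~\ref{th: costs symmetric post decision states}, symmetric decisions have identical immediate cost and identical cost-to-go, so the Bellman value is constant on each symmetry class; hence the condensed representation inherits a well-defined Bellman value equal to that of any original decision inducing it. Second, by Claim~\ref{th: symmetric post decision states have the same partial decisions}, the fibers of the map ``original decision $\mapsto$ condensed representation'' are exactly the symmetry classes, so the set of condensed representations is in bijection with the set of symmetry classes of original decisions, and this bijection preserves feasibility and Bellman value. Therefore $\min$ over feasible condensed representations of the Bellman objective equals $\min$ over feasible original decisions, and the optimal condensed representation corresponds (via any original decision in its class) to an optimal original decision. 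This gives the claimed result: one may restrict the $\argmin$ in the Bellman Equation to condensed representations without sacrificing optimality.

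The one point that needs a little care — and which I expect to be the main (though minor) obstacle — is the feasibility bookkeeping: I must make sure that ``feasible condensed decision'' as defined (there exists a feasible original decision inducing it) is exactly the image of the feasible original decisions under the condensation map, and that the constraints listed in the \textbf{Decisions} paragraph (capacity, freshness, no-overlap on cooks, trip departs after previous return, already-started preparations unchanged) are genuinely invariant within a symmetry class. This invariance is essentially what Claim~\ref{th: costs symmetric post decision states}'s proof already uses when it observes that the number of available cooks of each food type and vehicles at every time $t\ge t_k$ is the same across symmetric decisions; I would spell out that the per-cook and per-vehicle constraints depend only on the multiset of (sequence, timing) assignments within each food type and within the vehicle fleet, hence are preserved under relabeling cooks of a common food type and relabeling vehicles. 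Once that is noted, the theorem follows immediately by chaining Claims~\ref{th: symmetric post decision states have the same partial decisions} and~\ref{th: costs symmetric post decision states}.
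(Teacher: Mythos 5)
Your proposal is correct and follows essentially the same route as the paper's proof: it chains Claim~\ref{th: symmetric post decision states have the same partial decisions} (the fibers of the condensation map are exactly the symmetry classes) with Claim~\ref{th: costs symmetric post decision states} (immediate cost and cost-to-go are constant on each class), and uses the existential definition of a feasible condensed decision to handle the feasibility correspondence. The extra care you flag about feasibility being preserved under relabeling of homogeneous cooks and vehicles is a reasonable refinement of the same argument rather than a different approach.
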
 

\begin{proof}{Proof of Theorem~\ref{th: condensed representation}.}
Every original decision representation has a corresponding condensed decision representation, and
every feasible condensed decision is associated with a non empty set of (feasible) original decision representations. 
Thus, all original decision representations are included in the search, and only feasible original decision representations are considered.
Moreover, all original representations that are associated with the same condensed representation are symmetric and have an equal immediate cost and cost to go (Claims \ref{th: symmetric post decision states have the same partial decisions} and \ref{th: costs symmetric post decision states}).
\end{proof}

Consequently, in our search heuristic we use the condensed decision representation to reduce the decision space. Towards a further reduction of the decision space, 
we next present the notion of a \emph{partial decision}, which reduces the decision space we have to explore at any given state.

\begin{definition} [A partial decision] \label{def:partial solution}
A partial decision includes only the sequences parts of a condensed decision, i.e., the sets of preparation and trip sequences, $\widehat{\Psi}_k^x$ and $\widehat{\Theta}_k^x$, respectively.
\end{definition}
For the example in Figure~\ref{fig:example}, this would leave the food type sequences (1,3,5) and (2,4) for the German and US food types, respectively, without the preparation starting times. It would also contain the condensed vehicle sequence $((1,2),(4),(3),(5))$, but without the departure times.

\subsubsection{The LNS-Procedure.}\label{sec: LNS} We now define our LNS-procedure operating on partial decisions. We start by generating an initial decision \rev{using a first-in-first-out (FIFO) procedure (Algorithm \ref{alg:FIFO} in Appendix \ref{app:Algorithm Pseudo Codes})}. Then, we generate new partial decisions using operators that alter task sequences in $\widehat{\Psi}_k^x$ and $\widehat{\Theta}_k^x$, as described below.
The feasibility of each partial decision is checked using a polynomial algorithm, the PDFT \rev{(Appendix \ref{app: PDFT})}. 
If a partial decision is feasible, this algorithm also assigns tasks to cooks and vehicles as well as task starting times.
\rev{A pseudo code of the LNS is given by Algorithm \ref{alg:LNS} in Appendix \ref{app:Algorithm Pseudo Codes}.}

The FIFO procedure generates the \emph{initial decision} by appending the new order $i_k$ to the planned schedule $\big(\Psi_k, t^s_k, \Theta_k, t^d_k\big)$ (as defined by the state variables of $S_k$), i.e., by assigning $i_k$ to a cook and a vehicle trip.
It starts by assigning order $i_k$ to the first available cook $c \in C_{f_{i_k}}$, initially at the earliest time $t \geq t_k$ in which $c$ is first available after the last scheduled order in $\psi_{kc}$ is prepared (according to the plan in $\Psi_k$ and $t^s_k$). 
Then, it searches for all vehicles $v \in V$ that have less than $\kappa$ orders assigned on their last scheduled trips according to $\Theta_k$ and their departure time is no earlier than $t+t_i^p$, the time order $i_k$ can be ready.
The selected trip is the one that can get $i_k$ to the customer with minimum tardiness. 
We note that if order $i_k$ joins a tour with other orders, the trip is determined to minimize the total tardiness over the orders included in it (and $i_k$) such that it is feasible.
If there is no existing scheduled trip the new order can join, then, it is assigned on a new trip that will depart as soon as both the order finishes preparation and there is an available vehicle.
If the earliest time, a vehicle can reach $i_k$ extends its freshness constraint, then, the time of preparation by cook $c$, $t$, is postponed such that the order will arrive fresh. The described procedure to insert order $i_k$ to the planned schedules $\big(\Psi_k, t^s_k, \Theta_k, t^d_k\big)$, without additional search steps, is used in Section~\ref{sec: test heuristic} as a first benchmark, that we refer to as \emph{FIFO}. This heuristic represents what is often done in practice.
\rev{A pseudo code of the \emph{FIFO} benchmark is given by Algorithm \ref{alg:FIFO} in Appendix \ref{app:Algorithm Pseudo Codes}.}

Based on the initial decision (or \emph{current decision}), new decisions are generated to search the decision space, as follows.
Let $\psi_{k}^{f,x}=(i_{f1}, i_{f2},\dots), \forall f \in F$ and $\widehat{\Theta}_k^x$ be the partial decision of the current decision, referred to as the \emph{current partial decision}. 
We generate the next decision by using one of seven operators to alter one of the sequences in $\psi_{k}^{f,x}, \forall f \in F$ or $\widehat{\Theta}_k^x$ such that we obtain a new partial decision, represented by new sequences: ${\psi_{k}^{f,x^\prime}}, \forall f \in F$  and $\widehat{\Theta}_k^{x^\prime}$ (where $\widehat{\Psi}_{k}^{x^\prime}=\{{\psi_{k}^{f,x^\prime}}, \forall f \in F\}$). 
\rev{The operators are randomly selected with equal probabilities.}
The operators were chosen according to our understanding of the problem domain, based on analytical properties as well as random procedures that assure that a large portion of the decision space can be covered. The individual operators are described in detail in Appendix \ref{App: operators}. Generally, \rev{Operators} 1 and 2 make changes in the order preparation sequences; Operators 3 and 4 change the trips sequence; and Operators 5-7 change the vehicle trips. 

After an operator is applied, we check if the new partial decision is feasible using the PDFT, which we describe in the next section. If not, we discard it. Else (if it is feasible), the PDFT also derives timing sets ${t^{s,x}_k}^\prime$ and ${t^{d,x}_k}^\prime$. 
If the decision has a lower (cost) value than the current one (see Section~\ref{sec: LNS Solutions Evaluation}), it becomes the new current partial decision from which the search continues.

\subsubsection{The PDFT.}\label{sec: PDFT}
We next summarize our PDFT-algorithm (for all details, we refer to Appendix~\ref{app: PDFT}). The input of the PDFT is a state $S_k$ and a partial decision, ${\psi_{k}^{f,x}}, \forall f \in F$ and $\widehat{\Theta}_k^x$. Its output is whether the partial decision is feasible or not, and, if yes, the starting times of order preparations and trip departure times.
To describe the PDFT we first sketch the problem it is aimed to solved, referred as the Assignment and Timing Sub-Problem (ATP): Given a state $S_k$ in the RMD-GK problem and a partial decision, i.e., set of sequences of orders to prepare for each food type and a sequence of trips to dispatch, ${\psi_{k}^{f,x}}, \forall f \in F$ and $\widehat{\Theta}_k^x$, respectively; the goal of the ATP is to assign the orders (trips) to the cooks (vehicles) and determine feasible task starting times such that the orders and trips are processed according to the given sequences and the total delay of the open orders in $I_k$ (given by $S_k$) is minimized. In Appendix~\ref{app: PDFT} we present a dynamic programming (DP) formulation of the ATP. We further present a set of analytical propositions that narrow down the state and decision space of the ATP without loss of optimality. We also present properties of the optimal solution to the DP formulation, which form the basis to the PDFT-algorithm.

The PDFT starts with the first state of the DP of the ATP and assigns orders and trips to cooks and vehicles, respectively, as soon as possible, one after another according to the obtained states along the DP. 
When obtaining infeasible states in the ATP because of freshness and/or synchronization between orders in the same trip and/or the given sequences' constraints (according to the partial decision), it returns back to the first assigned order that its starting time could have caused the state's infeasibility.
Then, it postpones only as much as necessary this order and some of the orders that followed it (along the DP) as well as some of the trips, so that the infeasible state becomes feasible (if possible).
This means that the final result of the PDFT is that all trips depart as soon as possible given state $S_k$ and the partial decision, which minimizes the delay. 
We design stopping conditions when no feasible solution exists and observe that the PDFT determines infeasibility of a partial decision if and only if it is infeasible, and otherwise, it solves the ATP to optimality. In our computational experiments, we limit the number of iterations to 25, however, in about $99\%$ of cases the PDFT terminates earlier, either because infeasibility was confirmed or because a feasible decision was found. \rev{A detailed analysis of the functionality of PDFT can be found in Appendix~\ref{sec: analysis heuristic}}.

\subsection{Decision Evaluation}
\label{sec: LNS Solutions Evaluation}

Ideally, we would like to evaluate each feasible decision $x_k$ in the LNS exactly, i.e., with respect to the sum of its immediate cost $D^\Delta_k(S_k, x)$ and the expected cost-to-go of the post-decision state, 
$\mathcal{V}(S_k^x)$. 
While the immediate cost can be computed directly as defined in Equation~\ref{eq: immediate cost}, the value function is unknown and may only be approximated. 
As decisions are evaluated not only in every state but also in every iteration of the LNS, it is crucial that the approximation of the value function is computationally fast. 
For that reason, we use a VFA given by a neural network $\widehat{\mathcal{V}}_\omega$, using a set of features associated with the post-decision state.
These features are based on aggregated values of the post-decision state.
In our case, the aggregated post-decision states are defined by summary statistics reflecting our available resources, i.e., the current status of cooks and vehicles (see Appendix~\ref{app: features}). The aggregation of the post-decision state is required for two reasons. First, the dimension of the post-decision state can be vast and, therefore, impede the approximation accuracy of the VFA, a phenomenon known as \emph{curse of dimensionality}. Second, we later aim to employ the same neural network for instances of different numbers of cooks and vehicles, a practice that is called \emph{transfer learning}, which requires the dimension of the network's input to remain fixed. Thus, aggregation of post-decision states of different size\rev{s} to the same features is required (\rev{In Appendix~\ref{sec: analysis heuristic}, we show the value of transfer learning}). 

The mapping of aggregated post-decision states to their estimated cost-to-go is defined by the neural network's weights $\omega$, fitted once in an extensive offline simulation. 
Afterwards, the fitted VFA $\widehat{\mathcal{V}}_\omega$ can be invoked directly,  when employed in the online simulation to evaluate the cost-to-go of a decision. To fit the network's weights $\omega$, we minimize the expected deviation of the neural network's estimate of the cost-to-go $\widehat{\mathcal{V}}_\omega(\mathcal{F}\big(S_k^x)\big)$ from the observed cost-to-go $\sum_{t=k+1}^K D^\Delta_k$ for each realized post-decision state $S_k^x$. The observed costs-to-go are only known in hindsight, i.e., after simulating the entire day. Thus, we only update the neural network's weights at the end of each simulated day. To do so, we save all observed tuples of aggregated post-decision states and its associated observed costs-to-go in a memory. 
When updating the neural network at the end of a day, we consider all previously observed tuples in the memory (observed during this day and previous days) by \rev{randomly selecting a batch of tuples from the memory}. 
This is done to avoid statistically dependent samples in the update step. Then, we compute the mean deviation (as given by the mean squared error) of the network's approximated value and the observed value. The gradient of the deviation with respect to our weights $\omega$ serves as a search direction to update the weights $\omega$ and, therefore, to improve the quality of our VFA. We repeat this offline simulation until the neural network's approximation quality converges. We then use the approximated values in the Bellman Equation when searching the decision space via LNS. In the special case of the last, deterministic decision point at time $T^c$, the future cost-to-go is zero and therefore, we set the value function to zero as well. We describe the details of the neural network \rev{features and} architecture and its update procedure in Appendix~\ref{app: architecture}.

\rev{The described LNS procedure to update the planned schedules $\big(\Psi_k, t^s_k, \Theta_k, t^d_k\big)$, without evaluation via VFA, is used in Section~\ref{sec: test heuristic} as a second benchmark, that we refer to as \emph{Integrated}. That is, the evaluation of each decision by this benchmark is made with respect to the immediate cost $D^\Delta_k(S_k, x)$ only.}

\section{Numerical Study} \label{sec:evaluation}
\rev{In this section, we describe instances and benchmark policies, analyze the performance of our method as well as the value of ghost kitchens, and derive managerial insights.}

\subsection{Instances}

In the following, we give an overview of our instances. 
For all experiments, we assume service in Iowa City with customer locations from \cite{ulmer2021restaurant}. The ghost kitchen hosts $|J|=5$ restaurants and is positioned in the center of Iowa City. Vehicles travel in the OpenStreetMap-network with free floating travel times \citep{boeing2017osmnx}. We assume orders can be placed the entire day ($T^c=1440$), however, they usually accumulate around lunch and dinner times. For that reason setting $T=T^c+120$ is sufficient in our experiments. Following the literature, we assume two demand peaks around lunch and dinner time. The number of lunch and dinner orders is given by random variables $OD$ and $OL$, each following a normal distribution. We assume that during lunch time, more orders are placed in the city center while during dinner time, orders are more frequent in residential areas. \rev{We provide detailed information on how we sample instances in Appendix~\ref{app:instance_generation}.}

Orders are equally likely for all restaurants. \rev{The preparation times of orders are drawn from a log-normal distribution mimicking a long tail behavior observed in practice \citep{mao2022demand}.} We assume decreasing expected preparation times over the five restaurants in steps of one minute from 10 to 6 minutes mimicking different cuisines or efficiency between the restaurants. We assume a delivery promise of $\tau=30$ minutes, a freshness constraint of $\delta_f=20$ minutes for all food types $f\in F$ and a capacity of the vehicles of $\kappa=3$ orders (we note that in our experiments, the capacity is rarely reached due to the freshness constraint).

We define three main instance settings. In the \emph{Small} instance setting, we assume $|C_j|=1$ cook for each restaurant $j$. We further assume $|V|=5$ vehicles. The expected number of orders during lunchtime is 64 and 100 for dinner time. For the \emph{Medium}  setting, we keep the resources the same, but increase demand by $25\%$. For the \emph{Large}  settings, we double the numbers of cooks and vehicles. We also double the demand compared to the Medium instances. For all settings, we create 300 days of order realizations for evaluation. 




\subsection{Tuning and Benchmark Methods}

We train our VFA on the Small instances over 10,000 simulations on realizations different than the 300 used for evaluation. We observe convergence after 4,000 simulations. In the training and implementation, we set the number of LNS-iterations to 70 for each state. We fine-tune the VFA each for the Medium and Large instances with additional 1,000 samples. 

We create four benchmark policies, two are problem-oriented and two are method-oriented. \rev{The latter can be found in Appendix~\ref{sec: analysis heuristic}.} The two problem-oriented benchmarks follow the concepts of \cite{rijal2023} and \cite{d2023integrated} for combined optimization of warehouse operations and vehicle routing, respectively. 

\begin{itemize}
    \item \emph{FIFO}: This policy implements the first-in-first-out starting solution in every state. It  therefore solves the cooking part first and then the routing part. As the FIFO solution is effective when considering the cooks in isolation and ignoring vehicles, it can be seen as the sequential approach proposed in \cite{rijal2023}.
    \item \emph{Integrated}: This policy considers cooking and routing together, however it does not anticipate future developments. It applies the same LNS as our policy, but searches for the decision that minimizes immediate delay.
\end{itemize}



\subsection{Policy Comparison}\label{sec: test heuristic} 

First, we compare our policy with the problem-oriented benchmarks. Table~\ref{tab:results basic instances} summarizes the results of our suggested solution method on the Small, Medium and Large instances compared to the two problem-oriented benchmark policies. The table lists (1) the instances type; (2) the key performance indication (KPI) examined \rev{(\enquote{late orders} refer to orders with a positive delay)}; (3)-(5) the tested policies; (6  \& 7) the average improvement of our \textit{AI} heuristic over \textit{FIFO} and \textit{Integrated}, respectively, for the corresponding instance type and KPI. All values are rounded to one digit but the improvement is calculated on the original values.

\rev{Depending on the instance and method, the average delay values range between 4.9 and 26.8 minutes which falls within ranges observed in practice \citep{mao2022demand}.}
We observe that our policy improves all KPIs for all three instance settings. For the Small instances, the average delay can be reduced substantially by $17.0\%$ compared to \textit{Integrated}, and even by $106.5\%$ compared to \textit{FIFO}. The same tendency can be observed for the Medium and Large instances even though the improvements are less extreme with $9.4\%$, and $68.1\%$ for the Medium Instances and $18.5\%$, and $42.2\%$ for the Large Instances.

\begin{table}[!t]
\caption{Average results for the Small, \rev{Medium} and Large instances}\label{tab:results basic instances}
\centering
\footnotesize
\begin{tabular}{llrrrrrrr}
\hline
Instances &
  KPI &
  \multicolumn{1}{l}{\textit{FIFO}} &
  \multicolumn{1}{l}{\textit{Integrated}} &
  \multicolumn{1}{l}{\textit{AI}} &
  \multicolumn{1}{l}{\begin{tabular}[l]{@{}l@{}}\% imp. over \\ \textit{FIFO}\end{tabular}} &
  \multicolumn{1}{l}{\begin{tabular}[l]{@{}l@{}}\% imp. over \\ \textit{Integrated}\end{tabular}} \\ \hline
\multirow{7}{*}{Small} &
  Avg. delay (in min.) &
  10.1 &
  5.8
  & 4.9
  & 106.5  
  &  17.0 
 \\ &
  Perc. of late orders (in \%) &
  43.4 &
36.4
  & 34.3
  &  26.4 
  & 6.1  
 \\ &
  Avg. delay of late orders (in min.) &
  22.2 &  14.7
  & 13.3
  &  66.8 
  &   10.8
 \\ &
  Max. delay  (in min.) &
  52.6 &
45.4
  & 44.0
  & 19.5  
  &   3.1
 \\ &
  Avg. click to door  (in min.) &
  34.7 &
 29.9
  & 29.3
  &  18.5 
  &  2.1 
 \\ &
  Avg. number of orders/trip &
   1.3 & 
 1.3 
   & 1.4
  &   6.7
  & 3.0  
 \\ &
 Avg. total travel time (in min.) &
 2543.6   & 
2476.5 & 
2416.8   & 
 5.2 &  
2.5  &   
 \\ \hline
\multirow{7}{*}{Medium} &
  Avg. delay (in min.) 
  & 26.8  
  &   17.5 
  & 15.9 
  &   68.1
  & 9.4  
  \\  
 & Perc. of late orders (in \%) &  60.8 
  &  54.9 
  &   54.5
  &  11.6 
  &   0.8
  \\  
 & Avg. delay of late orders (in min.)
  &   44.0
  &  31.5 
  &  29.0 
  &  51.9 
  &   8.9
  \\  
 & Max. delay  (in min.) 
 &   101.4 
  & 82.2  
  &  78.4
  &  29.3 
  & 4.8  
  \\ 
 & Avg. click to door  (in min.)
  &   53.2
  &  43.5 
  &  42.2
  &  26.2 
  &  3.2 
  \\ 
 & Avg. number of orders/trip 
  &   1.3
  & 1.4  
  &   1.4
  &   10.5
  & 4.2  
  \\ &
  Avg. total travel time (in min.) &
 3154.9   & 
3020.0 & 
 2940.5  & 
7.3  &  
2.7  &   
 \\ \hline
\multirow{7}{*}{\begin{tabular}[c]{@{}l@{}}Large 
\end{tabular}} &
  Avg. delay (in min.) 
  &  15.3 &
12.8  
  & 10.8
  &  42.2 
  &  18.5 
 \\ &
  Perc. of late orders (in \%) 
  &  46.8 &
 44.4
  & 44.1
  & 6.0  
  &  0.7 
 \\ &
  Avg. delay of late orders (in min.) 
  &  32.5 &
  28.4
  &24.0
  & 35.5  
  &   18.3
 \\ &
  Max. delay  (in min.) 
  &  69.8 &
  64.8
  &60.5
  &  15.3 
  &   7.1
 \\ &
  Avg. click to door  (in min.) 
  &  40.3 &
 37.7
  &36.0
  & 12.0  
  &  4.7 
 \\ &
  Avg. number of orders/trip 
  &  1.5 &
  1.5 & 
  1.5
  & 4.9  
  &  4.2 
 \\ &
Avg. total travel time (in min.) &
  5926.7  & 
5869.3 & 
5679.1   & 
 4.4 &  
3.4  &   
 \\ \hline
 \end{tabular}
\end{table}

The stepwise increase in solution quality from \textit{FIFO} to \textit{Integrated} to our policy confirms the effectiveness of both parts of our policy, the LNS-part for searching and the VFA-part for evaluating the decision space. The already large difference between \textit{FIFO} and \textit{Integrated} shows the value of joint optimization of scheduling and routing. The superiority of our policy for the Medium and Large instances also confirms the success of our transfer learning approach. In Appendix~\ref{app:A_results}, we further show that while the average delay depends on the net travel time from the ghost kitchen to the customer, our policy improves this KPI for customers across the city. It also shows that the average delay is linked to the restaurant's performance. Longer expected preparation times result in more delays for the respective customers of the restaurant.

While the objective of our problem and methodology is to minimize delay, we also observe significant improvements in other KPIs. Our method reduces the percentage of customers experiencing delay, the average delay, and the maximum delay a customer experiences per day. Thus, we have fewer dissatisfied customers and these customers are dissatisfied less. We also observe that the average click to door time decreases, thus, customers receive their food faster. This result is not self-evident as \cite{ulmer2021restaurant} showed that optimization for delay may trade fast deliveries for fewer late deliveries. Finally, we observe that our policy increases the average bundle size and reduces the average travel times by several percentages. 
Here, there is a significant difference between \textit{AI} and \textit{Integrated}. When looking closer at the decision making, we observed that the main differences in bundling between the two policies occur during the lunch and dinner peak times. That is, the anticipation of \textit{AI} allows preparing for the peaks and, consequently, leads to more effective decision making during the peaks. \textcolor{safeblue}{For a closer analysis of the workforce utilization, we refer to Appendix~\ref{app:A_util} where we show that \textit{AI} leads to a more effective utilization especially at the beginning of peaks and to a substantially smaller backlog at the end of the peaks.} 
In essence, the reduction in travel times and the increase in bundling opportunities does not only come from an elaborate search of the decision space, but also from the anticipation of future orders.

\color{safeblue}
\subsection{The Value of Ghost Kitchens}

Ghost kitchens are operated by the platform and are optimized for on-demand food delivery. The absence of dine-in customers enables the platform to set up ghost kitchens like industrial facilities in order to eliminate uncertainty in food preparation times. Since all restaurants share the same location, the potential of consolidated deliveries is increased. Further, the platform can schedule when and by which cook each order is prepared to fully synchronize the preparation and delivery processes. The preparation schedule is continuously adapted by the platform to react to new orders. In the following experiment, we quantify the impact of a ghost kitchen compared to relying on dine-in restaurants that participate as stakeholders in an on-demand meal delivery operation. 

The experiment is based on our previous three instances which we evaluated in Section~\ref{sec: test heuristic}. We use the same realization of demand including customer locations, order times, food type and preparation times. However, each food type is now prepared by a corresponding independent restaurant that is located within Iowa City. The locations of the restaurants are obtained by a $k$-medoid clustering of real restaurant locations in Iowa City, where $k$ is given by the number of food types. We add Gaussian noise (with a mean of 0 and a variance of 20\% of the expected preparation time) to the preparation times in order to represent that traditional restaurants are affected by disturbances in their preparation process. The restaurant's independence also implies that we cannot adjust the restaurants preparation schedule to synchronize between different restaurants (e.g., to enable bundling of orders from different restaurants). Therefore, each restaurant has one preparation queue for every given cook. Orders are assigned to the cook with the shortest queue and are prepared in first-in-first-out manner. The schedule cannot be modified ex-post. This implies that we cannot guarantee the freshness constraint and ignore it for this experiment, for a fair comparison, also for the ghost kitchen. The delivery process is handled by a fleet of vehicles operated by the platform. We use the assignment and routing policy of \cite{ulmer2021restaurant} combined with a repositioning strategy in which vehicles return to the closest restaurant if they are idle.

We summarize our results in Table~\ref{tab: results ghost vs multiple restaurants}. The table lists the instance type (1), the reported KPIs including the mean freshness of orders (2) for the setting of one ghost kitchen without freshness constraints using the \textit{AI} policy (3), for the same setting but using the \textit{FIFO} policy (4), and the setting of independent dine-in restaurants (5), as well as the ghost kitchen's (using the \textit{AI} policy) improvement on the setting of independent dine-in restaurants for each KPI (6).

\begin{table}[!t]
\color{safeblue}
\centering
\caption{Comparison of one ghost kitchen with multiple, independent restaurants}\label{tab: results ghost vs multiple restaurants}
\footnotesize
\begin{tabular}{llrrrr}
\hline
Instances &
  KPI &
  \multicolumn{1}{l}{\textit{\begin{tabular}[c]{@{}l@{}}Ghost Kitchen\\ (AI)\end{tabular}}} &
    \multicolumn{1}{l}{\textit{\begin{tabular}[c]{@{}l@{}}Ghost Kitchen\\ (FIFO)\end{tabular}}} &
  \multicolumn{1}{l}{\textit{\begin{tabular}[c]{@{}l@{}}Independent\\ Restaurants\end{tabular}}} &
  \multicolumn{1}{l}{\begin{tabular}[c]{@{}l@{}}\% imp. over\\ \textit{Independent Restaurants}\end{tabular}} \\ \hline
\multirow{8}{*}{Small}  & Avg. delay (in min.)                & 4.5 &  6.4 & 7.0    & 55.6 \\
                        & Perc. of late orders (in \%)        & 34.6 & 43.4 & 47.6   & 37.6 \\
                        & Avg. delay of late orders (in min.) & 12.0 & 13.8  & 14.6   & 21.7 \\
                        & Max. delay (in min.)                & 41.4 & 42.7  & 63.2   & 52.7 \\
                        & Avg. click to door (in min.)        & 29.1 & 31.2  & 32.6   & 12.0 \\            & Avg. freshness (in min.)            & 15.7  & 17.8 & 19.3   & 22.9 \\
                        & Avg. number of orders/trip          & 1.5  & 1.8 & 1.3    & 13.3 \\
                        & Avg. total travel time (in min.)    & 2389.2 & 2304.1 & 2672.5 & 11.9 \\ \hline
\multirow{8}{*}{Medium} & Avg. delay (in min.)                & 13.1 &  16.2 & 13.8   & 5.3  \\
                        & Perc. of late orders (in \%)        & 52.5 & 60.4  & 62.2   & 18.5 \\
                        & Avg. delay of late orders (in min.) & 24.2  & 26.2 & 22.1   & -8.7 \\
                        & Max. delay (in min.)                & 75.7  & 71.1 & 108.9  & 43.9 \\
                        & Avg. click to door (in min.)        & 39.3 & 42.7  & 40.8   & 3.8  \\
                        & Avg. freshness (in min.)            & 22.6  & 25.9 & 23.8   & 5.3 \\
                        & Avg. number of orders/trip          & 1.7   & 2.0 & 1.5    & 11.8 \\
                        & Avg. total travel time (in min.)    & 2902.8 & 2770.9 & 3014.3 & 3.8  \\ \hline
\multirow{8}{*}{Large}  & Avg. delay (in min.)                & 7.8  & 9.7  & 8.5    & 9.0  \\
                        & Perc. of late orders (in \%)        & 43.8 &  44.8 & 52.0   & 18.7 \\
                        & Avg. delay of late orders (in min.) & 17.4 &  21.2 & 16.3   & -6.3 \\
                        & Max. delay (in min.)                & 51.4  & 51.8 & 89.2   & 73.5 \\
                        & Avg. click to door (in min.)        & 33.4  & 34.9 & 34.7   & 3.9  \\
                        & Avg. freshness (in min.)            & 16.6   & 15.6 & 21.5   & 29.5 \\
                        & Avg. number of orders/trip          & 1.8  & 1.8 & 1.5    & 20.0 \\
                        & Avg. total travel time (in min.)    & 5417.3 & 5540.2  & 6057.8 & 11.8 \\ \hline
\end{tabular}
\end{table}

We observe that a ghost kitchen benefits customers and vehicle drivers. It benefits customers because it yields a higher service quality than a setting with independent dine-in restaurants. This manifests in a lower average delay and fresher food, but most prominently, in a drastically lower percentage of delayed deliveries as well as in a significantly lower (mean) maximum delay over the simulated days. 

The average delay of late orders is the the only KPI that worsens (for the Medium and Large instances). However, this might be a statistical artifact due to having fewer delayed orders.
Furthermore, a ghost kitchen benefits drivers as it reduce their daily average travel times (in minutes) by increasing the number of bundled orders compared to the setting of independent restaurants. The increase in bundles can be expected due to the single restaurant location and the optimization of food preparation. The savings in travel time usually also benefit the platform, dependent on the applied compensation scheme. We further observe that the described benefits are only achieved if the ghost kitchen is operated in an efficiently and anticipative manner. When using the \textit{FIFO} policy to schedule the meal preparations and vehicle routes, we observe higher mean delays than for the case of independent restaurants in both the medium and large instances.


\color{black}
 \subsection{Problem Analysis} \label{sec: problem analysis}

 
 In this section, we perform a sensitivity analysis and study the value of collaboration. In our main experiments for the Large setting, we assume a freshness constraint of $\delta_f=20$ minutes, a delivery promise of $\tau=30$ minutes, $|C_j|=2$ cooks per restaurant, $|V|=10$ vehicles, moderate standard deviations $\sigma(t^p_i\mid f_i=f)$ in preparation times for all customers $i\in I$ and food types $f\in F$, and expected demand values of $\mathbb{E}[OL]=160$ and $\mathbb{E}[OD]=250$. We now vary the parameters individually and analyze their impact on the KPIs. The results of our \textit{AI} policy are shown in Table~\ref{tab: Parameter analysis}.

 \paragraph{Freshness constraint.} The first two instances, L1 and L2, increase and decrease the freshness constraint by five minutes, respectively. We observe that the average delay differs significantly between the two instances. One reason for the difference is that with a more relaxed freshness constraint, the number of orders per trip can be increased significantly (1.2 to 1.5. and even 1.8). This consolidation frees vehicle resources and allows for faster deliveries. This illustrates a trade-off for restaurants and ghost kitchens. If customers want even fresher food, they likely pay for it by longer waiting. We further note that average and maximum delay values are also indicators for overall working times for cooks and vehicles. Thus, to ensure fresh food, the platform likely has to pay their employees longer.

\begin{table}[!t]
\caption{Sensitivity Analysis}\label{tab: Parameter analysis}
\centering
\small
\begin{tabular}{llccrrrrr}
\hline
\multirow{2}{*}{Instance} &
  \multirow{2}{*}{Parameter} &
  \multicolumn{1}{l}{\multirow{2}{*}{\begin{tabular}[c]{@{}l@{}}Original \\ Value\end{tabular}}}&
    \multicolumn{1}{l}{\multirow{2}{*}{\begin{tabular}[c]{@{}l@{}}Updated \\ Value\end{tabular}}}&
  \multicolumn{3}{c}{Delay (in min.)} &
  \multicolumn{1}{l}{\multirow{2}{*}{\begin{tabular}[c]{@{}l@{}}\rev{\% of} \\ \rev{late orders}\end{tabular}}} &
  \multicolumn{1}{l}{\multirow{2}{*}{\begin{tabular}[c]{@{}l@{}}\rev{Avg. number} \\ \rev{of orders/trip}\end{tabular}}} \\
 &
   &
   &
   &
  \multicolumn{1}{l}{Avg.} &
  \multicolumn{1}{l}{Max.} &
  \multicolumn{1}{l}{Avg. late orders} &
  \multicolumn{1}{l}{} &
  \multicolumn{1}{l}{} \\ \hline
L1    & $\delta_f,  \forall f\in F$ &10& 15         & 24.1 &99.5 &45.6 & 53.1 & 1.2 \\
\smallskip L2    & $\delta_f,  \forall f\in F$ &10& 25 & 7.8 & 51.4 & 17.4 & 43.8 & 1.8 \\
L3    & $\tau$ &30 & 25 & 12.9 & 64.8 & 23.3 & 55.0 & 1.5 \\ 
\smallskip L4    & $\tau$ &30  & 35 &8.7 & 55.0 & 22.4 &37.4 & 1.5 \\
L5    & $|C_j|,  \forall j\in J$ &2& 1 & 51.7 & 262.5 & 71.0 &72.8 & 1.3 \\
\smallskip L6    & $|C_j|,  \forall j\in J$ &2 & 3 &10.6 & 60.0 & 26.0 &39.7 & 1.5 \\
L7    & $|V|$ &10 & 7 & 53.6 & 190.0 & 74.8 & 71.7 & 1.5 \\
\smallskip L8   & $|V|$ &10 & 13 & 3.1 & 41.4 & 11.4 & 25.1 & 1.5 \\
L9    & $\sigma(t^p_i)$&$\sigma(t^p_i)$& 0 & 9.9 &54.4 &23.9 &40.7 & 1.6 \\
\smallskip L10   & $\sigma(t^p_i)$&$\sigma(t^p_i)$&$2\cdot\sigma(t^p_i)$&18.4 & 119.2 & 34.4 &53.2 &1.5 \\
L11   & $\mathbb{E}[OL]$, $\mathbb{E}[OD]$ & 160,250 &144, 225 & 6.4 & 44.8 & 16.7 & 36.4 &1.5 \\
L12   & $\mathbb{E}[OL]$, $\mathbb{E}[OD]$ & 160,250 & 176, 275 & 19.9 & 88.2 & 38.0 & 52.2 & 1.5 \\
\hline
\rev{\textbf{Large}} & &  & & \rev{\textbf{10.8}} & \rev{\textbf{60.5}} & \rev{\textbf{24.0}} & \rev{\textbf{44.1}} & \rev{\textbf{1.5}} \\ \hline

\end{tabular}
\vspace{-2mm}
\end{table}

\paragraph{Delivery promise.} Instances L3 and L4 increase and decrease the delivery promise by 5 minutes, respectively. The difference in delivery promises between the two instances is almost equal to the difference in observed maximum delay (9.8 minutes) for the two instances. The difference in average delay is about 6.4 minutes, thus, smaller than 10 minutes, because in L4 some deliveries were made earlier than the deadline.

 \paragraph{Resources.} Instances L5 to L8 vary the number of cooks (L5, L6) and the number of vehicles (L7, L8). The results indicate, as expected, that reducing any resource leads to a tight bottleneck and unacceptably high delays while increasing any resources improves the service quality. Notably, when the number of cooks are reduced in L5, the number of orders/trip decrease as well, from 1.5 to 1.3. 
While bundling increases efficiency in the vehicle utilization, it likely reduces efficiency for the cooks due to the synchronization requirements for orders that share trips. Our method recognizes that cooks are the bottleneck and dispatches vehicles directly more often instead of enforcing consolidation.
 In essence, the ghost kitchen provider should carefully balance the two resources required for operations as a lack in one can hardly be compensated by the other.

\paragraph{Preparation times.} Instances L9 and L10 vary the standard deviation of preparation times, either by setting it to zero or by doubling it. We observe that the value of no variance in preparation time (9.9 average delay) is only slightly smaller compared to our standard instance (10.8 average delay). \rev{This is remarkable since this indicates that the long tail values of the preparation times are not the main reason for longer delays. Rather, delay results from the complex  dynamics of our problem.} When the variance is large, the system collapses (40.0 average delay). This indicates that the system can handle slight variances in preparation times, however, it is very important for the ghost kitchen provider to keep variations manageable. 

\paragraph{Demand.} Instances L11 and L12 decrease and increase demand by 10\%. As expected with less demand, average delay decreases and with more demand, average delay becomes significantly worse. Thus, accurate demand predictions are crucial to manage resources accordingly.

\begin{table}[!t]
\caption{The value of collaborating cooks and vehicles.}\label{tab: collaboration_cooks}
\centering
\small
\begin{tabular}{lrrr r rr}
\hline
                                    & \multicolumn{3}{c}{Cooks} & &\multicolumn{2}{c}{Vehicles} \\
KPI & \multicolumn{1}{l}{No} & \multicolumn{1}{l}{Pairwise} & \multicolumn{1}{l}{Full} & &\multicolumn{1}{l}{Pairwise} & \multicolumn{1}{l}{Full} \\ \hline
Avg. delay (in min.)                & \textcolor{black}{16.7}    & \textcolor{black}{10.8}   & \textcolor{black}{10.0}  &  & \textcolor{black}{34.3}         & \textcolor{black}{10.8}       \\
Percentage of late orders (in \%)        & \textcolor{black}{51.5}    & \textcolor{black}{44.1}   & \textcolor{black}{38.2} &  & \textcolor{black}{51.8}         & \textcolor{black}{44.1}       \\
Avg. delay of late orders (in min.) & \textcolor{black}{32.2}    & \textcolor{black}{24.0}   & \textcolor{black}{25.2} &  & \textcolor{black}{65.8}         & \textcolor{black}{24.0}      \\
Avg. click to door (in min.)        & \textcolor{black}{42.8}    & \textcolor{black}{36.0}   & \textcolor{black}{34.4}  & & \textcolor{black}{}60.5         & \textcolor{black}{36.0}       \\
Max. delay  (in min.)               & \textcolor{black}{89.7}    & \textcolor{black}{60.5}   & \textcolor{black}{54.6} &  & \textcolor{black}{181.8}        & \textcolor{black}{60.5}       \\
Avg. number of orders/trip          & \textcolor{black}{1.5}     & \textcolor{black}{1.5}    & \textcolor{black}{1.5}   & & \textcolor{black}{1.3}          & \textcolor{black}{1.5}       \\ \hline

\end{tabular}
\end{table}

Finally, we analyze the value of shared resources between the restaurants. In our initial setting, we assumed full collaboration amongst the vehicle fleet and pairwise collaboration of two cooks per restaurant. We now vary the collaborations between no, pairwise, and full for the two types of resources individually. The results are shown in Table~\ref{tab: collaboration_cooks}. No collaboration between cooks means that instead of five food types with two cooks each, we assume ten food types with single cooks. We observe that in this case, delay increases. Thus, even the pairwise collaboration is already very beneficial. Delay reduces even further, but only slightly, for the full collaboration case where every cook can prepare every food (essentially one giant restaurant). The limited improvement is likely because the vehicles become the main bottleneck in that case. For vehicles, we observe a similar, but more severe result. Delay goes up and orders per trip go down. Having dedicated vehicles per restaurant is undesirable as one major advantage of the ghost kitchen is lost, the central, anticipatory preparation scheduling and joint delivery for different restaurants. This experiment reveals two main insights: First, a careful balance of the resources is required for effective operations. Second, the central dispatching of a joint fleet is one of the major advantages of a ghost kitchen.

\color{safeblue}
\section{Discussion} \label{sec: Discussion}
\rev{In this paper, we presented a novel solution method to synchronize between the first (preparation) and second (dispatching) stages of a ghost kitchen's operation. Our solution method consists of several components whose usefulness}
are not limited to the RMD-GK, but can also be used in other problems that have a large decision space,
and/or real time decision making under uncertainty.
For instance, the notion of a condensed decision representation can be used to reduce the search space (ideally without loss of optimality) in problems in which homogeneous resources are considered, e.g., vehicle fleet management \rev{and parallel machines scheduling}.
Similarly, searching over partial decisions and determining timing by a separate algorithm can be a successful approach for various scheduling and assignment problems.
This is the case when the starting times of tasks can be defined as a sub problem of the scheduling problem, which may be easier to solve given the assignment of tasks at the resources. This motivates searching over assignment decisions only and determining the starting times according to a separate algorithm.
\rev{Finally}, our methodology can be used in other complex problems \rev{in which two sequential stages need to be coordinated}, for example, when facing joint dynamic optimization of warehouse operations and vehicle routing \citep{rijal2023}. \rev{In such cases, infeasibility  often arises due to the need to synchronize the stages. Analytical analysis that excludes from consideration a significant number of infeasible solutions may greatly contribute to the solution method's efficiency, 
as demonstrated for the RMD-GK.
Our numerical study reveals that synchronization achieves a significant improvement in all important KPIs and that interesting and insightful connections exist between the availability and use of the resources allocated to the two stages. This is, again, a promising direction for future research for other problems with similar characteristics.}

\color{black} 
\section{Conclusion \rev{and Future Work}} \label{sec: conclusion}

In this work, we have analyzed the concept of ghost kitchens and have shown how anticipatory and integrated optimization of scheduling and routing can improve operations and customer experience. There are several avenues for future research in problem and method.
\rev{In our work, we have shown that operating a ghost kitchen has several advantages compared to conventional meal delivery systems by analyzing both systems in isolation, each with a dedicated fleet. Future work may investigate hybrid systems of both regular restaurants and ghost kitchens or how ghost kitchens can be implemented with crowdsourced delivery services.}
Further, we have assumed that customers still order from one restaurant only. However, an additional advantage of ghost kitchens might be that customers may order from different restaurants and receive their food at the same time \citep{arslan2021operational}. 
Future research may extend the presented model and methodology to allow for multiple orders and analyze its impact on the system's performance, since such an option may lead to less flexibility in the ghost kitchen and longer delivery times. 
We have further shown that the freshness constraint has a significant impact on the delivery times. Future research may exploit this insight, for example, by dynamically relaxing the constraint during peak times. Our results also indicate that the collaboration amongst cooks can be beneficial. 
The setup and assignment of cooks may deserve a closer investigation in the future. For example, some \enquote{allround} cooks may be assigned dynamically between the restaurants (or even to vehicles) based on realized and expected demand. 
We have further seen that the variance in preparation times of the restaurants determine success or failure of the entire operation. Future work may investigate this in more detail, e.g., by analyzing how a single restaurant's performance impacts the service quality of the other restaurants or by modeling preparation time uncertainty. \rev{Moreover, exploring the fairness between customers, cooks, and vehicles could be an intriguing avenue for future research, potentially leading to an even more balanced and equitable system-wide performance.}

In our work, we have presented a method that searches and evaluates complex scheduling and routing decisions in an integrated fashion. There are a variety of possible extensions. In our method, we have condensed the decision space to search by using a condensed partial decision representation, and if feasible, created the full decision for evaluation via our VFA. 
In the future, VFA-features may be developed to allow evaluation already on the condensed partial decision level. 
While our PDFT-algorithm was able to identify feasibility in the first iterations, future research may combine our analytical checks with checks based on machine learning \citep{van2022machine}. Another extension could be to use the trained neural net for guiding the search within the LNS, e.g., by prescribing the operators to use in a state.

\section*{Acknowledgments}
We would like to thank Julius Hoffmann for his valuable advice on the literature on stochastic dynamic customer order scheduling. Gal Neria's research is partially supported by the Israeli Smart Transportation Research Center (ISTRC) and the Council for Higher Education in Israel (VATAT). Florentin Hildebrandt's research is funded by the Deutsche Forschungsgemeinschaft (DFG, German Research Foundation), project 413322447.  Michal Tzur’s research is partially supported by the Israeli Smart Transportation Research Center (ISTRC). Marlin Ulmer's work is funded by the DFG Emmy Noether Programme, project 444657906. We gratefully acknowledge their support.

\bibliographystyle{apalike}  
\bibliography{main}

\FloatBarrier
\newpage
\begin{appendices}

\setcounter{table}{0}
\setcounter{section}{0}
\setcounter{equation}{0}
\setcounter{figure}{0}
\setcounter{lemma}{0}
\renewcommand{\thetable}{A\arabic{table}}
\renewcommand{\theequation}{A\arabic{equation}}
\renewcommand{\thesubsection}{A.\arabic{subsection}}
\renewcommand{\thesection}{A.\arabic{section}}
\renewcommand{\thefigure}{A\arabic{figure}}

\section{Decision Space}\label{app: decision space}
In the following, we formally define the decision space for a state $S_k$ with the help of a mixed integer linear program. We define the auxiliary decision variables and constraints of the MILP before we explain how the actual decision $x_k\in\mathcal{X}(S_k)$ can be recovered from a solution to the MILP.\\
We define the following decision variables.
    Let $z_{ict}\in\{0,1\}$ denote whether cook $c\in C_{f}$ starts preparing 
    order $i\in I_{kf}$ in time period $t_k\leq t\leq T, t\in\mathbb{N}$.
    Let $y_{ivt}\in\{0,1\}$ denote whether vehicle $v\in V$ departs the ghost kitchen at time $t$ to deliver order $i\in I_k$. Let $t^a_{i}\in[t_k,T]$ denote the arrival time at the destination of order $i\in I_k$. We define $t^r_{vt}\in[0,T]$ as the return time of vehicle $v$ which was dispatched in time $t$ ($t^r_{vt}$ has no significance if vehicle $v$ does not dispatch in time $t$). 
    Let $q_{ijt}\in\{0,1\}$ specify whether order $i\in I_k$ is sharing a trip that departs at $t\in[t_k, T]$ with order ${j\in I_k}$ and $j$ is the subsequent stop in the trip after $i$.     
Then, the corresponding decision space is given by 
\begin{alignat}{3}
&\displaystyle \sum_{v\in V}\sum_{t=t^o_i}^T y_{ivt} = 1 & i\in I_k \label{con:1}\\
&\displaystyle \sum_{c\in C_f} \sum_{t=t^o_i}^T z_{ict} = 1 & i\in I_k \label{con:2}\\
&\displaystyle 
\sum_{i\in I_{kf}}  z_{ict}  
\leq 1 & f \in F, c \in C_f , t_k\leq t\leq T  \label{con:3}\\
&\displaystyle 
\sum_{c\in C_f}\sum_{t=1}^T t\cdot z_{ict} + t^p_i\cdot z_{ict}
 \leq \sum_{v\in V}\sum_{t=1}^T t\cdot y_{ivt} 
& i\in I_k  \label{con:4}\\
&\displaystyle 
t^a_i - \sum_{c\in C_f}\sum_{t=1}^T (t+ t^p_i)\cdot z_{ict} \leq \delta_f
& f\in F, i\in I_{kf}  \label{con:5}\\
&\displaystyle (y_{ivt}-1)\cdot T +  t^a_i + t^t_{i0} \leq t^r_{vt} 
& i\in I_k,v \in V, t_k \leq t \leq T 
  \label{con:6}\\
&\displaystyle 
t^r_{vt^\prime} 
\leq
y_{ivt}\cdot t +T(1-y_{ivt})
& i \in I_k, v \in V, t_k \leq t^\prime < t \leq T
  \label{con:7}\\
&\displaystyle 
\sum_{i\in I_k} y_{ivt} 
\leq \kappa
& v \in V, t_k\leq t \leq T  \label{con:8}\\
& \sum_{i\in I_k} y_{ivt} - 1 \leq \displaystyle \sum_{i\in I_k}\sum_{\substack{j \in I_k\\ j\neq i}} q_{jit}
& v \in V, t_k\leq t \leq T \label{con:9} \\
&\displaystyle 2\cdot q_{j i t}  \leq y_{ivt} +y_{jvt} & i,j\in I_k,v \in V, t_k\leq t \leq T  \label{con:10}
\\
&\displaystyle  ({q}_{ijt} -1)\cdot T +t^t_{ij} \leq t^a_j  - t^a_i
& i,j\in I_k, i\neq j, t_k\leq t \leq T  \label{con:11} \\
&\displaystyle t\cdot y_{ivt} + t^t_{0i} \leq t^a_i & i\in I_k,v \in V, t_k\leq t\leq T  \label{con:12} \\
& \displaystyle 
\sum_{j\in I_{kf}}
({t^\prime} + t^p_j) z_{jc{t^\prime}} 
\leq
T + z_{ict}(t -T)
& f\in F, i\in I_{kf}, c \in C_f, t_k\leq t^\prime < t \leq T
  \label{con:13}\\
&\displaystyle z_{ict} = 0 & i\in I_k, c \in C, t_k\leq t \leq t^o_i  \label{con:14}\\
&\displaystyle y_{ivt} = 0  & i\in I_k,v \in V, t^o_{ik}\leq t\leq t^o_i \label{con:15}\\
&\displaystyle z_{ict}=1 & c\in C, i\in \psi_{kc}, t=t^s_{ik}\leq t_k \label{con:16}\\
&\displaystyle z_{ict} \in \{0,1\} & i\in I_k, c \in C, t_k \leq t \leq T  \label{con:17}\\
&\displaystyle y_{ivt} \in \{0,1\}  & 
i\in I_k,v \in V, t^r_{kv}\leq t \leq T \label{con:18}
\\
&\displaystyle q_{ijt} \in \{0,1\}  
& i,j\in I_k, t_k\leq t \leq T \label{con:19} \\
&\displaystyle t^a_i \in [t_k,T]  
& i\in I_k \label{con:20} \\
&\displaystyle t^r_{vt} \in [t_k,T]  
& v\in V, t\in[t_k,T] \label{con:21}
\end{alignat}

Constraints \eqref{con:1} assert that each order is delivered.
Constraints \eqref{con:2} assert that each order is prepared. 
Constraints \eqref{con:3} assert that each cook can only start producing one order at any time. 
Constraints \eqref{con:4} assert that each order can only be delivered after it has been produced.
Constraints \eqref{con:5} assert that each order must be delivered within the food type's freshness span after it has been prepared. 
Constraints \eqref{con:6} define the return time of each vehicle dispatched at time $t$. Then, Constraints \rev{\eqref{con:7}} assert that subsequent trips must occur later than its return time. 
Constraints \eqref{con:8} assert that the capacity of each vehicle is not violated.
Constraints \eqref{con:9} and \eqref{con:10} fix the sequence of orders within a trip. 
Constraints \eqref{con:11} and \eqref{con:12} set order arrival times and eliminate sub tours. 
Constraints \eqref{con:13} define when a cook is available to prepare the next meal. 
Constraints \eqref{con:14} - \eqref{con:15} set variables to zero if the departure time or preparation time is before the the corresponding order time.
Constraints \eqref{con:16} fix cook assignments and times if the preparation has already started.
Constraints \eqref{con:17} - \eqref{con:21} fix the domains of the variables.

Let the tuple of decision variables $(y, z, q, a)$ be a feasible solution to the MILP. Then, we may recover a decision $x_k=(\Psi_k^x, t_k^{s,x}, \Theta_k^x, t^{d,x}_k)$ as follows.
For a given cook $c\in C$, let $t_c=(t_{cn})_{n\in\mathbb{N}}$ be an ordered sequence of time points with $\sum_{i\in I_k}z_{ict}=1$ for every $t\in t_c$. 
Further, let $i_c=(i_{n}^c)_{n\in\mathbb{N}}$
be the corresponding sequence of orders such that $z_{ictk}=1$ for $i\in i_c, t\in t_c$ and $i,t$ being at the same position in their respective sequences. Then, the cook's schedule is given by $\psi_{kc}^x=i_c$. 
Given all cook schedules, the start of preparation times for each order $i\in I_k$, is given by $t^{s,x}_{ik}=t_{cn}$ with $i_{n}^c=i$. 
This is well-defined as each order is prepared by exactly one cook and only one order is prepared at each given time. We recover vehicle schedules in a similar manner. For a given vehicle $v\in V$, let $t_v=(t_{vn})_{n\in\mathbb{N}}$ be an ordered sequence of time points with $\sum_{i\in I_k}y_{ivt}=1$ for every $t\in t_v$. Then, let $t\in t_c$ be the $m$th time point in the sequence and let $i_{vt}=(i_{vtn})_{n\in\mathbb{N}}$ be the corresponding sequence of orders such that $y_{i_n,c,t,k}=1$ for 
$i_{n}^c\in i_c$
and $q_{i_n, i_{n+1}}=1$. Then, $\theta_{kvm}=i_{vt}$ is the $m$th trip in $\Theta_{kv}^x$ with departure time $t^{d,x}_{k\theta}=t$. Finally, we obtain $\Theta_k^x$ and $t_k^{d,x}$ by repeating the process for all vehicles and their corresponding time sequences.

\rev{
\section{Algorithm Pseudo Codes}\label{app:Algorithm Pseudo Codes}
In this appendix, we present Algorithms \ref{alg:FIFO} and \ref{alg:LNS} that provide pseudo codes for the FIFO and LNS algorithms, respectively, both of which are used in the \textit{Integrated} and \textit{AI} methods.

Furthermore, for the LNS, we specify two hyperparameters that we have used. The first is the number of LNS iterations; for our experiments, we used 70 iterations. However, it is recommended to maximize the number of iterations within the run time constraint. The second hyperparameter is the probability of accepting a new decision, as indicated in the second-to-last row of Algorithm \ref{alg:LNS}. In our experiments, based on trial and error, setting this probability to 0.7 provided the best results.
}

\begin{algorithm}[!t]
\color{safeblue}
\small
\DontPrintSemicolon
\caption{FIFO.}\label{alg:FIFO}
\KwIn{Current state $S_k$, New order $i_k$}
\KwOut{Updated schedule $({\Psi_k}^\prime, {t^s_k}^\prime, {\Theta_k}^\prime, {t^d_k}^\prime)$}
$\big({\Psi_k}^\prime, {t^s_k}^\prime, {\Theta_k}^\prime, {t^d_k}^\prime\big) \leftarrow \text{Planned schedule } 
\big({\Psi_k}, {t^s_k}, {\Theta_k}, {t^d_k}\big) \text{ from } S_k$\;
    $c \gets \text{first cook available } c \in C_{f_{i_k}}$\;
    $t \gets \max(t_k, \text{ availability time of } c$) \tcp*{set the starting preparation time}
    ${\textit{tardiness}}_{i_k} \gets \infty$ \; 
    $\text{Schedule } i_k \text{ at } t \text{ with cook } c$\;
    \For{$v \in V$}{
      \If{$v \text{ has no scheduled trips}$}{
            Continue\;}
     $r \gets \text{last scheduled trip of } v$\;
            \If{\text{number of orders in} $r < \kappa \text{ and its departure time} \geq t + t_i^p$}{
            $\text{Add } i_k \text{ to the trip, consider all feasible sequences within the trip and select the one with minimum total tardiness.}$
             \If{$\text{ a feasible trip was found and the tardiness for } i_k < \textit{tardiness}_{i_k}$}{              $\text{Update } \textit{tardiness}_{i_k}$\;
             $\theta^\prime \gets \textit{ trip } r \textit{ with } i_k \textit{ added to it such that the total tardiness is minimized}$
              }
              }
              }
    \If{$\textit{tardiness}_{i_k} = \infty$}
    {
        $v \gets \text{first vehicle that finishes its last scheduled trip}$\;
        $\text{departure time } \gets \max(t + t_{i_k}^p, \text{ the time }v \text{ finishes its last scheduled trip}$)\;
    $\theta^\prime \gets \text{a new trip consisting of only } i_k$\;
    \If{$\text{departure time} + t^t_{0{i_k}} > t + t_{i_k}^p +\delta_{f_{i_k}} \text{(i.e., the arrival time to }i_k \text{ violates freshness)}$}{
        $t \leftarrow \textit{departure time} + t^t_{0{i_k}} - \delta_{f_{i_k}}\text{ (i.e., delay preparation time to match freshness constraint)}$\;
        }
        }
$\text{Update } ({\Psi_k}^\prime, {t^s_k}^\prime, {\Theta_k}^\prime, {t^d_k}^\prime) \text{ according to } c, t, \theta^\prime,\text{ and }\textit{departure time}$\;
$\text{Return } ({\Psi_k}^\prime, {t^s_k}^\prime, {\Theta_k}^\prime, {t^d_k}^\prime)$\;
\end{algorithm}

\begin{algorithm}[!t]
\color{safeblue}
\small
\DontPrintSemicolon
\caption{The LNS.}\label{alg:LNS}
\KwIn{Current state $S_k$, New order $i_k$}
\KwOut{Updated schedule $({\Psi_k}^\prime, {t^s_k}^\prime, {\Theta_k}^\prime, {t^d_k}^\prime)$}
$(\Psi_k, t^s_k, \Theta_k, t^d_k) \leftarrow \text{Retrieve planned schedule } (\Psi_k, t^s_k, \Theta_k, t^d_k) \text{ from state } S_k$\;
$(\Psi_k, t^s_k, \Theta_k, t^d_k) \leftarrow \text{Update using FIFO with new order } i_k$  \tcp*{Obtain initial decision}
$({\Psi_k}^\prime, {t^s_k}^\prime, {\Theta_k}^\prime, {t^d_k}^\prime) \leftarrow (\Psi_k, t^s_k, \Theta_k, t^d_k)$ \tcp*{Set Best decision}
$(\widehat{\Psi}^x_k, \widehat{\Theta}^x_k) \leftarrow (\Psi_k, t^s_k, \Theta_k, t^d_k)$ \tcp*{Set Current partial decision}
\For{$\text{iter} \in \text{Number of LNS Iterations}$}{
    ${(\widehat{\Psi}^x_k, \widehat{\Theta}^x_k)}^\prime \gets \text{Apply a random operator to modify } \widehat{\Psi}^x_k \text{ or } \widehat{\Theta}^x_k$\tcp*{Set New partial decision}
    $\text{Check feasibility through PDFT and if feasible obtain full decision } ({\Psi_k}, {t^s_k}, {\Theta_k}, {t^d_k})^\prime$\;
    \lIf{$(\widehat{\Psi}^x_k, \widehat{\Theta}^x_k)^\prime \text{ is infeasible}$}{
        $\text{Continue to next iteration}$
    }
    $\text{Evaluate } \textit{New partial decision} \text{ using immediate cost + predicted cost to go (NN-based estimate)}$\;
    \If{$\text{Evaluated cost of } \textit{New partial decision} < \text{ cost of Current partial decision}$}{
        $(\widehat{\Psi}^x_k, \widehat{\Theta}^x_k) \leftarrow{(\widehat{\Psi}^x_k, \widehat{\Theta}^x_k)}^\prime$ \tcp*{Update Current partial decision}
        \If{$\text{New partial decision is the best so far}$}{
            $({\Psi_k}^\prime, {t^s_k}^\prime, {\Theta_k}^\prime, {t^d_k}^\prime) \gets ({\Psi_k}, {t^s_k}, {\Theta_k}, {t^d_k})^\prime$ \tcp*{Update Best decision}
        }
    }
    \Else{
         $(\widehat{\Psi}^x_k, \widehat{\Theta}^x_k) \leftarrow (\widehat{\Psi}^x_k, \widehat{\Theta}^x_k)^\prime \text{ with probability } 0.7$ \tcp*{Update Current partial decision}
    }
}   \vspace{0.1cm}
\text{Return Best decision } $({\Psi_k}^\prime, {t^s_k}^\prime, {\Theta_k}^\prime, {t^d_k}^\prime)$

\end{algorithm}

\color{black}
\vspace{2mm}
\section{Description of the Operators}\label{App: operators}

In the following, we define the seven operators used in our LNS procedure.
The first operator is motivated by an intuitive rule of thumb that it is better to prepare first orders with close deadlines and short preparation times. 
To that end, we define ${dl}_i$ as the (soft) deadline of the vehicle departure time with respect to order $i$ if it is the first stop of its assigned trip, i.e., ${dl}_i=t^o_{i}+\tau-{t}_{0i}^t$. Thus, if a vehicle departs with order $i$ later than ${dl}_i$, the order tardiness is positive. 
In addition, as we show in Claim \ref{th:Sequencing two similar orders}, this rule (described below) is the optimal policy for some cases of the problem.
\begin{claim}
 [Sequencing a set of similar orders]
\label{th:Sequencing two similar orders}
 At a given state $S_k$, consider  a set of orders of a certain food type $I_f$ that have not started preparation and are sorted such that ${dl}_i \leq {dl}_j $ and $t^p_i \leq t^p_{j}, \forall i < j \in I_f$. 
If, in sequencing the departures of each two orders $i<j$ in this set, the potential impact of the sequence on the departure time of orders that are not included in $I_f$ can be ignored, then, there exists an optimal decision in which each order $i<j$ is prepared no later than order $j$.  
 \end{claim}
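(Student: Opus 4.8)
The plan is to establish the claim by an exchange (interchange) argument. Suppose, for contradiction, that in every optimal decision there exist two orders $i < j$ from $I_f$ (so ${dl}_i \le {dl}_j$ and $t^p_i \le t^p_j$) such that $j$ is prepared strictly before $i$, and fix such an optimal decision together with an adjacent such pair — i.e., $i$ and $j$ consecutive in the cook's processing sequence on the food-type-$f$ chain. First I would argue that it suffices to consider an adjacent pair: if $j$ precedes $i$ somewhere in the sequence, then moving along the sequence there must be an adjacent pair $(j', i')$ with $i' < j'$ (in the ${dl}$/$t^p$ order) but $j'$ immediately before $i'$; the hypotheses are preserved because the order relation is a total preorder on $I_f$.

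Next I would perform the swap: exchange the processing positions (and hence the start times) of $i$ and $j$, keeping all other orders' start times, all cook–order assignments, and all vehicle trips and departure times unchanged. Because $t^p_i \le t^p_j$, the block of time occupied by the two jobs does not grow: the job now placed first ($i$) finishes no later than the job that used to be placed first ($j$), so the second job ($j$) can start no later than before, and the completion time of the whole two-job block is unchanged or earlier. Consequently every order after the block on this cook's chain is unaffected (its start time can stay the same, still respecting the one-job-at-a-time constraint), and, crucially, the start and completion times of all orders \emph{outside} $I_f$ are untouched — this is exactly where the claim's hypothesis ``the potential impact of the sequence on the departure time of orders not in $I_f$ can be ignored'' is used, so that the vehicle departure times of those orders, and thus their delays, need not change.

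It then remains to show the swap does not increase the total delay contributed by $i$ and $j$ themselves, and does not violate freshness. For the delay: after the swap the ready times satisfy (ready time of $i$) $\le$ (old ready time of $j$) and (ready time of $j$) $\le$ (old ready time of $j$) as well — more precisely the multiset of the two ready times only shifts the earlier one onto $i$. Since ${dl}_i \le {dl}_j$, pairing the earlier ready/departure opportunity with the order having the tighter deadline ${dl}_i$ can only (weakly) reduce $\max(0,\cdot)$ delay terms; a short two-case check (whether the common block start plus $t^p_i$ falls before or after each deadline) shows the sum of the two tardiness terms does not increase. For freshness: the food-to-door span of each of $i,j$ is governed by its own ready time and its (unchanged) scheduled delivery time; since $i$'s ready time did not increase and $j$'s ready time did not increase beyond $j$'s old value, and the trips/departures are unchanged, the freshness constraint $\delta_{f_i}=\delta_f$ is still met for both. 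Hence the swapped decision is feasible and has total delay no larger than the original — contradicting that every optimal decision has $j$ before $i$, and proving the existence of an optimal decision in which $i$ is prepared no later than $j$ for all such pairs.

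The main obstacle I anticipate is the delay-comparison step: the departure times of $i$ and $j$ are themselves part of the decision and, via the first-stop ready-before-departure constraint and possible bundling inside a shared trip, are coupled to the ready times in a way that the swap changes. Handling this cleanly requires arguing that one may take departures ``as early as feasibility allows'' (consistent with the PDFT's as-soon-as-possible property) so that earlier ready times translate monotonically into weakly earlier feasible departures; only then does the ${dl}_i \le {dl}_j$ inequality deliver the desired tardiness comparison. Pinning down this monotonicity — and isolating precisely the regime in which the effect on non-$I_f$ orders is genuinely ignorable, rather than merely assumed — is the delicate part of the argument.
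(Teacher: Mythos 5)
Your overall strategy --- an exchange argument using ${dl}_i\le {dl}_j$ for the tardiness comparison and $t^p_i\le t^p_j$ for the scheduling comparison --- is in the right spirit, but the specific swap you perform does not go through, and the paper avoids your difficulties by swapping something different. You fix all vehicle trips and departure times and exchange only the two \emph{preparation} start times. This creates three problems. First, the block-completion inequality points the wrong way: if $j$ originally occupied $[s,s+t^p_j]$ and $i$ occupied $[s',s'+t^p_i]$ with $s'\ge s+t^p_j$, then after your swap $j$ occupies $[s',s'+t^p_j]$, which ends \emph{no earlier} (and generally later) than the old block end $s'+t^p_i$; this can collide with the next order on the same cook and, more importantly, $j$ may no longer be ready by its unchanged departure time, breaking feasibility. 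Second, your freshness argument is reversed: freshness bounds the gap between ready time and arrival, so making $i$ ready \emph{earlier} while leaving its (late) delivery time untouched \emph{increases} $i$'s ready-to-door span and can violate $\delta_f$, rather than preserve it. Third, you correctly flag the coupling between ready times and departures as ``the delicate part,'' but you leave it unresolved, and it is precisely where the argument breaks.

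The paper's proof sidesteps all of this by exchanging the two orders' roles in \emph{both} stages rather than in the preparation stage alone. It first argues, by a standard earliest-due-date exchange on the arrival times, that letting $i$ arrive at its customer before $j$ cannot increase the total tardiness over $I_f$ (using ${dl}_i\le{dl}_j$, and using the claim's hypothesis to ignore effects on orders outside $I_f$); only then, \emph{given} that $i$ is dispatched first, does it conclude that $i$'s preparation can be started no later than $j$'s without loss (using $t^p_i\le t^p_j$). Because the dispatch slots are exchanged together with the preparation slots, each order inherits a ready-time/departure-time pair that was already feasible, and the mismatch between a fixed departure and a shifted ready time --- the source of all three gaps above --- never arises. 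To repair your write-up you would need to swap the orders' trip assignments (or departure slots) along with their preparation positions, not the preparation positions alone.
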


\begin{proof}{Proof of Claim \ref{th:Sequencing two similar orders}.} 
First, note that for each two orders $i<j \in I_f$, if order $i$ arrives at the customer earlier than $j$, it can only improve the sum of tardiness over the orders in $I_f$, because ${dl}_i \leq {dl}_j$.
Since other orders that are not included in $I_f$ are not impacted according to the claim assumption, this is the case with respect to all orders as well. 
Second, given that order $i$ should be dispatched first, we can determine that its preparation should start no later than $j$ without worsening the decision because $t^p_i \leq t^p_j$.
\end{proof}

\emph{Operator 1 - Swap by deadlines and preparation times.} 
 \emph{Operator 1} selects a random food type $f \in F$, and then a not yet prepared order $i_{fj}$ to be swapped with the (not yet prepared) order that precedes it in $\psi_k^{f,x}$, $i_{f,{j-1}}$.
 Order $i_{fj}$ is selected according to weighted probabilities by the normalized values of ${dl}_i \cdot t^p_i, \forall i \in I_k^f$.
 Namely, the likelihood that order $i \in I_k^f$ will be chosen is $1-\frac{{dl}_i \cdot t^p_i}{\sum_{{i^\prime} \in I_k^f}{dl}_{i^\prime} \cdot t^p_{i^\prime}}$.
We obtain a \emph{new partial decision} in which ${\psi_k^{f,x}}^\prime = (i_{f1},\dots, i_{fj},i_{f,j-1},\dots)$ and the rest of the sequences remain as in the current partial decision. 

\emph{Operator 2 - Swap orders.} 
We select a random food type $f \in F$, and swap randomly (according to a uniform distribution) two of its (not yet prepared) orders $i_{fj}, i_{f{u}}$ in $\psi_{k}^{f,x}=(i_{f1}, \dots, i_{fj},\dots,i_{f{u}},\dots)$
 to obtain ${\psi_{k}^{f,x}}^\prime=(i_{f1},\dots, i_{fu},\dots,i_{fj},\dots)$. 

The third operator is motivated from both an intuitive rule of thumb of sequencing trips by SPT (shortest to longest travel times), and specific cases in which this rule is an optimal policy, see Claim \ref{th: Deployment of last orders} below. 

\begin{claim}
[Order departures by SPT]
\label{th: Deployment of last orders} 
Suppose that at a certain time when considering the departure times of a set of ready orders $O$, future orders potential arrival can be ignored (e.g., at time $T^c$).
Let $O^l \subset O$ be the set of orders which cannot be early in any feasible trip sequences decision, given all orders starting preparation times. 
Assume also that all the orders in $O^l$ are assigned to a trip without other orders.
 Then, there exists an optimal decision in which the orders in $O^l$ are sequenced by SPT (between them), where the entire sequence can also include orders in $O \backslash O^l$ in between them, if such a sequence is feasible. 
\end{claim}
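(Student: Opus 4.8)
The plan is to establish the claim by an exchange argument on the departure-time sequence, mirroring the structure of the proof of Claim~\ref{th:Sequencing two similar orders} but now adapted to trips rather than preparations. I would fix an arbitrary optimal decision and look at the suborder induced on $O^l$ inside the overall trip-departure sequence. Since, by assumption, every order in $O^l$ is assigned to a singleton trip and none of them can be made early, each order $i\in O^l$ contributes a tardiness term that depends only on its departure time $t^d_{\theta(i)}$ (plus a fixed quantity $t^t_{0i}+t^t_{i0}$ for the round trip, which is irrelevant to the ordering decision among them). If in the fixed optimal decision two orders $i,j\in O^l$ with $t^t_{0i}+t^t_{i0} \le t^t_{0j}+t^t_{j0}$ (i.e.\ $i$ is ``shorter'' in the SPT sense) appear with $j$ departing before $i$, I would swap their positions in the departure sequence, keeping all other trips (including any trips of orders in $O\setminus O^l$ interleaved between them) in place.

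The key step is to verify that this swap is feasible and does not increase total delay. Feasibility: because the vehicles are homogeneous and the swap only exchanges which singleton trip occupies which departure slot, vehicle-availability and return-time constraints are preserved; the orders are already ready (they are in $O$), so no preparation constraint is violated; and freshness is not at issue here (or, if one keeps it, the ready-to-door bound only becomes easier since a shorter trip reaches the customer sooner). Non-increase of cost: this is the standard SPT interchange argument. Let the two departure slots in question be at times $s_1 < s_2$, and let $R_i = t^t_{0i}+t^t_{i0}$, $R_j = t^t_{0j}+t^t_{j0}$ with $R_i \le R_j$. Putting the shorter order $i$ in the earlier slot frees the vehicle earlier (it returns at $s_1+R_i$ rather than $s_1+R_j$), which can only relax the departure-time feasibility of every subsequent trip, so the rest of the schedule remains implementable with departure times no later than before. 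The direct tardiness contribution of $\{i,j\}$ also does not increase: $i$ now arrives at $s_1+t^t_{0i}$ and $j$ at $s_2+t^t_{0j}$, versus previously $j$ at $s_1+t^t_{0j}$ and $i$ at $s_2+t^t_{0i}$; comparing the $\max(0,\cdot)$ tardiness sums, the shorter order in the earlier slot is weakly better by the usual SPT rearrangement inequality. Repeating such swaps finitely many times (a bubble-sort on the $O^l$-suborder) reaches a decision in which the orders of $O^l$ are sequenced by SPT among themselves, with the interleaved $O\setminus O^l$ trips untouched, and with total delay no larger than the original optimum — hence optimal.

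The main obstacle I anticipate is making precise the claim that moving the shorter trip earlier ``can only relax'' the rest of the schedule, because departure times of later trips are themselves decision variables constrained both by vehicle returns and by the (fixed) preparation-completion times of the orders they carry. The clean way to handle this is to argue that the post-swap schedule admits the \emph{same} departure times as the pre-swap one for every trip other than the two being exchanged: the only binding change is that one vehicle returns at $s_1+R_i \le s_1+R_j$ instead, which weakly advances (never delays) the earliest feasible time for whichever trip that vehicle serves next, so the old departure times remain feasible. Because the cost $D(\Theta,t^d)$ is monotone nondecreasing in each departure time, keeping all other departures fixed is enough; there is no need to re-optimize them. One should also note the hypothesis that future arrivals can be ignored is what lets us compare decisions purely on realized delay (the cost-to-go term in the Bellman equation is absent or constant), so the interchange argument on $D(\Theta,t^d)$ alone suffices. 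Finally, the phrase ``if such a sequence is feasible'' in the statement simply guards the degenerate case where interleaving constraints prevent any reordering; the exchange argument only ever produces feasible decisions from a feasible one, so the conclusion is vacuously consistent with that proviso.
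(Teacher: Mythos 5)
Your exchange argument differs from the paper's in a substantive way. The paper fixes an optimal decision containing a non-SPT pair $i,j\in O^l$ \emph{whose swap is feasible}, and models the swap as exchanging positions in a back-to-back departure sequence: the intermediate trips $O_2$ and order $i$ all depart \emph{earlier} (each trip's departure being pushed back by the cumulative travel times of its predecessors), $j$ departs later, and the net tardiness strictly decreases by $\bigl(t^t_{0j}+t^t_{j0}\bigr)-\bigl(t^t_{0i}+t^t_{i0}\bigr)>0$, contradicting optimality. You instead keep the two departure slots $s_1<s_2$ \emph{fixed} and only exchange which singleton trip occupies which slot; since both orders are never early, the tardiness sum is exactly preserved, and you conclude by a bubble-sort that some optimal decision is SPT-ordered. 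This is a legitimate alternative route for an existence claim (weak non-increase suffices), and it avoids the paper's implicit assumption that departure times are determined cumulatively by preceding trips' travel times.

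However, your feasibility step has a genuine gap. You argue that the only binding change is that the vehicle serving slot $s_1$ returns earlier, but you ignore the vehicle serving slot $s_2$: it now performs the \emph{longer} trip $j$ and returns at $s_2+R_j\geq s_2+R_i$, which can push its return past the scheduled departure of its next trip and break the remainder of the schedule. Likewise, order $j$ now departs at $s_2>s_1$ with its preparation completion time unchanged (the claim fixes all preparation starting times), so its ready-to-door time grows by $s_2-s_1$ and may violate the freshness bound $\delta_{f_j}$ --- your parenthetical that freshness ``only becomes easier'' is true only for the order moved earlier. Consequently your closing assertion that ``the exchange argument only ever produces feasible decisions from a feasible one'' is false, and the proviso ``if such a sequence is feasible'' is not vacuous: it is exactly the hypothesis needed to license each swap. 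The repair is simply to condition each interchange on its feasibility, as the paper does (``although switching their places in the sequence is feasible''), rather than to claim feasibility is automatic; with that conditioning, your fixed-slot equality argument goes through.
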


\begin{proof}{Proof of Claim \ref{th: Deployment of last orders}.}
 Assume, by contradiction, that an optimal sequence of departures consists of two orders $ i,j \in O^l$, such that the sequence between them is not SPT although switching their places in the sequence is feasible. 
 Assume without loss of generality that $t_{0i}^t+t_{i0}^t<t_{0j}^t+t_{j0}^t$ and that $j$ is scheduled to depart before $i$. Thus, the departure sequence includes a set of orders $O_1$, followed by order $j$, then another set of orders $O_2$, followed by order $i$, and finally, there is another set of orders $O_3$. 
 Note that $O = O_1 \cup O_2\cup O_3 \cup \{i,j\} $ and that $O_1,O_2,O_3,\{i\}$ and $\{j\}$ are all disjoint.
 
 Then, replacing $j$ and $i$ in the decision will:
(1) Not affect the tardiness of the orders in $O_1$ and $O_3$;
(2) Decrease the times in which the orders in $O_2$ arrive to the customers, and hence, will not increase the sum of tardiness over these orders;
(3) Decrease the tardiness of $i$ by $t_{0i}^t+t_{i0}^t$ plus the sum of travel times of all orders in $O_2$.
and increase the tardiness of $j$ by $t_{0i}^t+t_{i0}^t$ plus the sum of travel times of all orders in $O_2$. 
Because $t_{0i}^t+t_{i0}^t<t_{0j}^t+t_{j0}^t$, the increase of tardiness of $j$ is smaller than the decrease of the tardiness of $i$. 
Therefore, the total sum of tardiness over all the orders is improved, in contradiction to the assumption that it is an optimal decision.
\end{proof}

\emph{Operator 3 - Sequence trips by SPT.}     
We select random (according to a uniform distribution) three consecutive trips $\theta_l, \theta_{l+1},\theta_{l+2} \in \widehat{\Theta}_k^x$. 
Then, in $\widehat{\Theta}_k^{x\prime}$ we sequence them by the shortest to the longest travel times along the trips.

     \emph{Operator 4 - Swap trips.} 
\rev{We randomly select two consecutive trips, $\theta_l$ and $\theta_{l+1}$ in $\widehat{\Theta}_k^x=(\theta_1, \dots,\theta_l, \theta_{l+1}, \dots)$, and swap them. This results in a new configuration, $\widehat{\Theta}_k^{x\prime}$, where the positions of $\theta_l$ and $\theta_{l+1}$ are interchanged, maintaining the order of all other trips, i.e., $\widehat{\Theta}_k^{x\prime}=(\theta_1, \dots,\theta_{l+1}, \theta_l, \dots) $.}
    
     \emph{Operator 5 - Bundle trips.} We bundle two consecutive trips $\theta_l, \theta_{l+1} \in \widehat{\Theta}_k^x$ that include together up to $\kappa$ orders. The trips are chosen randomly and a new trip $\theta^\prime$ that includes all the orders in $\theta_l$ and $\theta_{l+1}$ is generated. In particular, the orders in $\theta^\prime$ are sequenced such that the orders in $\theta_l$ are visited first, according to their sequence, and then the orders in $\theta_{l+1}$ are visited according to their sequence. 
    
     \emph{Operator 6 - Break a trip.} We break a random trip $\theta \in \widehat{\Theta}_k^x$ to two trips $\theta^\prime$ and $\theta^{\prime\prime}$ such that $\theta^\prime$ includes only the first order of $\theta$, and $\theta^{\prime\prime}$ includes the rest of the orders in $\theta$. 
     Then, $\widehat{\Theta}_k^{x\prime}=( \dots,\theta^\prime, \theta^{\prime\prime}, \dots)$.
    
     \emph{Operator 7 - Shuffle a trip:} \rev{We randomly select a trip from $\theta \in \widehat{\Theta}_k^x$, where each trip is chosen with equal probability. The sequence of orders within this trip is shuffled, i.e., the sequence in which orders are to be delivered is chosen randomly.}

\vspace{5mm}
\section{The Assignment and Timing Sub-Problem}\label{app: PDFT}

In this appendix we provide details regarding PDFT, which given a state $S_k$ in the RMD-GK and a partial decision ${\psi_{k}^{f,x}}, \forall f \in F$ and ${\widehat{\Theta}_k^x}$, determines whether the partial decision is feasible or not.
\rev{If it is feasible}, starting times of order preparations and departure times \rev{are determined}.
As mentioned, the PDFT solves the ATP, defined in \rev{Section} \ref{sec: PDFT}, to optimality.
To describe the PDFT we first provide preliminaries, including a DP formulation of the ATP.
We then provide theoretical supports to its optimality, and show that it is polynomial in the number of open orders.


We next define a deterministic dynamic program.
Without loss of generality, we assume that we first assign orders to cooks, and for each assigned order determine its starting time, and then assign trips to vehicles, and determine each trip's departure time.
That is, there are $|I_k|+|{\widehat{\Theta}_k^x}|$ decision points, indexed by $n$. 
The first $|I_k|$ decision points can be partitioned by the food types, such that we assign the orders ${\psi_{k}^{f,x}}$ of a food type $f \in F$, one after another according to their sequence, and then continue to the another food type until all orders in $I_k$ are assigned to cooks.
For convenience, we assume that the food types are ordered so that there is a strict link between the index of the decision point and the task that is assigned. 
The next $|{\widehat{\Theta}_k^x}|$ decision points related to the trips in ${\widehat{\Theta}_k^x}$ assignment, one after another according to their sequence.  
Consequently, we further denote by $i_n\in I_k$ the order that is associated with decision point $n,\forall n \leq |I_k|$ and 
by $\theta_n\in{\widehat{\Theta}_k^x}$ the trip that is associated with decision point $n,\forall n > |I_k|$.
In the notation provided in this formulation we use the convention that superscripts are symbols and subscripts are indexes.

\noindent \textbf{A Dynamic Program Formulation of The ATP}

\emph{States.} A state 
$s_n=(\{t^{CE}_{n,c}|c \in C\},\{t^{VE}_{n,v}|v\in V\},\{({lb}_{n,\theta}, {ub}_{n,\theta})|\theta\in\widehat{\Theta}_k^x\})$ includes three components. 
The first two are the times $t^{CE}_{n,c}$ and $t^{VE}_{n,v}$ each cook $c \in C$ and each vehicle $v\in V$ is eligible to start the next task according to its availability (at the completion of the previously assigned tasks) and also subject to the sequence constraints (no earlier than the starting time of the last assigned order (of the respective food type) or trip according to the given sequences).
Note that the symbols "CE" and "VE" stand for Cook and Vehicle Eligibility, respectively. 
The third component is a lower and upper bound $({lb}_{n,\theta}, {ub}_{n,\theta})$  on the departure time of each trip in ${\widehat{\Theta}_k^x}$ that was not scheduled yet. 
 For each trip $\theta$, ${lb}_{n,\theta}$ represents the earliest time all the orders that were already assigned to the cooks and are included in $\theta$, have finished preparation and ${ub}_{n,\theta}$ represents the latest time all these orders can depart without violating any freshness constraint. 
 The values of $({lb}_{n,\theta},{ub}_{n,\theta}), \forall \theta \in {\widehat{\Theta}_k^x}$ also reflect the trips sequence as detailed in the sequel.

\emph{Decisions.} 
We next describe the possible decisions at the first $|I_k|$ decision points and at the remaining $|{\widehat{\Theta}_k^x}|$ decision points, respectively.
A decision $a_n=(a^C_n,a^t_n), \forall 1\leq n \leq |I_k|$ includes an assignment of order $i_n$ 
to a cook $a^C_n\in C_{f_{i_n}}$, and a starting time $a^t_n$.
The starting time $a^t_n$ must satisfy Feasibility Conditions 1-4 with respect to the orders assignments and preparation starting times (each is a necessary but not a sufficient condition), listed below.
\begin{enumerate}
    \item[\textbf{Feasibility Condition 1.}] 
    The first feasibility condition is related to the time the cooks become available. That is, for each assignment $a^C_n\in C_{f_{i_n}}$ 
     the preparation starting time cannot be earlier than the time cook $a^C_n$ is available, i.e., 
    \begin{equation} \label{eq: feasible order time 1}
    a^t_n\geq t^{CE}_{n,a^C_n}
    \end{equation}
    \end{enumerate}
 
    We denote the index of the trip (within the sequence of trips) that contains order $i_n$ by $q(i_n)$, therefore $\theta_{q(i_n)}$ is the trip that contains order $i_n$. 
    Note that this is in general different than the trip on which we decide in decision point $n$, which was denoted by $\theta_n$.
\begin{enumerate}
    \item[\textbf{Feasibility Condition 2.}] 
    The second condition considers the influence of the preparation starting time on the departure time window of the order's trip.
    Namely, the starting time of order $i_n$ must keep a feasible departure time window to its trip, $\theta_{q(i_n)}$, by ensuring that order $i_n$ can be dispatched in at least one point in time within [${lb}_{n,\theta_{q(i_n)}}$,${ub}_{n,\theta_{q(i_n)}}$].
That is, $i_n$ cannot start preparation too early so it can arrive fresh if departs not earlier than ${lb}_{n,\theta_{q(i_n)}}$, and it must be prepared no later than ${ub}_{n,\theta_{q(i_n)}}$, as expressed in
Eq.(\ref{eq: feasible order time 2}) below (recall that $t_{i_n}(\theta_{q(i_n)})$ denotes the travel time to order $i_n$ along trip $\theta_{q(i_n)}$).
\begin{equation} \label{eq: feasible order time 2}
{lb}_{n,\theta_{q(i_n)}} + t_{i_n}(\theta_{q(i_n)})-p^t_{i_n} - \delta_{f_{i_n}} \leq 
a^t_n\leq 
{ub}_{n,\theta_{q(i_n)}} - p^t_{i_n}    
\end{equation}

\item[\textbf{Feasibility Condition 3.}] 
The third feasibility condition is related to the possible influence of the starting time of order $i_n$ on the departure times of the trips that appear in ${\widehat{\Theta}_k^x}$ later than its trip $\theta_{q(i_n)}$.
Namely, since these trips cannot depart earlier than $\theta_{q(i_n)}$, a necessary condition for a feasible departure time window for each $\theta^\prime$ that appears later than $\theta_{q(i_n)}$ in ${\widehat{\Theta}_k^x}$, is:
\begin{equation} \label{eq: feasible order time 3}
a^t_n \leq {ub}_{n,\theta^\prime}-p^t_{i_n} 
\end{equation}

\item[\textbf{Feasibility Condition 4.}]
The fourth feasibility condition is related to the limited number of vehicles as well as the given departure sequence.
It makes use of the following claim:
\begin{claim}\label{th: |V| subsequent trips overlapping}
Let $\theta_{m}$ be some trip in $\widehat{\Theta}_k^x$ and let $t^t(\theta_m)$ denote the vehicle driving time along trip $\theta_m$ from the depot and back. 
Then, the departure time of $\theta_{m}$ cannot be earlier than 
$\min_{l \in \{m-|V|-1,\dots,m-1\}}({lb}_{\theta_{l}}+t^t(\theta_{l}))$.
\end{claim}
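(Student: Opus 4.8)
The plan is to establish the claim for an \emph{arbitrary} feasible assignment of the trips in $\widehat{\Theta}_k^x$ to the $|V|$ vehicles together with any feasible choice of departure times; the inequality then holds in particular for the optimal ATP solution and can serve as the necessary condition used in Feasibility Condition~4. Throughout, write $d_\theta$ for the departure time and $r_\theta=d_\theta+t^t(\theta)$ for the return time of a trip $\theta$, and use the convention (maintained throughout the condensed representation and the ATP) that the trips in $\widehat{\Theta}_k^x=(\theta_1,\theta_2,\dots)$ are listed in nondecreasing order of departure time, so $d_{\theta_1}\le d_{\theta_2}\le\cdots$. Recall that $lb_\theta$ is by construction a lower bound on $d_\theta$ that is valid for every feasible completion (it is a maximum of ready-times over the orders of $\theta$ already placed at a cook, hence only grows as the DP proceeds), i.e.\ $d_\theta\ge lb_\theta$. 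I will also use that a trip $\theta$ occupies its assigned vehicle exactly on the half-open interval $[d_\theta,r_\theta)$ and that a vehicle runs at most one trip at a time.

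The first step is a trivial reduction: since $\min_{l\in W}a_l\le a_{l_0}$ for any $l_0\in W$, where $W:=\{m-|V|-1,\dots,m-1\}$, it suffices to exhibit a single index $l_0\in W$ with $d_{\theta_m}\ge lb_{\theta_{l_0}}+t^t(\theta_{l_0})$.

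The core is a counting argument carried out at the instant $d_{\theta_m}$. Let $v$ be the vehicle that performs $\theta_m$. Since $v$ begins $\theta_m$ at time $d_{\theta_m}$, no trip is in progress on $v$ at that instant; consequently every earlier trip $\theta_j$ (with $j<m$) that has not yet returned by time $d_{\theta_m}$, i.e.\ $r_{\theta_j}>d_{\theta_m}$, must occupy one of the remaining $|V|-1$ vehicles, and distinct not-yet-returned trips occupy distinct vehicles. Hence at most $|V|-1$ of the trips $\theta_1,\dots,\theta_{m-1}$ are still out at time $d_{\theta_m}$. The window $W$ contains exactly $|V|+1$ such indices, and every $l\in W$ satisfies $l<m$ (so $d_{\theta_l}\le d_{\theta_m}$ by the ordering); therefore among $\{\theta_l:l\in W\}$ at most $|V|-1$ are still out at $d_{\theta_m}$, leaving at least two that have already returned. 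Ties in departure and return times cause no trouble because the occupancy intervals are half-open. Fix any such $l_0\in W$, so $r_{\theta_{l_0}}\le d_{\theta_m}$, and chain the inequalities: $d_{\theta_m}\ge r_{\theta_{l_0}}=d_{\theta_{l_0}}+t^t(\theta_{l_0})\ge lb_{\theta_{l_0}}+t^t(\theta_{l_0})\ge\min_{l\in W}\big(lb_{\theta_l}+t^t(\theta_l)\big)$, which is the asserted bound.

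Finally I would dispose of the boundary case $m-|V|-1<1$, in which fewer than $|V|+1$ trips of $\widehat{\Theta}_k^x$ precede $\theta_m$: the ``missing'' indices correspond to trips already dispatched before $t_k$ (whose return times are recorded in $t^r_{kv}$) or, for a vehicle never dispatched, to its availability at $t_k$. The same counting argument then applies with these committed trips/availabilities playing the role of the earliest preceding ``trips'', and the right-hand side is read as also minimizing over $\{t^r_{kv}:v\in V\}$; alternatively, in that regime some vehicle is simply free strictly before $d_{\theta_m}$, which again yields the bound. I expect the counting step to be the only genuine obstacle --- specifically, making ``at most $|V|-1$ trips are out at $d_{\theta_m}$'' watertight in the presence of simultaneous departures and returns, and pinning down the convention that the condensed trip sequence is ordered by departure time; the remaining steps are bookkeeping.
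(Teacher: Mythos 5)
Your proof is correct, and it reorganizes the argument in a way that differs from the paper's. The paper proves the claim by a case split on the vehicle assignment of the trips in the window $W=\{m-|V|-1,\dots,m-1\}$: either they all occupy distinct vehicles, in which case $\theta_m$'s vehicle must first return from one of them, or two of them share a vehicle, in which case the later of those two departs no earlier than the return of the earlier one, and the sequence constraint propagates this bound to $\theta_m$. In both cases the witness trip's return time is bounded below by $lb_{\theta_l}+t^t(\theta_l)\ge\min_{l\in W}(lb_{\theta_l}+t^t(\theta_l))$, which is the same terminal inequality chain you use. You instead run a single pigeonhole count at the instant $d_{\theta_m}$: since $\theta_m$'s own vehicle is free then, at most $|V|-1$ of the $|V|+1$ window trips can still be out, so at least one (in fact two) has already returned, and that returned trip is your witness $l_0$. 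Both arguments rest on the same ingredients (at most one trip per vehicle at a time, nondecreasing departure times along $\widehat{\Theta}_k^x$, $d_\theta\ge lb_\theta$, and return time $=$ departure $+\,t^t$), but your version avoids the case analysis entirely, is explicit about ties via half-open occupancy intervals, and addresses the boundary case $m-|V|-1<1$ (reading the minimum as also ranging over the committed return times $t^r_{kv}$), which the paper leaves implicit; it also works uniformly whether the window is read as containing $|V|$ or $|V|+1$ trips, a point on which the paper's statement and proof are not entirely consistent. What the paper's route buys is that its second case makes the role of the sequence constraint visible as the mechanism transmitting the bound forward to $\theta_m$, whereas in your argument that constraint is absorbed into the ordering convention $d_{\theta_l}\le d_{\theta_m}$ for $l<m$.
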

\begin{proof}{Proof of Claim \ref{th: |V| subsequent trips overlapping}.}
First, consider the case in which each of the $|V|$ trips $\{\theta_{l}| l \in \{m-|V|-1,\dots,m-1\}$ is assigned to a different vehicle $v\in V$, then the first vehicle to return is available again not earlier than $\min_{l \in \{m-|V|-1,\dots,m-1\}}({lb}_{\theta_{l}}+t^t(\theta_{l}))$.
Now, consider the case in which at least two of these trips are assigned to the same vehicle, then, the departure time of the second trip assigned to the same vehicle with another trip is not earlier than $\min_{l \in \{m-|V|-1,\dots,m-1\}}({lb}_{\theta_{l}}+t^t(\theta_{l}))$.
Thus, because of the sequence constraints, the departure time of $\theta_{m}$ also cannot be earlier than 
$\min_{l \in \{m-|V|-1,\dots,m-1\}}({lb}_{\theta_{l}}+t^t(\theta_{l}))$. 
\end{proof}

Consequently, $a_n^t$ should be determined such that the following feasibility condition is satisfied $\forall \theta_{m} \in \{\theta_{{q(i_n)}+1},\dots,\theta_{{q(i_n)}+|V|+1}\}$:
\begin{equation}
\min(a_n^t+p_{i_n}^t+t^t(\theta_{q(i_n)}), \min_{l \in \{m-|V|-1,\dots,m-1\}}({lb}_{\theta_{l}}+t^t(\theta_{l})))
\leq
{ub}_{n,\theta_{m}} 
\end{equation}
\end{enumerate}

We note that Feasibility Condition 1 depends on the exact cook assignment, whereas Feasibility Conditions 2-4 are independent of it. 
This leads to the following claim.

\begin{claim}\label{th: indifferent regarding the specific cook}
Given a value of $a^t_n$ that satisfies Feasibility Conditions 2-4, we are indifferent regarding the specific cook assignment as long as Feasibility Condition 1 is satisfied.
\end{claim}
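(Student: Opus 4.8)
The plan is to show that, once the starting time $a^t_n$ is held fixed and it already satisfies Feasibility Conditions~2--4, every cook admitted by Feasibility Condition~1 produces the \emph{same} successor DP-state $s_{n+1}$ up to a relabeling of the cooks inside $C_{f_{i_n}}$; then homogeneity of same-food-type cooks finishes the argument. First I would isolate which components of $s_{n+1}=(\{t^{CE}_{n+1,c}\},\{t^{VE}_{n+1,v}\},\{(lb_{n+1,\theta},ub_{n+1,\theta})\})$ can depend on $a^C_n$. Assigning an order to a cook is one of the first $|I_k|$ decision points, so it leaves all vehicle-eligibility times $t^{VE}$ untouched; and the bound updates $(lb_{n+1,\theta},ub_{n+1,\theta})$ are functions of $a^t_n$, $t^p_{i_n}$, $\delta_{f_{i_n}}$, the trip structure $\theta_{q(i_n)}$ and the later trips, and the previous bounds only — never of which cook prepares $i_n$ (this is exactly what Feasibility Conditions~2--4 express). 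Hence the \emph{only} part of $s_{n+1}$ that could differ with the cook choice is the vector of cook-eligibility times.

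Next I would analyze that vector. Cooks of food types other than $f_{i_n}$ are untouched (their availability and their sequence constraint are unchanged), so it suffices to look at $C_{f_{i_n}}$. Let $m_1\le\dots\le m_\ell$ be the sorted values $t^{CE}_{n,c}$ over $c\in C_{f_{i_n}}$, and let $m_1,\dots,m_r$ be those $\le a^t_n$; by Feasibility Condition~1 the admissible cooks are precisely these $r$ cooks (and $r\ge1$ whenever the hypothesis is non-vacuous). If we pick any admissible cook $c^\star$, its eligibility becomes $a^t_n+t^p_{i_n}$, while every other cook $c$ of food type $f_{i_n}$ becomes eligible at $\max(t^{CE}_{n,c},a^t_n)$, since the next order of $f_{i_n}$ in the prescribed sequence cannot start before $a^t_n$. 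Thus the unordered multiset of $C_{f_{i_n}}$-eligibility times after the decision is $\{\,a^t_n,\dots,a^t_n\ (r-1\text{ times}),\ a^t_n+t^p_{i_n},\ m_{r+1},\dots,m_\ell\,\}$, which is independent of which admissible cook was chosen. (Even if one prefers not to apply this sequence bump eagerly, the conclusion survives: any cook eligibility below $a^t_n$ is interchangeable, for everything downstream, with one equal to $a^t_n$, because every later order of food type $f_{i_n}$ starts no earlier than $a^t_n$ and its completion time on a cook does not depend on when that cook became free.)

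Finally I would invoke the homogeneity of cooks of the same food type — their indices are interchangeable both in the objective (the total delay depends only on order arrival times, hence on trip departures and routing, never on which cook prepared an order) and in the DP transition, the same interchangeability used in the proof of Claim~\ref{th: costs symmetric post decision states}. Therefore two successor states that agree on $\{t^{VE}_{n+1,v}\}$, on $\{(lb_{n+1,\theta},ub_{n+1,\theta})\}$, and on the multiset of eligibility times of each $C_f$ have equal optimal cost-to-go and equal feasibility status; combining this with the two preceding paragraphs yields the claim. The part I expect to require the most care is the bookkeeping of the cook-eligibility update: pinning down that the sequence constraint lifts \emph{every} not-yet-used cook of food type $f_{i_n}$ to at least $a^t_n$ (so the admissible ones collapse to a common value) and verifying that nothing in Feasibility Conditions~2--4 or in the $(lb,ub)$ recursion covertly references the cook identity. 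Once that is settled, the homogeneity step is routine.
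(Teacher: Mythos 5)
Your proposal is correct and follows essentially the same route as the paper's proof: identify the set of admissible cooks, observe that after the transition the multiset of cook-eligibility times (and every other state component) is independent of which admissible cook was chosen, and conclude by the homogeneity/symmetry of same-food-type cooks. You merely spell out more explicitly the bookkeeping that the paper leaves implicit — namely that the vehicle eligibilities and the trip departure-time windows never reference the cook identity.
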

\begin{proof}{Proof of Claim \ref{th: indifferent regarding the specific cook}.}
Given a value of $a^t_n$ that satisfies Feasibility Conditions 2-4, let $C^A=\{c\in C_{f_{i_n}}| t^{CE}_{n,c}\geq a^t_n \}$ be the set of cooks that order $i_n$ can be assigned to at time $a^t_n$. 
Without loss of generality, assume that $i_n$ is assigned to cook $c\in C^A$, which will then be available for the following orders at time $a^t_n + p^t_{i_n}$, while the other cooks in $C^A$ will be ready for the following orders at time $a^t_n$ (to preserve the preparation sequence).
Since these cooks are homogeneous, the resulting states for any other cook assignment have the same value because the states are symmetric (i.e., the set of cooks eligibility times $\{t^{CE}_{{n+1},c}| c\in C^A\}$ are identical but the cook indices of these times may differ).
\end{proof}

Consequently, if there is a value of $a^t_n$ that satisfies Feasibility Conditions 2-4, we can select arbitrarily one of the cooks that are available at $a^t_n$ (Feasibility Condition 1).
In this case, in order to reduce both the decision space and the state space, we always assign the order to the cook with the lowest index.

If there is no decision that satisfies all Feasibility Conditions 1-4, then, there is no feasible decision at state $s_n$. 
We next focus on decisions concerning trip assignments, i.e., when $n > |I_k|$.

A decision $a_n=(a^V_n,a^t_n), \forall n > |I_k|$ includes an assignment of trip $\theta_n$ to a vehicle $a^V_n \in V$, and its departure time, $a^t_n$. 
The departure time $a^t_n$ must satisfy Feasibility Conditions 5-7, listed below (each is a necessary but not a sufficient condition):
\begin{enumerate}
\item[\textbf{Feasibility Condition 5.}] For each assignment $a^V_n\in V$, the trip can only be assigned after vehicle $a^V_n$ becomes available, i.e.,
\begin{equation} \label{eq: feasible order time 5}
 a^t_n \geq t^{VE}_{n,a^V_n}  
\end{equation}

\item[\textbf{Feasibility Condition 6.}] 
A trip can only depart within its departure time window:
\begin{equation} \label{eq: feasible order time 6}
{lb}_{n,{\theta}_n} \leq a^t_n\leq {ub}_{n,{\theta}_n}   
\end{equation}

\item[\textbf{Feasibility Condition 7.}] 
This feasibility condition is related to the effect of the departure time of trip $\theta_n$ on the trips that follow it, which cannot depart earlier than $\theta_n$.
Thus, the upper bound on each of these trips' departure time window must be larger than $a^t_n$.
Hence, 
\begin{equation} \label{eq: feasible order time 7}
a^t_n \leq 
{ub}_{n,\theta_l}, 
\textit{ }
 \forall \theta_l \in \widehat{\Theta}_k^x, l>n
\end{equation}

\end{enumerate}

Similarly to Feasibility Conditions 1-4, we note that Feasibility Condition 5 depends on the exact vehicle assignment while Feasibility Conditions 6-7 are independent of it. 
This leads to the following claim. 
\begin{claim}\label{th: indifferent vehicles}
Given a value of $a^t_n$ which satisfies that ${lb}_{n,{\theta}_n} \leq a^t_n \leq \min_{\theta_l \in \{\widehat{\Theta}_k^x| l\geq n\}}{{ub}_{n,\theta_l}}$, i.e., Feasibility Conditions 6-7, we are indifferent regarding the specific vehicle assignment, as long as Feasibility Condition 5 is satisfied.
\end{claim}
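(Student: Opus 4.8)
The plan is to mirror the argument used for Claim~\ref{th: indifferent regarding the specific cook}, exploiting the homogeneity of the vehicles. First I would fix a departure time $a^t_n$ satisfying Feasibility Conditions 6--7 and define $V^A=\{v\in V\mid t^{VE}_{n,v}\leq a^t_n\}$, the set of vehicles to which trip $\theta_n$ may be assigned at time $a^t_n$ without violating Feasibility Condition 5. The statement is vacuous if $V^A=\emptyset$, so I assume $V^A\neq\emptyset$ and pick any two candidate assignments $a^V_n=v$ and $a^V_n=v'$ with $v,v'\in V^A$.

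Next I would track how the successor state $s_{n+1}$ is obtained from $s_n$ under each of the two assignments. The departure time $a^t_n$ is identical in both cases (it is a fixed input to the two decisions), and the driving time $t^t(\theta_n)$ along the trip is the same regardless of which vehicle is used. Hence, in both cases, exactly one vehicle becomes eligible again at $a^t_n+t^t(\theta_n)$, while every other vehicle in $V^A$ keeps its current eligibility time $t^{VE}_{n,\cdot}$ (it cannot be re-used before $a^t_n$ without breaking the departure sequence). Moreover, the updated bounds $({lb}_{n+1,\theta},{ub}_{n+1,\theta})$ for the not-yet-scheduled trips, and the cook-eligibility entries, depend only on $a^t_n$ and not on the identity of the dispatched vehicle. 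Consequently the two resulting states coincide after the relabelling $v\leftrightarrow v'$: the multiset $\{t^{VE}_{n+1,w}\mid w\in V\}$ is the same, only the index carrying the value $a^t_n+t^t(\theta_n)$ differs.

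Finally I would invoke the symmetry argument already established in Claim~\ref{th: costs symmetric post decision states} (and reused in the proof of Claim~\ref{th: indifferent regarding the specific cook}): since vehicle indices are interchangeable, two states of the DP that differ only by a permutation of these indices have the same optimal value, hence the same downstream feasibility and delay. Therefore the choice among the vehicles in $V^A$ is immaterial, and, exactly as done for the cooks, one may always assign $\theta_n$ to the available vehicle of lowest index, which additionally reduces the decision and state spaces. I do not anticipate a substantive obstacle here; the only point requiring care is making explicit which components of $s_{n+1}$ genuinely depend on the vehicle identity, and observing that the single affected eligibility entry takes a vehicle-independent value, after which the symmetry conclusion follows immediately.
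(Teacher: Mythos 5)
Your proposal is correct and follows essentially the same route as the paper, which simply states that the proof mirrors that of Claim~\ref{th: indifferent regarding the specific cook} with vehicles in place of cooks; you have merely spelled out that symmetry argument explicitly (fixed $a^t_n$, identical successor states up to a permutation of homogeneous vehicle indices, hence equal values). The only nitpick is that the transition actually updates the other vehicles' eligibility times to $\max(t^{VE}_{n,v},a^t_n)$ rather than leaving them unchanged, but this does not affect the permutation-symmetry conclusion.
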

\begin{proof}{Proof of Claim \ref{th: indifferent vehicles}.}
The proof is based on similar statements as in the proof of Claim \ref{th: indifferent regarding the specific cook} but with respect to the vehicles instead of the cooks.
\end{proof}

Then, if there is a value of $a^t_n$ that satisfies both Feasibility Conditions 6 and 7, we assign trip $\theta_n$ to the vehicle with the lowest index that is available at $a^t_n$ (Feasibility Condition 5).
This observation further reduces the decision space and due to state symmetry also the state space.
However, if there is no decision that satisfies all Feasibility Conditions 5-7, then, there is no feasible decision at state $s_n$. 

\emph{Transitions.} 
We denote by $H(s_n,a_n)$ the transition from a state $s_n$ and a decision $a_n$ to state $s_{n+1}$.
We start by describing the transition for $n \leq |I_k|$.
The availability time of cook $a^C_n$ is updated to
$t^{CE}_{{n+1},a^C_n}=a^t_n+p^t_{i_n}$, 
where the availability of the rest of the cooks of food type $f_{i_n}$ is updated to
$t^{CE}_{{n+1},c} = \max(t^{CE}_{n,c}, a^t_n), \forall c\neq a^C_n \in C_{f_{i_n}}$ because the next orders of food type $f_{i_n}$ cannot start earlier than $a^t_n$.
We update the feasible departure time window of the trip $\theta_{q(i_n)}$ that includes $i_n$, by setting
${lb}_{n+1,\theta_{q(i_n)}}=\max({lb}_{n,\theta_{q(i_n)}},a^t_n +p^t_{i_n})$ 
and 
${ub}_{n+1,\theta_{q(i_n)}}=\min({ub}_{n,\theta_{q(i_n)}}, \delta_f-t_{i_n}(\theta_{q(i_n)})+a^t_n + p^t_{i_n})$ 
(the right expression is the latest time order $i_n$ can depart and still satisfy the freshness constraint).
We also update the trip time window of each trip $\theta_m, \forall m < q(i_n)$ that appears earlier than $\theta_{q(i_n)}$ in the sequence so that ${ub}_{{n+1},{\theta_m}} = \min({ub}_{{n,{\theta_m}}},{ub}_{n+1,\theta_{q(i_n)}})$ because of sequence constraints.
Similarly, we update the trip time window of each trip $\theta_m, \forall m > q(i_n)$ that appears later than $\theta_{q(i_n)}$ in the sequence so that ${lb}_{{n+1},{\theta_m}} = \max({lb}_{{n,{\theta_m}}},{lb}_{n+1,\theta_{q(i_n)}})$. 
We may also update further the trip time windows of some of the trips that appear later than trip $\theta_{q(i_n)}$ in the sequence $\widehat{\Theta}_k^x$, according to Claim \ref{th: |V| subsequent trips overlapping}.
That is, following each update of ${lb}_{n+1,\theta_m}$ of trip $\theta_m$, the lower bound of the departure time window of each trip $\theta_{m^\prime} \in \{\theta_{m+1},\dots,\theta_{m+|V|+1}\}$ must also be updated to $\min_{l \in \{m^\prime-|V|-1,\dots,m^\prime-1\}}({lb}_{n+1,\theta_{l}}+t^t(\theta_{l}))$.

We next describe the transition for $n > |I_k|$. 
The vehicles availability times are updated to
$t^{VE}_{{n+1},a^V_n}=a^t_n+t^t(\theta_n)$
and $t^{VE}_{{n+1},v}=\max(t^{VE}_{n,v},a^t_n), \forall v\neq a^V_n \in V$ (the last transition guarantees that trips are dispatched according to their sequence).

\emph{Costs.} The immediate cost $c(s_n,a_n)$ for state $s_n$ and decision $a_n$, is zero for $n \leq |I_k|$, and the total delay of the orders included in trip $\theta_n$ for $|I_k| < n\leq |I_k|+|\widehat{\Theta}_k^x|$, with an exception that if there is no feasible decision at state $s_n, \forall n \leq |I_k|+|\widehat{\Theta}_k^x|$ the immediate cost is $\infty$.


\emph{The Bellman Equation.} 
Denote the optimal value of state $s_n$ by $J(s_n)$, then, the Bellman Equation is given by:
\begin{equation}\label{eq:bellman ATP}
J(s_n)=\min_{a_n}\{c(s_n,a_n)+J(H(s_n,a_n))\}, \textit{ } \forall s_n    
\end{equation}

\emph{Termination.} We terminate the program at decision point $n+1$ if there is no feasible decision at state $s_n$ and/or if $n = |I_k|+|\widehat{\Theta}_k^x|$ (meaning, all tasks are assigned).

\emph{The Initial State.} We next define the initial state variables of the ATP given a state $S_k$ of the RMD-GK problem and a partial decision of it, ${\psi_{k}^{f,x}}, \forall f \in F$ and ${\widehat{\Theta}_k^x}$.
Referring to the initial state as $n=1$, the time at which each cook $c\in C$ becomes available, $t^{CE}_{1,c}\in[t_k, T]$, is the maximum between the finish time of the last order that started preparation before $t_k$ by cook $c$ and time $t_k$.
Similarly, the time at which each vehicle $v\in V$ becomes available, $t^{VE}_{1,v}$, is the maximum between the return time from its last trip that departed before $t_k$ and time $t_k$ (given by $t^r_k$ in state $S_k$).

To define the initial departure time windows, let $I^0_k \subseteq I_k$ denote the set of open orders that at time $t_k$ already started or finished preparation (but were not yet dispatched).
Recall that these orders' starting preparation times are given by $t_{ik}^{s}, \forall i\in I^0_k$.
Thus, for trips in $\widehat{\Theta}_k^x$ associated with orders in $I^0_k$ we set the initial time windows accordingly.
That is, let $I^0_k(\theta)\subseteq I^0_k$ be those orders that are included in trip $\theta$.
Then, for each $\theta \in \widehat{\Theta}_k^x$,
${lb}_{1,\theta}=
\max(\min_{v\in V}(t^{VE}_{1,v}),\max_{i\in I^0_k(\theta)}{t_{ik}^{s}+t^p_i})$
and
${ub}_{1,\theta}=\min(T, \min_{i\in I^0_k(\theta)}{t_{ik}^{s}+t^p_i+\delta_{f_i}-t_i(\theta)})$.

\vspace{2mm}

\noindent\textbf{The PDFT-algorithm}

\noindent The PDFT algorithm is based on the above dynamic program. 
We next list the PDFT steps to achieve the optimal solution to the ATP:

\begin{itemize}
\item[1.] 
Set $n=1$ and obtain $s_1$ from $S_k$ and the given partial decision in the RMD-GK. 
Verify that all trips have a lower bound that is lower than or equal to their corresponding upper bound, otherwise no feasible decision in the ATP exists.
\item [2.] For state $s_n$, if there exists a feasible decision $a_n$ that satisfies all Feasibility Conditions 1-4, set the starting time, $a_n^t$, as early as possible and
skip to [4].
\item [3.] Else (if no feasible decision exists at $s_n$): 
It means that the earliest time $i_n$ can be ready for departure according to Feasibility Condition 1 is too late with respect to the upper bounds on the departure times of its trip and/or the trips that follow it (${ub}_{{n},\theta_{q({i_n})}}, {ub}_{{n},\theta_{q({i_n})+1}},\dots$) according to Feasibility Conditions 2-4. Thus, let decision point $n^\prime(\leq n)$ be the earliest decision point in which the assignment of an order that belongs to one of the trips $\theta_{q(i_n)},\theta_{q(i_n)+1},\dots$ is considered. Then:
\begin{itemize}
\item [3.1.] 
If $n^\prime=n$: No feasible decision exists to the ATP because it means that no earlier decisions regarding orders that belong to these trips were made. That is, the upper bounds are only affected by orders that started preparation before $t_k$, which cannot be postponed. 
Then, the PDFT is terminated with a conclusion that the given partial decision is infeasible.
\item [3.2.] If $n^\prime<n$: Return to [2] with $n={n^\prime}$, and state $s_{n^\prime}$ with the values that prevailed when it was last considered 
 and set 
 ${lb}_{{n^\prime},\theta_{q({i_n})}}=\min\{t_{n,{c}}^{CE}| c \in C_{f_{i_n}}\} + p_{i_n}^t$ (the earliest time that $i_n$ can be ready for departure given the cooks availability), and for each trip that appears later than trip $\theta_{q(i_n)}$ in the sequence, $\theta_m\textit{ } (m>q(i_n))$ set ${lb}_{{n^\prime},\theta_m} = \max({lb}_{{n^\prime},\theta_{q(i_n)}},{lb}_{{n},\theta_m})$.
\end{itemize}
\item[4.] Obtain state $s_{n+1} = H(s_n,a_n)$ and set $n=n+1$. 
If $n \leq|I_k|$ return to [2] (otherwise go to [5]).
\item[5.] For state $s_n$, if there is a feasible decision $a_n$ that satisfies all Feasibility Conditions 5-7, set $a_n^t$ as early as possible.
Skip to [7].
\item [6.] Else (if no feasible decision exists), it means that the earliest time a vehicle is available is too late with respect to ${ub}_{{n},\theta_n}$.  
Let decision point $n^\prime$ be the earliest decision point in which the assignment of an order that belongs to one of the trips $\theta_n,\theta_{n+1},\dots$ is considered.
Return to [2] with $n={n^\prime}$, and state $s_{n^\prime}$ with the values that prevailed when it was last considered but set
 ${lb}_{{n^\prime},\theta_{n}}=\min\{t_{n,{v}}^{VE}| v \in V\}$ (the earliest time a vehicle is available for $\theta_n$) and 
for each subsequent trip $\theta_m \textit{},m>n$ set ${lb}_{{n^\prime},\theta_m} = \max({lb}_{{n^\prime},\theta_n},{lb}_{{n},\theta_m})$.
\item[7.] Obtain state $s_{n+1} = H(s_n,a_n)$ and set $n=n+1$. 
If $n \leq|I_k|+|\widehat{\Theta}_k^x|$ return to [5]. 
Otherwise, terminate and return the assignments and starting times.
\end{itemize}

Due to the sequence constraints inherent in the ATP, the PDFT is designed to prioritize starting tasks as early as possible, provided the freshness constraint is satisfied. 
The next definition and claims show that the PDFT obtains the optimal solution to the ATP.

\begin{definition}[Feasible state]\label{def: feasible state}
We say that a state $s_n$ is feasible if there exists a feasible policy (a sequence of feasible decisions) from state $s_n$ until the last state $s_{|I_k|+|\widehat{\Theta}_k^x|}$, i.e., until all orders and trips are assigned.
\end{definition}
Note that if the freshness constraint is not effective, it is optimal to start all tasks in the ATP as soon as possible. 
The only reason to delay starting times is synchronization with other orders and vehicles so that the freshness constraints are satisfied.
Hence,
\begin{claim}\label{th: greedy best earliest preparation times}
 Let $A$ be the set of all possible preparation starting times of $i_n$ in feasible policies from $s_n$ to $s_{|I_k|+|\widehat{\Theta}_k^x|}$, for a feasible state $s_n, n \leq |I_k|$.
Then, there exists an optimal policy in which $a_n^t=\min(A)$.
\end{claim}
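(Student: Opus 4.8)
The plan is to prove Claim~\ref{th: greedy best earliest preparation times} by reducing it to a one-dimensional monotonicity statement and then establishing that statement with an exchange argument. \textbf{Reduction.} By Claim~\ref{th: indifferent regarding the specific cook}, for $n\le|I_k|$ the value $J\big(H(s_n,a_n)\big)$ of the successor state depends on the decision $a_n=(a^C_n,a^t_n)$ only through the start time $a^t_n$; write $g(\tau):=J\big(H(s_n,a_n)\big)$ whenever some decision $a_n$ with $a^t_n=\tau$ is feasible at $s_n$, and $g(\tau):=\infty$ otherwise. Because $c(s_n,a_n)=0$ for $n\le|I_k|$, the Bellman equation~\eqref{eq:bellman ATP} gives $J(s_n)=\min_\tau g(\tau)$, and by definition $A$ is exactly the set of start times realizable in a feasible policy, which is the set on which $g$ is finite; it is nonempty since $s_n$ is feasible, and finite since the times live on a discrete grid, so $\min(A)$ is attained. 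Since the decision with start time $\min(A)$ is itself feasible at $s_n$, we already have $J(s_n)\le g\big(\min(A)\big)$; hence the claim follows once we show $g\big(\min(A)\big)\le g(\tau)$ for every $\tau\in A$, which I would obtain by proving that $g$ is non-decreasing on $A$.

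\textbf{The coupling.} To show that $\tau_1<\tau_2$ in $A$ implies $g(\tau_1)\le g(\tau_2)$, I would fix an optimal completion $\pi_2$ of the decision starting $i_n$ at $\tau_2$ and build from it a feasible completion of the decision starting $i_n$ at $\tau_1$ of cost at most that of $\pi_2$. Let $s'$ and $s''$ be the two successor states (with the same lowest-index cook, by Claim~\ref{th: indifferent regarding the specific cook}). The transition rules give: all cook eligibilities obey $t^{CE}(s')\le t^{CE}(s'')$, all vehicle eligibilities are equal, all lower bounds obey $lb_\theta(s')\le lb_\theta(s'')$, and all upper bounds obey $ub_\theta(s')\le ub_\theta(s'')$ --- the upper bound of the trip containing $i_n$ and, through the window propagation in the transition, of every trip preceding it, being strictly \emph{smaller}, since $i_n$ now finishes earlier and must be dispatched sooner to arrive fresh. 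If the start and departure times prescribed by $\pi_2$ stay inside all windows as the state is advanced from $s'$ under $\pi_2$'s decisions, then $\pi_2$ is feasible verbatim from $s'$ (its lower-bound and eligibility constraints are no tighter along the $s'$-trajectory than along the $s''$-trajectory, and its upper-bound constraints hold by assumption), hence attains the same departure times and the same cost and $g(\tau_1)\le g(\tau_2)$. The remaining case is that some trip --- necessarily the trip of $i_n$ or an earlier one --- is forced by the tightened upper bounds to depart strictly earlier than under $\pi_2$.

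\textbf{The hard case and the main obstacle.} In that case I would propagate the earlier departure backwards exactly as the PDFT does: bring the affected co-trip orders and preceding trips to start and depart earlier, relying on two facts. First, every such move shifts a departure time earlier, and the total cost $\sum_\theta\sum_j\max\big(0,\,t^d_\theta+\cdots-t^o_{i^\theta_j}-\tau\big)$ is coordinatewise non-decreasing in the trip departure times, so the cost never increases. Second, an earlier finish of an order only relaxes the ready-time and vehicle-return constraints --- hence, via Claim~\ref{th: |V| subsequent trips overlapping}, the earliest-dispatch bound --- of every \emph{later} trip, so downstream feasibility is not destroyed; only freshness windows tighten, and the cascade, acting on finitely many orders and trips with times that only decrease and are bounded below by $t_k$ and the initial eligibilities, terminates. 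The main obstacle is making this second fact fully rigorous: one must guarantee that along the cascade every freshness window $[lb_\theta,ub_\theta]$ stays nonempty and every forced-earlier departure is actually attainable by some vehicle. This is precisely where the hypothesis $\tau_1\in A$ is indispensable --- $\tau_1$ being the start time of \emph{some} feasible policy from $s_n$ means (via Feasibility Conditions~2--4 evaluated at $s_n$) that the window of $i_n$'s trip is nonempty and that a feasible completion of $s'$ exists at all; one then argues that the ``dispatch-everything-as-early-as-feasibility-permits'' completion of $s'$ (the one the PDFT computes) is feasible whenever any completion of $s'$ is and dominates every completion of $s'$ in the coordinatewise order on departure times, which yields the desired feasible completion from $s'$ of cost at most that of $\pi_2$. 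I expect this dominance-of-the-earliest-feasible-completion step to be the technically heaviest part; the reduction and the verbatim case are routine.
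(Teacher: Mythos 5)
Your proposal is correct in spirit but takes a genuinely different, and considerably more careful, route than the paper. The paper's own proof is a two-sentence domination argument: starting $i_n$ earlier lets its trip, all subsequent trips, and all subsequent orders start earlier, which can only reduce delay. That argument only tracks the \emph{lower} bounds of the departure windows; it silently ignores the fact you correctly isolate, namely that an earlier start of $i_n$ \emph{tightens} the freshness upper bound ${ub}_{\theta_{q(i_n)}}$ and, through the window propagation in the transition, the upper bounds of all earlier trips in the sequence, potentially forcing a backward cascade of earlier preparations and departures. Your reduction to monotonicity of $g$ via Claim~\ref{th: indifferent regarding the specific cook} and the Bellman equation is sound, the coordinatewise monotonicity of the delay in departure times is the right cost-side observation, and invoking $\tau_1\in A$ to guarantee the cascade terminates in a feasible completion is exactly the right use of the hypothesis. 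What each approach buys: the paper's argument is short but essentially asserts the conclusion at the point where the freshness coupling could bite; yours makes the exchange explicit and would, if the acknowledged "dominance of the earliest-feasible completion" step were carried out (e.g., by an induction over the backward cascade showing every window stays nonempty and every forced departure is vehicle-attainable, in the style of Claims~\ref{th: |V| subsequent trips overlapping} and \ref{th: stopping condition}), yield a genuinely rigorous proof. The gap you flag is real, but it is precisely the step the paper also leaves informal, so your plan is not weaker than the published argument --- it is a more honest account of where the work lies.
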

\begin{proof}{Proof of Claim \ref{th: greedy best earliest preparation times}.}
Given the feasibility of state $s_n$, among all possible policies it is desired to implement the earliest starting time because the sooner order $i_n$ is scheduled, the sooner trip $\theta_{q(i_n)}$ as well as its subsequent trips can depart, and the sooner the subsequent orders (and their associated trips) can start.
This, aligns with the objective of minimizing delays.
\end{proof}

\begin{claim}\label{th: greedy best earliest departures given preparation starting times}
For a state $s_n, n> |I_k|$, if there exists a feasible decision $a_n$, the optimal decision is to depart trip $\theta_n$ as early as possible, i.e., at time $a_n^t=\max(\min\{t^{VE}_{n,v}| \forall v\in V\},{lb}_{n,\theta_n})$.
\end{claim}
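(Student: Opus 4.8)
The plan is to prove the claim in two steps: first that the stated time $t^\ast := \max\big(\min\{t^{VE}_{n,v}\mid v\in V\},\, {lb}_{n,\theta_n}\big)$ is itself a feasible departure time whenever any feasible decision at $s_n$ exists, and then that departing $\theta_n$ at $t^\ast$ is optimal, via an exchange argument supported by a monotonicity lemma on the remaining decision points. I would set up the argument to mirror the proof of Claim~\ref{th: greedy best earliest preparation times}, but with the roles of cooks/orders replaced by vehicles/trips and with the monotonicity made explicit.

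For feasibility I would first invoke Claim~\ref{th: indifferent vehicles}: it is without loss of optimality to assign $\theta_n$ to a vehicle attaining $\min\{t^{VE}_{n,v}\mid v\in V\}$, so Feasibility Condition~5 reduces to $a_n^t\ge \min\{t^{VE}_{n,v}\mid v\in V\}$. Together with the lower bound ${lb}_{n,\theta_n}\le a_n^t$ of Feasibility Condition~6, the smallest value of $a_n^t$ consistent with Conditions 5 and 6 is exactly $t^\ast$. Since $n>|I_k|$, all preparation times are already fixed, so the remaining constraints — the upper bound ${ub}_{n,\theta_n}$ of Condition~6 and the upper bounds ${ub}_{n,\theta_l}$, $l>n$, of Condition~7 — are state-independent \emph{upper} bounds on $a_n^t$. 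Hence any feasible $a_n^t$ satisfies $a_n^t\ge t^\ast$ and respects all these upper bounds, so $t^\ast$ (being no larger) respects them too and is feasible; one picks any vehicle attaining the minimum eligibility time (the argmin, breaking ties by lowest index).

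For optimality I would take an optimal policy from $s_n$ with decision $\tilde a_n$ at step $n$, departure $\tilde a_n^t\ge t^\ast$, and replace it by the decision departing $\theta_n$ at $t^\ast$ on a minimal-eligibility vehicle. This change does not increase total cost: (i) the immediate cost at step $n$, the total delay of the orders carried by $\theta_n$, is nondecreasing in $t^d_{\theta_n}$ because each summand is $\max(0,\cdot)$ of an affine increasing function of the departure time, hence it can only decrease; and (ii) in the transition $H(s_n,\cdot)$ every updated vehicle-eligibility time — namely $a_n^t+t^t(\theta_n)$ for the chosen vehicle and $\max(t^{VE}_{n,v},a_n^t)$ for the others — is a nondecreasing function of $a_n^t$, while the trip time windows are not touched at a trip-assignment step, so the successor state reached with $t^\ast$ has componentwise no-larger eligibility times and identical trip windows compared with the one reached with $\tilde a_n^t$. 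I would then state and prove by backward induction on $n+1,\dots,|I_k|+|\widehat{\Theta}_k^x|$ a monotonicity lemma: if $s'$ has all cook- and vehicle-eligibility times no larger than those of $s$ and identical trip windows, then $J(s')\le J(s)$, because at each later step the set of feasible departure times for $s'$ contains that of $s$ (the state-dependent binding constraints are lower bounds given by eligibility times and unchanged $lb$'s, while the upper bounds are unchanged), the immediate cost is the same or smaller, and the successor states again satisfy the domination. Applying the lemma to the two versions of $s_{n+1}$ and combining with (i) gives $c(s_n,\text{depart at }t^\ast)+J(s_{n+1}^{t^\ast})\le c(s_n,\tilde a_n)+J(s_{n+1}^{\tilde a_n})$, so $a_n^t=t^\ast$ is optimal.

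The main obstacle is making the monotonicity lemma airtight: one must check, case by case over trip-assignment steps, that the only state-dependent constraints binding from below are eligibility times (Conditions 5–6) and the here-fixed lower bounds ${lb}_{n,\theta_l}$, that Condition~7's upper bounds do not depend on eligibility times when $n>|I_k|$, and that the transition formulas preserve the componentwise domination — all of which follow from the explicit transitions and the one-sided dependence of Conditions 5–6 on eligibility times, but need to be spelled out. A minor point is justifying the lowest-index tie-break, which is precisely what Claim~\ref{th: indifferent vehicles} provides, so no optimality is lost by it.
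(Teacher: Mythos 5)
Your proposal is correct and follows essentially the same three-part argument as the paper's proof: feasibility of the earliest time $t^\ast$ from Conditions 5--7, monotonicity of the immediate tardiness in the departure time, and the observation that an earlier departure leaves subsequent states less constrained. The only difference is one of rigor: your explicit backward-induction monotonicity lemma on dominated states is a formalization of what the paper dispatches in a single sentence by pointing to the transition formulas.
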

\begin{proof}{Proof of Claim \ref{th: greedy best earliest departures given preparation starting times}.}
First, according to Feasibility Conditions 5-7, $a_n^t\geq\max(\min\{t^{VE}_{n,v}| \forall v\in V\},{lb}_{n,\theta_n})$, therefore this decision is feasible.
Second, it is trivial that the sooner trip $\theta_n$ departs, the lower the total tardiness of the orders included in it.
Third, the earlier the trip departs, the subsequent states are less constrained (see the \emph{Transition} description above).
\end{proof}

\begin{claim}\label{th: optimal algorithm}
The PDFT provides an optimal solution to the ATP.
\end{claim}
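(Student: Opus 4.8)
The plan is to show that the PDFT returns an optimal policy of the deterministic dynamic program of the ATP, which is optimal for the ATP by construction of that dynamic program (its Bellman equation (\ref{eq:bellman ATP}) defines the minimum total delay at the initial state). Two things then have to be established: (i)~the PDFT reports infeasibility if and only if the ATP, for the given state $S_k$ and partial decision, admits no feasible policy in the sense of Definition~\ref{def: feasible state}; and (ii)~whenever a feasible policy exists, the schedule returned by the PDFT attains the minimum total delay. A first reduction collapses the decision at every decision point to the choice of a single timing $a_n^t$: by Claims~\ref{th: indifferent regarding the specific cook} and~\ref{th: indifferent vehicles}, once a timing satisfying the assignment-independent feasibility conditions is fixed, assigning the task to the lowest-indexed available cook (respectively vehicle) is without loss of optimality, since the resulting states are symmetric and have equal value. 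Hence it suffices to show that, viewed as a rule selecting timings, the PDFT reproduces an optimal DP policy.

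Second, I would prove optimality conditional on a correct notion of \enquote{earliest globally feasible start}. By Claim~\ref{th: greedy best earliest preparation times}, from any feasible state $s_n$ with $n\le|I_k|$ there is an optimal continuation in which $a_n^t$ equals the smallest preparation start time attained by some feasible policy from $s_n$, and Claim~\ref{th: greedy best earliest departures given preparation starting times} gives the analogous statement for trip departures when $n>|I_k|$. Iterating this over the decision points (an induction on $n$, assuming all earlier committed timings are already the earliest feasible ones), an optimal policy is obtained by always starting each task as early as is consistent with global feasibility. Thus the whole burden is to argue that the PDFT sets each $a_n^t$ to exactly this minimal globally feasible value --- never larger, which would add avoidable delay, and never smaller, which would be infeasible. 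Steps~2 and~5 choose $a_n^t$ as early as the \emph{necessary} Feasibility Conditions~1--4 (respectively~5--7) permit, while Steps~3 and~6 raise this value only upon a later violation; what remains is to show the repair terminates precisely at the minimal globally feasible value.

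The heart of the argument is an invariant, proved by induction over the sequence of forward and backtracking steps performed by the PDFT: whenever the PDFT inspects a state $s_n$, every stored window $({lb}_{n,\theta},{ub}_{n,\theta})$ is \emph{valid}, i.e.\ in every feasible policy of the ATP consistent with the assignments already committed, trip $\theta$ departs within $[{lb}_{n,\theta},{ub}_{n,\theta}]$. The base case holds because the initial windows are derived only from orders that already started preparation before $t_k$ (whose ready times are fixed) and from genuine vehicle availability times. Forward transitions tighten the windows exclusively through necessary constraints --- the freshness span, the trip sequence, and the $|V|$-trip overlap bound of Claim~\ref{th: |V| subsequent trips overlapping}. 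A backtracking step resets ${lb}_{n',\theta_{q(i_n)}}$ (respectively ${lb}_{n',\theta_n}$) to the earliest instant $i_n$ can be ready given cook availability (respectively the earliest instant a vehicle is free for $\theta_n$) and propagates it forward along the trip sequence; since no feasible policy can ready $i_n$ --- hence dispatch its trip --- earlier than that, the updated bound is again necessary, so the invariant is preserved and no feasible solution is ever discarded. Given the invariant, (i)~follows: the PDFT declares infeasibility (Step~3.1 and the symmetric case in Step~6) exactly when an upper bound built from immutable data lies strictly below a necessary lower bound, a genuine contradiction; and when the PDFT terminates with a schedule, all Feasibility Conditions have been verified, so it is feasible, and each task starts at the smallest value compatible with the accumulated (necessary, hence by feasibility tight) bounds, which by the previous paragraph is optimal.

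The step I expect to be the main obstacle is justifying that the backtracking target $n'$ --- the earliest decision point at which an order of one of the trips $\theta_{q(i_n)},\theta_{q(i_n)+1},\dots$ was scheduled --- is the correct place to resume: one must show that no decision made before $n'$ can be responsible for the infeasibility and that none of those earlier commitments needs revision, which is what legitimizes the branch \enquote{if $n'=n$, then the ATP is infeasible} rather than leaving it as a heuristic cutoff. This relies on the monotone, forward-only way the lower bounds propagate along the trip sequence (only trips at or after $\theta_{q(i_n)}$ are affected by $i_n$), and it is where the specific structure of the ATP state has to be exploited carefully. A secondary point to close is termination and the polynomial bound: every backtrack strictly increases at least one lower bound, lower bounds never decrease and are capped by the matching upper bounds (ultimately by $T$), so on the integer time grid of the ATP only finitely many backtracks occur; bounding the number of forward steps between consecutive backtracks by $O(|I_k|+|\widehat{\Theta}_k^x|)$ then yields the claimed running time.
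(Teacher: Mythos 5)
Your proposal follows essentially the same route as the paper's proof: reduce every decision to a timing choice via the cook/vehicle symmetry claims (Claims \ref{th: indifferent regarding the specific cook} and \ref{th: indifferent vehicles}), invoke the earliest-start optimality of Claims \ref{th: greedy best earliest preparation times} and \ref{th: greedy best earliest departures given preparation starting times}, and argue that the backtracking updates impose only bounds that hold in every feasible policy --- your explicit window-validity invariant is a cleaner formalization of what the paper asserts when it states that the changes made at $s_{n^\prime}$ are \enquote{necessary in each feasible policy}. The only deviation is your secondary termination remark about an \enquote{integer time grid} (the ATP's times are continuous; the paper instead bounds the number of backtracks via the visit-twice stopping condition of Claim \ref{th: stopping condition}), but that point is handled by separate claims and does not affect the optimality argument itself.
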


\begin{proof}{Proof of Claim \ref{th: optimal algorithm}.} 
For each feasible state \(s_n\), an optimal policy in the set of all possible policies from $s_n$, is to set \(a^t_n\) as early as possible (Claims \ref{th: greedy best earliest preparation times} and \ref{th: greedy best earliest departures given preparation starting times}).
The PDFT systematically explores feasible policies from each state, prioritizing the earliest possible starting times at each decision point. Consequently, the only feasible policy identified by the algorithm ensures that starting times at each feasible state are as early as possible.
That is, at each state $s_n$ along the algorithm in steps 2 and 5, if possible, we assign tasks at the earliest possible starting time according to the state's feasibility conditions.
Otherwise, if no feasible decision exists at $s_n$, in steps [3] and [6] we return to $n^\prime$, the first decision point whose starting time postponement may enable a feasible starting time for order $i_n$. 
From that point on, we update the value of the state variables to account for the feasibility of order $i_n$ and continue from that point on, as previously. 
Note that this means that we refer to state $s_n$ in multiple iterations even though its values change from one iteration to the next.
We also note that for $n\leq |I_k|$ the earliest time order $i_n$ can be prepared by a cook, $\min\{t_{n,{c}}^{CE}| c \in C_{f_{i_n}}\}$, is a lower bound on the earliest time order $i_n$ can start preparation in any feasible policy because all orders $i_1,\dots,i_{n-1}$ assigned as soon as possible in all earlier decisions.
Similarly, for $n > |I_k|$ the earliest time trip $\theta_n$ can be departed by a vehicle, $\min\{t_{n,{v}}^{VE}| v \in V\}$, is a lower bound on the earliest time it can depart in any feasible policy because all orders and trips started as soon as possible in all earlier decisions.
Thus, the changes made in the values of the state variables of state $s_{n^\prime}$ are necessary in each feasible policy that starts in $s_{n^\prime}$.
\end{proof}

\begin{claim}\label{th: stopping condition}
  If a state $s_n$ has no feasible decision, and then this state is visited again after going back to state $s_{n^\prime}$ (with the described updates in Step [3] or [6]), 
  and again has no feasible decision, then, no feasible decision exists in the ATP. 
\end{claim}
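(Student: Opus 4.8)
The plan is to argue by contradiction. Assume the ATP admits a feasible policy $P^*$ — a complete assignment of all orders and trips to cooks and vehicles together with start and departure times satisfying every freshness and sequence constraint — and show that then $s_n$ cannot be infeasible when it is revisited.

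The first ingredient I would reuse is the fact already established inside the proof of Claim~\ref{th: optimal algorithm}: every update of an eligibility time or of a trip lower bound performed in Step~[3] or Step~[6] is \emph{necessary}, i.e.\ is respected by every feasible policy of the ATP. Because PDFT moreover always starts each task as early as the currently active (necessary) lower bounds and the data permit (Steps~2 and~5; see also Claims~\ref{th: greedy best earliest preparation times} and~\ref{th: greedy best earliest departures given preparation starting times}), a joint induction along the decision points yields the invariant that the partial schedule PDFT has produced \emph{element-wise under-estimates} $P^*$: the PDFT completion time of every already-handled order is $\le$ its completion time in $P^*$, and the PDFT lower bound on every trip's departure is $\le$ its departure time in $P^*$. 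In particular, on the revisit of $s_n$ the quantity $C:=\min_{c\in C_{f_{i_n}}}t^{CE}_{n,c}+t^p_{i_n}$ (for $n\le|I_k|$; for $n>|I_k|$ take $C:=\max(\min_{v\in V}t^{VE}_{n,v},{lb}_{n,\theta_n})$, invoking Claims~\ref{th: |V| subsequent trips overlapping} and~\ref{th: indifferent vehicles}) is a lower bound on $P^*$'s completion time of $i_n$ (resp.\ $P^*$'s departure time of $\theta_n$). Monotonicity of the values imposed by successive backtracks (each one does not decrease ${lb}_{\cdot,\theta_{q(i_n)}}$, and these values are bounded by $T$) means the process stabilizes, so the "second visit" of the statement is one at which the lower bound the backtrack imposed on $\theta_{q(i_n)}$ already equals $C$.

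Next I would exploit that $s_n$ is infeasible on the revisit: by Feasibility Conditions~2--4 (resp.\ 6--7) there is a trip $\theta\in\{\theta_{q(i_n)},\theta_{q(i_n)+1},\dots\}$ (resp.\ $\theta=\theta_n$) with $C>{ub}_{n,\theta}$. The heart of the argument is to show that the term realizing ${ub}_{n,\theta}$ must be the one inherited from ${ub}_{1,\theta}$, i.e.\ the freshness deadline of an order of $\theta$ whose preparation started before $t_k$ and is therefore immovable. Indeed, the first backtrack already raised the lower bound of $\theta_{q(i_n)}$, and by propagation that of every later trip, to $C$, and then re-processed from $s_{n'}$; since $n'$ is by construction the earliest decision point of an order contained in one of these trips, every \emph{movable} order $k$ of $\theta$ is handled with an active lower bound ${lb}_{\cdot,\theta}\ge C$, so the lower-bound side of Feasibility Condition~2 forces its completion time to be $\ge C-\delta_{f_k}+t_k(\theta)$, hence its contribution (completion time $+\,\delta_{f_k}-t_k(\theta)$) to ${ub}_{n,\theta}$ to be $\ge C$; thus it cannot be the binding term. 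The binding term is therefore the pre-started, immovable one; it has the same completion time in $P^*$, so in $P^*$ trip $\theta$ departs no later than ${ub}_{1,\theta}={ub}_{n,\theta}<C$. But $P^*$ dispatches trips in the given order, so $\theta$ departs no earlier than $\theta_{q(i_n)}$, which departs no earlier than $i_n$ completes, which is $\ge C$; this gives $C\le(\text{departure of }\theta\text{ in }P^*)<C$, a contradiction. Hence no feasible $P^*$ exists and the ATP is infeasible.

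The step I expect to be the main obstacle is exactly that "heart": making rigorous that, by the time $s_n$ is found infeasible \emph{a second time}, the lower bounds of the relevant trips have been pushed as high as feasibility forces them, so that the only surviving cause of infeasibility is a pre-started order. This needs the monotonicity/stabilization argument above together with a careful bookkeeping check that the Step~[3]/[6] propagation rules and the window transitions indeed delay \emph{all} movable orders of the affected trips by the full amount dictated by the new lower bound (this is where the lower-bound side of Feasibility Condition~2 and the updates ${lb}_{n+1,\theta_m}=\max(\cdot,\cdot)$ are used). The vehicle-side conditions (Feasibility Condition~4, and Feasibility Conditions~5--7 when $n>|I_k|$) would be treated by the same template, substituting vehicle eligibility for cook eligibility and using Claims~\ref{th: |V| subsequent trips overlapping} and~\ref{th: indifferent vehicles}.
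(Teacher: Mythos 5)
Your overall strategy---contradiction with a hypothetical feasible policy $P^*$, using the invariant that every bound the PDFT imposes is respected by any feasible policy---is legitimate, and it is a genuinely different route from the paper's, which argues dynamically that the postponement cycle would repeat forever. However, the step you yourself flag as the ``heart'' is not merely the hard part: it is false as stated. At the second visit of $s_n$, the lower bound that the first backtrack imposed on $\theta_{q(i_n)}$ and its successors equals $C_1=\min_{c\in C_{f_{i_n}}}t^{CE}_{n,c}+t^p_{i_n}$ evaluated at the \emph{first} visit, whereas the quantity $C$ whose excess over ${ub}_{n,\theta}$ witnesses the second infeasibility is some $C_2\geq C_1$ evaluated \emph{after} the rescheduling has pushed the cook (or vehicle) eligibility times later. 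Your argument that every movable order of $\theta$ contributes at least $C$ to ${ub}_{n,\theta}$ therefore only yields ``at least $C_1$''; the binding term can perfectly well be a movable order rescheduled exactly at the level $C_1<C_2$. Indeed, in the generic case where a single order $k$ in $\theta_{q(i_n)}$ both determines the freshness deadline of the trip and, through the cook sequence, the eligibility time for $i_n$, the binding term at the second infeasible visit \emph{is} that movable order. For such an order the static certificate collapses, because a feasible $P^*$ may start $k$ later than the PDFT did, thereby relaxing the upper bound. The patch ``monotone and bounded by $T$, hence the process stabilizes, hence at the second visit the imposed lower bound already equals $C$'' does not repair this: a monotone bounded real sequence converges but need not attain its limit in finitely many steps, let alone by the second visit, and the claim is about the literal second visit.

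The mechanism that actually makes the claim true---and the one the paper's (admittedly terse) proof appeals to---is dynamic rather than static: the second infeasibility certifies that the postponement of the orders belonging to $\theta_{q(i_n)},\theta_{q(i_n)+1},\dots$ propagated back into the eligibility times of the resources needed by $i_n$ (otherwise $C_2$ would not exceed the relaxed upper bounds and the revisit would be feasible). Because all start times are computed by earliest-start (max-plus) propagation, once a delay of these upstream orders propagates to $i_n$'s eligibility, every further delay propagates as well, so the gap between $C$ and the upper bounds can never be closed by additional postponement; combined with the necessity of every imposed bound, this yields infeasibility of the ATP. If you wish to keep your $P^*$-contradiction framing, the object you must extract from the second infeasible visit is not a pre-started order but this taut propagation chain from the postponed orders back to $i_n$.
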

\begin{proof}{Proof of Claim \ref{th: stopping condition}.}
Note that in the case described, some orders were postponed to allow a later departure of order $i_n$ (for $n \leq |I_k|$) or trip $\theta_n$ (for $n > |I_k|$).
Thus, if no feasible decision exists at the new $s_n$, it means that the eligibility times of cooks of the food type of order $i_n$ (for $n \leq |I_k|$) or the vehicles (for $n > |I_k|$), were also postponed so that the corresponding resources are now eligible later than in the former $s_n$.
This means that some of the postponed orders still cannot depart when order $i_n$ can be ready (for $n \leq |I_k|$) or when the first available vehicle can depart for their corresponding trip (for $n > |I_k|$).
Then, postponing again these orders will postpone again the corresponding eligibility times so that again no feasible decision will be obtained.
\end{proof}

\begin{claim}\label{th: the PDFT polynomial}
The PDFT is polynomial in the number of open orders, $|I_k|$.
\end{claim}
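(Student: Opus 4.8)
The plan is to write the running time as the product of two quantities and bound each by a polynomial in $|I_k|$: (i) the number of times the algorithm processes a single decision point --- i.e.\ executes Step~2 (or Step~5) for an order (or a trip), possibly followed by a back-jump in Step~3 (or Step~6) --- and (ii) the cost of one such processing event. As a preliminary, I note that the number of decision points is $N:=|I_k|+|\widehat{\Theta}_k^x|$, and since every trip in $\widehat{\Theta}_k^x$ contains at least one distinct open order we have $|\widehat{\Theta}_k^x|\le|I_k|$, hence $N\le 2|I_k|$.

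For factor (ii), I would observe that the algorithm never branches over resource assignments: by Claims~\ref{th: indifferent regarding the specific cook} and~\ref{th: indifferent vehicles} it simply assigns the lowest-indexed available cook or vehicle. Checking Feasibility Conditions~1--4 for an order (resp.\ 5--7 for a trip) and computing the earliest admissible starting (resp.\ departure) time then amounts to evaluating finitely many $\min$/$\max$ expressions over the at most $|C|$ eligible cooks (resp.\ $|V|$ vehicles), over the $O(N)$ trips, and over their stored windows $({lb}_{n,\theta},{ub}_{n,\theta})$; and the transition $H(s_n,a_n)$ updates one cook/vehicle eligibility time, the window of the trip containing the current order, the windows of all trips before and after it in the fixed sequence, and --- by the propagation rule justified in Claim~\ref{th: |V| subsequent trips overlapping} --- the lower bounds of at most $|V|+1$ trips following each trip whose lower bound changed. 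All of this is $\mathrm{poly}(N,|V|,|C|)$, in particular polynomial in $|I_k|$.

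The crux is factor (i). Between two consecutive back-jumps the algorithm only advances through decision points, so each such ``segment'' contains at most $N$ processing events, and it suffices to bound the number of back-jumps. Here I would use that every back-jump originating at decision point $n$ strictly raises the departure lower bound ${lb}_{\cdot,\theta_{q(i_n)}}$ (and, by propagation, those of all later trips) --- strictly, since otherwise $s_n$ would already admit a feasible decision --- while these lower bounds are monotone non-decreasing throughout the run. Combining this monotonicity with the stopping rule of Claim~\ref{th: stopping condition} --- once $s_n$ is revisited after a back-jump from it and is still infeasible, the instance is declared infeasible and the algorithm halts --- one concludes that each of the $O(N)$ decision points originates only polynomially many back-jumps, each pinning a distinct later trip's lower bound to a cook/vehicle-eligibility value it can never again fall below. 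Hence the number of back-jumps, the number of processing events, and therefore the running time are all polynomial in $|I_k|$; this also re-establishes termination of the PDFT.

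The step I expect to be the main obstacle is precisely the monotonicity-plus-stopping-rule count of the last paragraph: one must verify carefully that no sequence of back-jumps originating at a fixed decision point can recur more than polynomially often --- that each such back-jump permanently tightens a \emph{distinct} trip window and that Claim~\ref{th: stopping condition} genuinely forecloses any further recurrence there --- so that the resulting bound does not implicitly scale with the magnitudes of the preparation and travel times. The remaining ingredients --- the count of decision points, the per-step cost, and the segment-length observation --- are routine bookkeeping.
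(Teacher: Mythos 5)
Your proof is correct and follows essentially the same route as the paper: count the $O(|I_k|)$ decision points, bound the per-step cost polynomially, and use the stopping condition of Claim~\ref{th: stopping condition} to bound the number of back-jumps combinatorially rather than by the magnitudes of the time data. The step you flag as the main obstacle is in fact resolved immediately by Claim~\ref{th: stopping condition} exactly as the paper uses it: a second infeasibility at the same decision point terminates the algorithm, so each decision point originates at most one back-jump, giving at most $1+2+\dots+(|I_k|+|\widehat{\Theta}_k^x|)=O(|I_k|^2)$ iterations and, with the $O(|I_k|)$ per-step cost, the paper's $O(|I_k|^3)$ bound.
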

\begin{proof}{Proof of Claim \ref{th: the PDFT polynomial}.}
Each of the seven algorithm steps has a complexity of $O(|I_k|)$.
In the worst case at each decision point $n$ ($ 1\leq n\leq |I_k|+|\widehat{\Theta}_k^x|$) we return to decision point $1$ once (Claim \ref{th: stopping condition}), i.e., the number of algorithm iterations cannot exceed $1+2+\dots+(|I_k|+|\widehat{\Theta}_k^x|)=O(|I_k|+|\widehat{\Theta}_k^x|)^2=O(|I_k|^2)$ because the number of trips is not larger than the number of orders.
Thus, the complexity of the PDFT is $O(|I_k|^3)$.
\end{proof}

\begin{claim}\label{th: PDFT determines feasibility}
 The PDFT determines that a partial decision is feasible if and only if it is feasible.    
\end{claim}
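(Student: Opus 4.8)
The plan is to prove the two directions of the equivalence separately, using the structural properties of the ATP already established --- Claims~\ref{th: greedy best earliest preparation times}, \ref{th: greedy best earliest departures given preparation starting times}, \ref{th: optimal algorithm}, \ref{th: stopping condition}, and \ref{th: the PDFT polynomial} --- together with the observation that a partial decision is feasible precisely when the ATP it defines admits a feasible solution (a complete assignment of orders to cooks and trips to vehicles, with starting and departure times, respecting the given sequences and all RMD-GK constraints; cook/vehicle relabeling does not affect feasibility).

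For the \enquote{only if} direction, I would assume the PDFT declares the partial decision feasible, so that it reached Step~7 with $n=|I_k|+|\widehat{\Theta}_k^x|$ and every order received a cook and a starting time while every trip received a vehicle and a departure time. I would then check that the resulting full decision $x_k$ meets every RMD-GK requirement. Each order is prepared by one cook and lies on one trip because the sequences in the partial decision already partition $I_k$; processing follows the prescribed sequences because decision points are taken in food-type and trip order and the transition $H$ updates the eligibility times $t^{CE}_{n,c},t^{VE}_{n,v}$ monotonically; no cook prepares two orders at once and no vehicle departs before returning, again by the structure of $H$ (the eligibility times encode resource availability) together with Feasibility Conditions~1 and 5; already-started preparations are untouched since they enter only through the initial state $s_1$ (via $t^{CE}_{1,c}$ and the initial windows of $I^0_k$) and are never modified; capacity and the elementary-tour property are inherited from $\widehat{\Theta}_k^x$; and Feasibility Conditions~2--4 and 6--7, enforced in Steps~2 and 5, keep each window $[{lb}_{n,\theta},{ub}_{n,\theta}]$ nonempty and each trip departing inside it, so --- since ${lb}$ tracks when all orders of the trip are ready and ${ub}$ the latest departure preserving freshness --- every order's preparation finishes before its trip departs and the freshness constraint holds. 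Hence $x_k$ is a feasible decision inducing the partial decision.

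For the \enquote{if} direction, I would assume the partial decision is feasible, i.e.\ the ATP admits a feasible policy, and show the PDFT never declares infeasibility; since it terminates (Claim~\ref{th: the PDFT polynomial}), it then reaches Step~7. The PDFT declares infeasibility only (i) in Step~1, if some trip has ${lb}_{1,\theta}>{ub}_{1,\theta}$ --- impossible, as these bounds depend only on $I^0_k$, whose starting times are fixed in $S_k$; (ii) in Step~3.1, when $n^\prime=n$, where the upper bounds blocking $i_n$ come only from orders started before $t_k$, and where by Claim~\ref{th: greedy best earliest preparation times} the orders $i_1,\dots,i_{n-1}$ were placed as early as possible, so $\min\{t^{CE}_{n,c}\mid c\in C_{f_{i_n}}\}$ lower-bounds the start of $i_n$ in \emph{every} feasible policy; if even that earliest ready time violates the fixed upper bounds, no feasible policy exists, a contradiction; or (iii) via the stopping rule, when a revisited state after a backtrack is still infeasible, which by Claim~\ref{th: stopping condition} implies the ATP is infeasible, again a contradiction. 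So infeasibility is never declared and the PDFT reports feasibility.

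The main obstacle is the \enquote{if} direction: I must guarantee that the single greedy-earliest policy the PDFT follows stays feasible whenever \emph{some} feasible policy exists, despite the backtracks in Steps~3.2 and 6 raising lower bounds on departure times. The key point is that every such raise --- to $\min\{t^{CE}_{n,c}\}+p^t_{i_n}$ or $\min\{t^{VE}_{n,v}\}$, propagated forward along the sequence and through Claim~\ref{th: |V| subsequent trips overlapping} --- is itself a lower bound valid in every feasible policy (the argument used in the proof of Claim~\ref{th: optimal algorithm}), because all preceding tasks were already placed as early as possible. Hence the PDFT's state is componentwise never more constrained than the corresponding state of any feasible policy agreeing with it so far, so a genuine dead end for the PDFT is a genuine dead end for the ATP; combined with Claims~\ref{th: greedy best earliest preparation times} and \ref{th: greedy best earliest departures given preparation starting times} (earliest starts are without loss), this completes the proof.
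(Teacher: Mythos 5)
Your proposal is correct and follows essentially the same route as the paper: the paper's own proof is a two-line remark that the PDFT only declares infeasibility in the situation covered by Claim~\ref{th: stopping condition} and otherwise returns the optimal feasible solution of Claim~\ref{th: optimal algorithm}. You simply make explicit what the paper leaves implicit --- enumerating the remaining infeasibility exits (Step~1 and Step~3.1) and verifying that the returned full decision satisfies the RMD-GK constraints --- using the same supporting claims and the same key invariant that every lower-bound update performed during backtracking is valid for every feasible policy.
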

\begin{proof}{Proof of Claim \ref{th: PDFT determines feasibility}.}
Note that the PDFT only determines that a partial decision is not feasible in the case described in Claim \ref{th: stopping condition}, otherwise it obtains a feasible (optimal) solution.
\end{proof}

\section{Implementation Details of the VFA}
In this section, we provide detail on our value function approximation (VFA).

\subsubsection*{VFA Features.}\label{app: features}
Our features are designed with two requirements in mind. 
First, they must allow for transfer learning, i.e., the dimension of the feature vector must be independent of the expected demand, the number of cooks employed, and the size of the delivery fleet. 
\rev{This means that the dimensionality of our feature vector remains constant at 21 dimensions. This fixed dimensionality ensures that our model can generalize across different scenarios without needing reconfiguration for each unique set of operational parameters, thereby enhancing the model's adaptability and scalability.}
Second, the features must represent currently available resources, corresponding to our cooks and our fleet, in each given post-decision state.
\rev{This includes the current status of our cooks and fleet, which are critical to optimizing operational decisions. To capture the dynamics of our resources effectively, we aggregate data into summary statistics, providing a comprehensive yet concise overview of each resource's state. These statistics include the mean, maximum, and minimum values, which offer insights into the distribution and extremities of workload and availability among cooks and vehicles. For instance, understanding the range of workloads among cooks (from least to most burdened) helps in balancing the allocation of new orders more effectively, preventing bottlenecks and improving overall efficiency.
Table~\ref{tab:feature_summary} below summarizes the specific features employed in our model.}

\begin{table}[!t]
\caption{Summary of the employed features.}\label{tab:feature_summary}
\small
\centering
\begin{tabular}{lll}
\cline{1-2}
Feature Description               & Summary Statistics     &  \\ \cline{1-2}
Current time                      &                        &  \\
Percentage of idle cooks          &                        &  \\
Number of orders per cook         & Mean, Maximum, Minimum &  \\
Total work time per cook          & Mean, Maximum, Minimum &  \\
Scheduled finishing time per cook & Mean, Maximum, Minimum &  \\
Percentage of idle vehicles       &                        &  \\
Time of final return 
per vehicle           & Mean, Maximum, Minimum &  \\
Number of trips per vehicle      & Mean, Maximum, Minimum &  \\
Number of orders per vehicle      & Mean, Maximum, Minimum & 
\end{tabular}
\end{table}

\subsubsection*{VFA Architecture.}\label{app: architecture}
We employ a fully-connected feed-forward neural network to approximate the value function $V$. The network consists of an input layer with 21 nodes, two hidden layers with 256 nodes each, and an output layer with one node. We use a ReLU activation function for input layer and hidden layers and no activation function for the output layer. The initial weights and biases of each node are initialized according to the Kaiming He initialization \citep{he2015delving}.

\subsubsection*{VFA Training Framework.}\label{app: training}
The VFA is trained during an extensive offline simulation. For this purpose, all tuples of post-decision states and cumulative delays originating from each post-decision state are saved to an experience replay \citep{lin1992self}. The size of the experience replay is set to one million tuples. After each day, we sample a batch of 128 experiences from the experience replay to train the VFA. We choose a constant learning rate of 0.001 and employ adaptive moment estimation \citep{kingma2015adam} to update all network parameters. Due to the high stochasticity of the restaurant meal delivery with ghost kitchens problem, we do not follow a common $\epsilon$-greedy strategy. Instead, we always search for decisions based of the current VFA parameterization.
\rev{
Algorithm \ref{alg:training} summarizes the NN training steps. 
}
\begin{algorithm}[!t]
\color{safeblue}
\small
\DontPrintSemicolon
\caption{NN training algorithm.}\label{alg:training}
\KwIn{Experience replay $\mathcal{E}$ with capacity $1,000,000$ tuples}
\KwOut{Optimized neural network parameters $\Theta$}
Initialize neural network with input layer (21 nodes), two hidden layers (256 nodes each), and output layer (1 node)\;
Initialize all weights and biases $\Theta$ using Kaiming He initialization\;
Set learning rate to $\alpha = 0.001$\;
\While{not converged}{
    
    $\mathcal{B} \gets \text{SampleBatch}(\mathcal{E}, 128)$ \tcp*{Sample a batch from the experience replay} 
    \For{$(\text{post decision state}, \text{cumulative delay}) \in \mathcal{B}$\tcp*{Perform one training step on the batch}} {
        $y \gets \text{cumulative delay}$\;
        $V_{\Theta}(\text{post decision state}) \gets \text{Forward pass through network}$\;
        $\mathcal{L}(\Theta) \gets \text{Compute loss}(\text{MSE}(V_{\Theta}(\text{post decision state}), y))$\;
        $\text{Backpropagate loss and update parameters using Adam optimizer}$\;
    }
    \If{convergence criteria met}{
        break\;
    }
}
\Return{$\Theta$}\;
\end{algorithm}

\section{Generation of Instances}\label{app:instance_generation}
We first, define how delivery requests are sampled, before we detail how preparation times are sampled.
The procedure for sampling all 
orders 
over a day is given by Algorithm~\ref{alg:customer_requests}. 
The procedure is summarized as follows: First, we sample the number of 
orders 
at lunch time $n_l\sim\mathcal{N}\left(\mu_{n,l}, \sigma_{n,l}\right)$ and dinner time $n_d\sim\mathcal{N}\left(\mu_{n,d}, \sigma_{n,d}\right)$. 
We set $\sigma_{n,l}=\frac{1}{40}\mu_{n,l}$ and $\sigma_{n,d}=\frac{1}{40}\mu_{n,d}$. 
Second, for each order 
$k\in\{1,\dots,n_l\}$, we sample an arrival time $t_k$ from the lunch time distribution and for each order 
$k\in\{n_l+1,\dots,n_d\}$, we sample an arrival time $t_k$ from the dinner time distribution. Third, we sample a customer location $V(c_k)$ for each order. 
The location $V(c_k)$ depends on the order's arrival 
time $t_k$.
The closer the order's arrival 
time is to the lunch time peak, the higher the chance that we resample the customer location with a fixed resampling probability $\rho$ if the customer location is not within the inner city nodes. 
Analogously, the probability of resampling with the same fixed resampling probability $\rho$ increases the closer the order's arrival 
time is to the dinner time peak. However, in this case we only resample if the initial customer location is within the inner city nodes.
\begin{algorithm}[!t]
\small
\DontPrintSemicolon
\caption{Sampling customer requests.}\label{alg:customer_requests}
\KwIn{customer locations $C\subset V$, inner city nodes $V^\prime\subset V$, demand parameters $(\mu, \sigma)$, resampling rate $\rho\in (0,1)$}
\KwOut{Set of customer requests $R\subset [0,T]\times V$}
$R\gets\emptyset$\;
$n_l\sim\mathcal{N}\left(\mu_{n,l}, \sigma_{n,l}\right)$ \tcp*{number of lunch-time requests}
$n_d\sim\mathcal{N}\left(\mu_{n,d}, \sigma_{n,d}\right)$ \tcp*{number of dinner-time requests}
$t_1,\dots,t_{n_l}\sim\mathcal{N}\left(\mu_{t,l}, \sigma_{t,l}\right)$\tcp*{time of lunch-time requests}
$t_{n_l+1},\dots,t_{n_d}\sim\mathcal{N}\left(\mu_{t,d}, \sigma_{t,d}\right)$\tcp*{time of dinner-time requests}
\For{$k\gets 1$ \KwTo $n_l+n_d$}{
$\epsilon\sim U[0,1]$\;
$V(c_k)\sim U(C)$\tcp*{sample customer location}
\eIf{$\epsilon < (1080 - t_k) / 360$}
{
    $\epsilon\sim U[0,1]$\;
    \If{$V(c_k)\not\in V^\prime$ and $\epsilon<\rho$}
    {
    $V(c_k)\sim U(C)$\tcp*{resample customer location}
    }
  }{
    $\epsilon\sim U[0,1]$\;
    \If{$V(c_k)\in V^\prime$ and $\epsilon<\rho$}
    {
    $V(c_k)\sim U(C)$\tcp*{resample customer location}
    }
}
$R \gets R\cup \{(t_k, V(c_k))\}$\tcp*{update customer requests}
}
\end{algorithm}
The final set of orders is realized according to their arrival 
times. When sampling customer locations in our case study, we set the resampling rate to $\rho=0.5$.

We assume that meal preparation times in minutes are distributed according to a log-normal distribution to match the long-tails in real-world preparation times. We choose $\mathcal{LN}\left(\mu(\text{pt})_j, \sigma(\text{pt})_j\right)$, $\forall j\in J$ with $\mu(\text{pt})_1=10, \sigma(\text{pt})_1=1.5$, $\mu(\text{pt})_2=9, \sigma(\text{pt})_2=1.4$, $\mu(\text{pt})_1=8, \sigma(\text{pt})_1=1.3$, $\mu(\text{pt})_1=7, \sigma(\text{pt})_1=1.2$, and $\mu(\text{pt})_1=6, \sigma(\text{pt})_1=1.1$. We further assume that search times for parking spaces/ delivery addresses are distributed according to $\mathcal{LN}\left(2.5, 1.5\right)$.

\color{safeblue}
\section{Method Analysis}\label{sec: analysis heuristic}

In this section, we take a closer look at the functionality of our method. First, we analyze the PDFT-algorithm. Then, we show the value of transfer learning and the impact of LNS-iterations.

To analyze how many iterations the PDFT-algorithm requires we measure for each considered partial decision the number of iterations before termination and note whether the partial decision was feasible or infeasible. 
We then plot for each number of iterations the relative percentage of partial decisions where the PDFT-algorithm already terminated. The results are shown in Figure~\ref{fig:PDFT}. The x-axis shows the number of iterations. The y-axis depicts the accumulated percentage of termination for eventually feasible (light grey) and infeasible (dark grey) partial decisions. We note that the algorithm terminates in $24.6\%$ of the cases before running any iterations because the analytical checks indicate that for the partial decision candidate, no feasible solution can be obtained. 
We further observe that for the majority of partial decisions ($75.8\%$), the algorithm terminates after at most one run. After five iterations, the process terminated for about $90.0\%$ of decisions. 
The number of partial decisions where no feasible decisions can be found after the 25 iterations and which are then declared infeasible is only $1.0\%$. In an additional experiment, we extended the number of iterations until also the remaining $1.0\%$ of decisions terminated. This revealed that only $0.29\%$ of all decisions were incorrectly declared infeasible when limiting the number of iterations to 25. In essence, while the PDFT-algorithm already solves within polynomial runtime, the analytical checks are quite powerful and reduce the runtime further while finding nearly all feasible decisions. Overall, the runtime of the PDFT-algorithm is rather negligible and decisions can be derived nearly instantaneously.

\begin{figure}[!t]
    \centering          \caption{Termination percentage of the PDFT-algorithm with respect to the number of PDFT-iterations.}
    \includegraphics[width = 15cm]{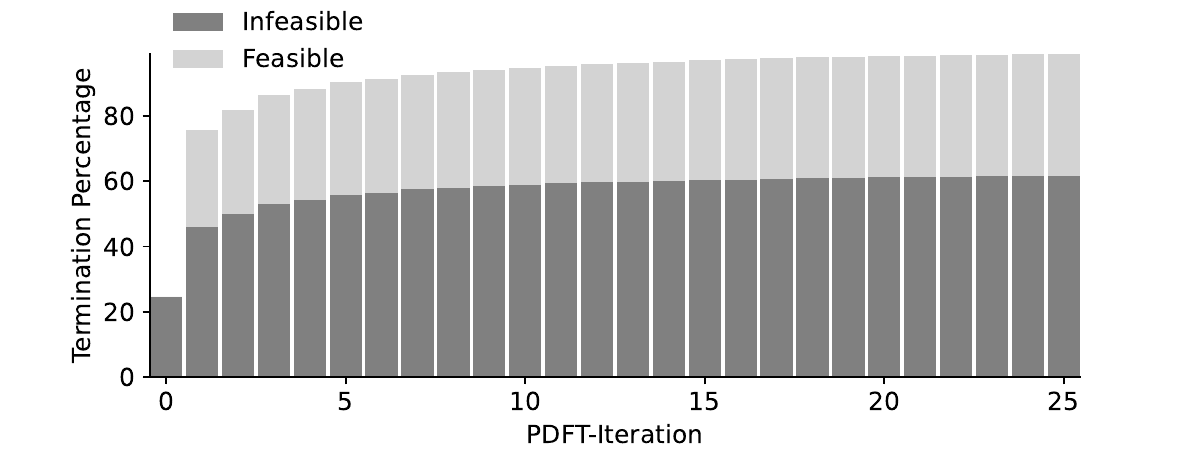}
    \label{fig:PDFT}
    \vspace{-0.5cm}
\end{figure}

We now analyze the importance of transfer learning and LNS-iterations during the learning. To this end, we introduce two benchmark policies:

\begin{itemize}
    \item $\textit{AI}_{\text{Small}}$: This policy is similar to $\textit{AI}$, but it is not fine-tuned for the Large instance.
    \item $\textit{AI}_{\text{30}}$: This policy is similar to $\textit{AI}$, but it is trained with 30 LNS-iterations only.
\end{itemize}

we compare our original policy $\textit{AI}$ (fine tuned using 70 LNS-iterations during offline training) to the benchmarks $\textit{AI}_{\text{Small}}$ (not fine tuned) and $\textit{AI}_{\text{30}}$ (fine tuned using 30 LNS-iterations during offline training) in case all policies perform 70 LNS-iterations during the online execution. The comparison is shown in the center of Figure~\ref{fig:iterations}. We observe that with $\textit{AI}_{\text{Small}}$ delay increases by $2.0\%$ and with $\textit{AI}_{\text{30}}$ by $4.1\%$. This indicates that both the increased number of LNS-iterations and especially the fine-tuning play important parts in the functionality of our methods.

Next, we analyze the performance of the three policies in case the online LNS-iterations are varied to 30 and 100. The delay increase compared to our \textit{AI} policy with 70 online LNS-iterations is shown on the left (for 30) and on the right (for 100 LNS-iterations) of Figure~\ref{fig:iterations}. 
Recalling that the improvement of our method to \textit{Integrated} was 12.8\%, we observe that with 30 LNS-iterations, only our \textit{AI} policy can compete. We also observe that \textit{AI} policy is superior regardless the number of online LNS-iterations. Policy $\textit{AI}_{\text{30}}$ achieves similar results to our policy $\textit{AI}$ only with 100 LNS-iterations, i.e., about 30\% more runtime. In contrast, even with 100 LNS-iterations, policy $\textit{AI}_{\text{Small}}$, i.e., without transfer learning, does not reach the high solution quality of $\textit{AI}$ with only 70 LNS-iterations. This comparison reveals two insights. First, while transfer learning already provides effective decisions in case of sufficient (online) LNS-iterations, fine tuning can further improve solution quality. Second, computational burden can be shifted between offline training and online implementation.

\begin{figure}[!t]
    \centering  \caption{Percentage increase in delay for different number of LNS-iterations and \text{AI} policies compared to \text{AI} with 70 LNS-iterations.}
    \includegraphics[width = 15cm]{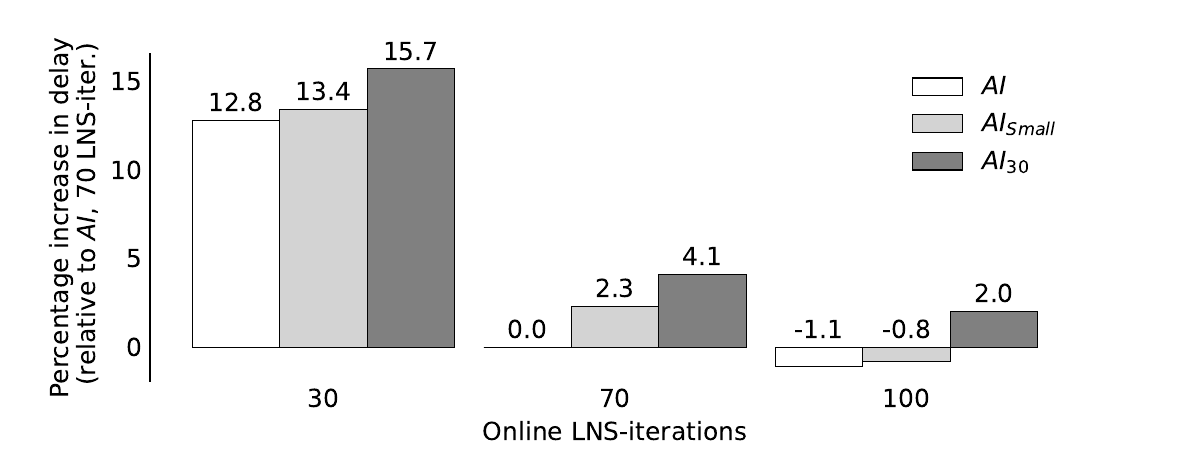}
    \label{fig:iterations}
\end{figure}

\color{safeblue}
\section{Workforce Utilization over Time}\label{app:A_util}

In the following, we analyze how the different policies utilize the workforce over time. To this end, we calculate the average workload utilization for each point of time, i.e., the percentage of the workforce that is occupied either by cooking or traveling. We do this for policies \textit{AI} and \textit{FIFO} and for the Large instances. The developments are shown in Figure~\ref{fig:L5 workload}. 
The x-axis shows the time, and the y-axis shows the percentage of utilized resources. The two demand peaks are clearly visible. We further observe that the vehicle resources are utilized more. The observed delay is the result of the complex dynamics of orders, cooks, and vehicles. Thus, it is not trivial to determine a clear bottleneck in the resources. 
Given the near 100\% utilization during the second peak, for this instance, the vehicles are more likely the bottleneck causing delay. 
However, we point out that restaurants have different expected preparation times and we note that the cooks of the \enquote{slowest} restaurant achieve 
an average utilization of approximately 85\%, thus possibly contributing to the delay as well.

We observe that the utilization during the first, smaller peak differs only slightly between the two policies. The vehicle utilization is slightly smaller for \textit{AI}, likely due to increased consolidation. At the beginning of the second peak, we start observing significant differences between the two policies. For \textit{AI}, the utilization of both resources is higher compared to FIFO during the peak, but drops significantly faster at its end. This indicates that with anticipatory and integrated decision making, order throughput can be be kept high resulting in a comparably small backlog. With FIFO, especially the cooks are underutilized leading to longer working times and more delay.

\begin{figure}[!t]
    \centering
    \caption{\color{safeblue} Average workload utilization of vehicles and cooks by AI and FIFO over time, Large instances}
    \includegraphics[width=0.9\textwidth]{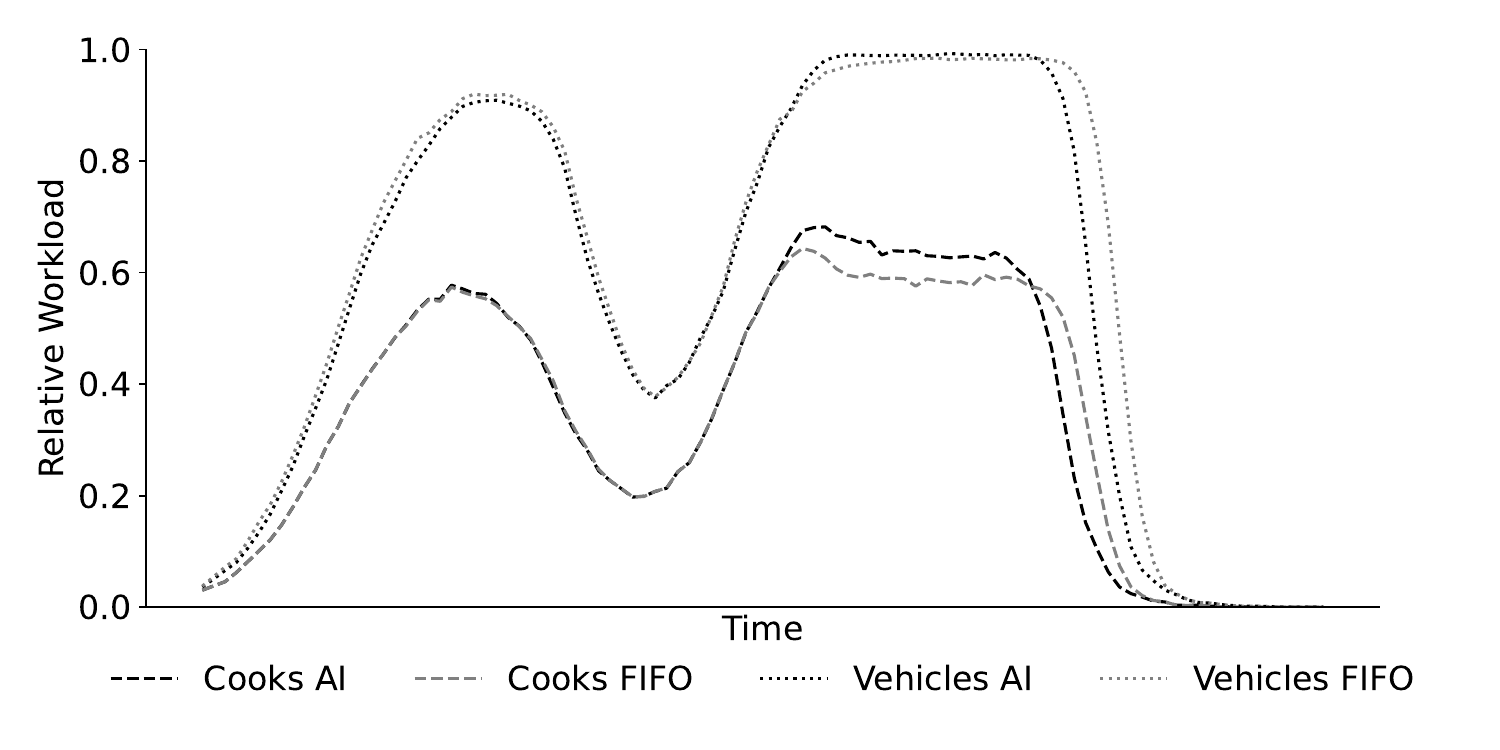}
    \label{fig:L5 workload}
\end{figure}


\color{black}

\section{Additional Results}\label{app:A_results}
In this section, we present additional results. First, we examine the delay of close and far orders. Table~\ref{tab: distance orders res} summarizes the results. \textit{AI} improves the average delay over both far and close orders, compared to \textit{Integrated} and \textit{FIFO} methods. However, both the average delay and the average delay of the late orders is not fair between close and far orders. 
This can be expected as the net travel time has a significant impact on the overall delivery time. A future study might consider incorporating fairness in the model.

\begin{table}[!t]
\caption{Delay (minutes) by distance from the depot: "Close orders" and "far orders" defined as less and more than 10 minutes drive from the depot, respectively.}\label{tab: distance orders res}
\centering
\small
\begin{tabular}{lrrrr}
\hline
\multirow{2}{*}{Solution method} & \multicolumn{4}{c}{Average delay (in min.)} \\
 & \multicolumn{1}{l}{Close orders} & \multicolumn{1}{l}{Far orders} & \multicolumn{1}{l}{Late close orders} & \multicolumn{1}{l}{Late far orders} \\ \hline
\textit{FIFO}                             & 14.5     & 16.9     & 32.4    & 32.6    \\ 
\textit{Integrated}                           & {11.9}    & {14.5}     & {28.0}    & {29.0}    \\
\textit{AI}  \                  &   9.8    &   12.6   &   23.4   &    25.0  \\\hline
\end{tabular}
 \vspace{-3mm}
\end{table}

Another direction we have studied is the average delay of orders of different food types. We note that the average number of orders from each food type is similar and thus the only difference between them is the preparation distribution, where food types 1 to 5 are ordered from the longest average preparation times to the shortest. Table \ref{tab:results foodtypes} summarizes the average delays of the different food types by the different methods and the relative improvement of \text{AI} over the benchmark solution methods. 
Here too, we observe that more time-consuming orders obtain larger average delay, i.e., the delay grows with the average preparation times.
We also observe that our policy improves the average delay over all food types.

\vspace{-\abovedisplayskip}
\begin{table}[!t]
\caption{Delay by food type (minutes)}
\label{tab:results foodtypes}
\centering
\small
\begin{tabular}{lrrrrr}
\hline
Food Type &
  \multicolumn{1}{l}{\textit{FIFO}} &
  \multicolumn{1}{l}{\textit{Integrated}} &
  \multicolumn{1}{l}{\textit{AI}} &
  \multicolumn{1}{l}{\begin{tabular}[l]{@{}l@{}}Imp. over \\ \textit{FIFO} \%\end{tabular}} &
  \multicolumn{1}{l}{\begin{tabular}[l]{@{}l@{}}Imp. over \\ \textit{Integrated} \%\end{tabular}} \\ \hline
  1 & 16.8 & 14.7 & 13.0 & 29.2 & 13.1\\ 
  2 & 15.3 & 12.8 & 10.9 & 40.4 & 17.4\\ 
  3 & 14.9 & 12.2 & 10.0 & 49.0 & 22.0\\ 
  4 & 14.8 & 12.1 & 9.9  & 49.5 & 22.2\\ 
  5 & 14.6 & 11.9 & 9.8  & 49.0 & 21.4\\ 
 \hline 
\end{tabular}
\end{table}
\FloatBarrier

\end{appendices}

\end{document}